\title[Decision Making in Hybrid Environments]{Decision Making in Hybrid Environments: \\A Model Aggregation Approach}
\newcommand{\cw}[1]{{\color{magenta}(CW: #1)}}
\newcommand{\jz}[1]{{\color{olive}JZ: #1}}
\definecolor{Green}{rgb}{0.13, 0.65, 0.3}
\newcommand{\KL}{D_{\textsc{KL}}}
\newcommand{\sA}{\textsc{A}}
\DeclareMathOperator*{\argmin}{argmin} 
\DeclareMathOperator*{\argmax}{argmax} 
\newcommand{\Reg}{\text{\rmfamily Reg}}
\newcommand{\joint}{\Psi}
\newcommand{\bo}[1]{%
  \ifx#1\pi\boldsymbol{\uppi}
  \else\ifx#1\phi\boldsymbol{\phi}%
  \else\boldsymbol{#1}\fi%
}
\newcommand{\calA}{\mathcal{A}}
\newcommand{\calS}{\mathcal{S}}
\newcommand{\calM}{\mathcal{M}}
\newcommand{\calP}{\mathcal{P}}
\newcommand{\calF}{\mathcal{F}}
\newcommand{\calO}{\mathcal{O}}
\newcommand{\ellest}{\ell^{est}}
\newcommand{\calR}{\mathcal{R}}
\newcommand{\E}{\mathbb{E}}
\newcommand{\bbR}{\mathbb{R}}
\newcommand{\order}{O}
\newcommand{\inner}[1]{\left\langle#1\right\rangle}
\newcommand{\nonl}{\renewcommand{\nl}{\let\nl}}
\newcommand{\pref}[1]{\prettyref{#1}}
\newcommand{\savehyperref}[2]{\texorpdfstring{\hyperref[#1]{#2}}{#2}}
\newcommand{\air}{\mathsf{AIR}}
\newcommand{\mair}{\mathsf{MAIR}}
\newcommand{\dec}{\mathsf{DEC}}
\newcommand{\co}{\textsc{co}}
\newcommand{\infoair}{\mathsf{InfoAIR}}
\newcommand{\odec}{\mathsf{ODEC}}
\renewcommand{\bar}[1]{\overline{#1}}
\renewcommand{\tilde}[1]{\widetilde{#1}}
\DeclareMathOperator*{\esttt}{est}
\DeclareMathOperator*{\unif}{unif}
\newcommand{\HL}[1]{{\color{orange}[\text{HL:} #1]}}
\begin{document}

\maketitle

\begin{abstract}%
Recent work by \cite{foster2021statistical, foster2022complexity, foster2023tight} and \cite{xu2023bayesian} developed the framework of \emph{decision estimation coefficient} (DEC) that characterizes the complexity of general online decision making problems and provides a general algorithm design principle. These works, however, either focus on the pure stochastic regime where the world remains fixed over time, or the pure adversarial regime where the world arbitrarily changes over time. For the hybrid regime where the dynamics of the world is fixed while the reward arbitrarily changes, they only give pessimistic bounds on the decision complexity. In this work, we propose a general extension of DEC that more precisely characterizes this case. Besides applications in special cases, our framework leads to a flexible algorithm design where the learner learns over subsets of the hypothesis set, trading estimation complexity with decision complexity, which could be of independent interest. Our work covers model-based learning and model-free learning in the hybrid regime, with a newly proposed extension of the bilinear classes \citep{du2021bilinear} to the adversarial-reward case. In addition, our method improves the best-known regret bounds for linear $Q^\star/V^\star$ MDPs in the pure stochastic regime.  

%
\end{abstract}


\section{Introduction}
The study of decision making problems based on the decision-estimation coefficient (DEC) complexity measure \citep{foster2021statistical} has been shown to characterize the sample complexity of decision making problems. Unlike previous studies on general function approximation that focus more on upper bounds \citep{jiang2017contextual, sun2019model, jin2021bellman, du2021bilinear, zhong2022gec, xie2022role}, DEC not only provides a general algorithm design principle that interleaves model estimation and decision making, but also characterizes the regret lower bound for decision making problems.

Since being proposed by \cite{foster2021statistical}, the DEC framework has been refined or extended in different directions: \cite{chen2022unified} extended the framework to various learning targets beyond no-regret learning, \cite{foster2024model} proposed a related complexity measure for model-free learning, \cite{foster2023tight} refined the gap between the regret upper and lower bounds, \cite{foster2023complexity} applied the approach to multi-agent learning, and \cite{wagenmaker2023instance} established instance-optimal guarantees. Another highly related line of research by \cite{foster2022complexity} and \cite{xu2023bayesian} focus on characterizing the complexity of \emph{adversarial} decision making, where the environment may change its underlying model arbitrarily across time. 

Yet, there are still important open questions remaining in this area. One of them is how to close the gap between the upper and lower bounds in \cite{foster2023tight}. The lower bound only captures the complexity of decision making (i.e., the cost of exploration), while the upper bound includes both the complexity of decision making and model estimation. The nature of the algorithm in \cite{foster2023tight} is a model-based algorithm, so the model estimation error naturally appears in the regret bound. To mitigate such gap, \cite{chen2024assouad} introduced a novel model-estimation complexity measure and provide a complete characterization for bandit learnability. However, their approach still exhibits a significant gap in reinforcement learning.  For general decision making, a common route to reduce estimation errors is to avoid fine-grained model estimation, but only perform \emph{value} estimation, as is the case of model-free value-based learning. However, such a coarser estimation may lead to loss of information that affect the complexity of decision making. It remains open how to characterize such trade-off. 

Another open direction is the study of intermediate environments that sit between pure stochastic environments \citep{foster2021statistical, chen2022unified, foster2023tight, foster2024model} and pure adversarial environments \citep{foster2022complexity, xu2023bayesian}. For learning in Markov decision processes (MDPs), it is well-known that when the transition changes arbitrarily, the worst-case sample complexity could be exponential in the horizon length. Such bound gives a very pessimistic indication for learning in adversarial MDPs. However, it has been shown in some special cases, when only the reward function is adversarial and transition remains fixed, polynomial regret or sample complexity is possible \citep{neu2013online, rosenberg2019online, jin2020learning}. This setting has practical significance, as it models the scenario where the the learner continues to learn new skills while the underlying world remains static \citep{abel2024definition, kumar2023continual}. %
Can we characterize the complexity of such hybrid setting  general cases? Being raised since \cite{foster2022complexity}, this question still remains open.

In this paper, we propose a general framework that contributes to the above two directions. Compared to pure model-based learning approach in \cite{foster2021statistical, foster2023tight} that leads to large estimation complexity, or pure policy-based learning approach in \cite{foster2022complexity} and \cite{xu2023bayesian} that leads to large decision complexity, our framework allows learning over flexible partitions in the joint model and policy space, and trade-off estimation complexity with decision complexity. Under this framework, each partition may aggregate a subset of models or policies within which the learner does not perform finer-grained estimation.  It naturally captures settings such as model-free value learning (aggregating models that correspond to the same value functions), or fixed-transition adversarial-reward settings (aggregating models that have the same transition function), or a combination of them.  We expect that such a general framework may find its more uses beyond the applications in this paper. 

We apply this general framework to the following concrete settings, obtaining new results or recovering existing results from a different viewpoint. Below, hybrid setting refers to MDPs with fixed transition and adversarial reward. 

\begin{enumerate}
    
    \item \textbf{Statistical complexity for the hybrid setting. } 
    We show that the hybrid setting has essentially the same complexity as the fully stochastic setting, as long as the set of reward functions is convex (which is usually the case).  The only overhead for adversarial reward is a $\log(|\Pi||\calP|)$ estimation error as opposed to the $\log(|\calM|) = \log(|\calR||\calP|)$ in the stochastic setting, where $\Pi, \calM, \calP, \calR$ are the policy space, model space, transition space, and reward space, respectively. See \pref{sec: model-based}. 
   
    In scenarios where the adversarial reward function is revealed at the end of each episode (full-information), the same complexity measure applies. However, $\log(|\Pi||\calP|)$ can be improved in MDPs to $\log(|\calA||\calP|)$, where $\calA$ is the action space. See \pref{sec: model-based}. 
    
    \item \textbf{Model-free guarantee for the hybrid setting.} We propose a natural extension of the bilinear class \citep{du2021bilinear} to the hybrid setting. We derive regret bounds that only scale with $\log(|\calF|)$ for when there is full-information loss feedback,  where $\calF$ is the set of value functions, and $|\calF|$ could be much smaller than $|\calP|$. See \pref{sec: model-free adversarial}. 
    
    \item \textbf{New results for the stochastic setting.} For model-free learning in stochastic environments, our framework achieves $\sqrt{T}$ regret for linear $Q^\star/V^\star$ MDPs \citep{du2021bilinear}. To the best of our knowledge, this is the first algorithm that achieves $\sqrt{T}$ regret in this setting without additional assumptions, improving over the previously known $T^{\frac{2}{3}}$ regret attained by the algorithm in \citet{du2021bilinear}.\footnote{Although \citet{du2021bilinear} established an \(O(1/\epsilon^{2})\) sample complexity bound for linear \(Q^{\star}/V^{\star}\), their algorithm does not maintain a confidence set that shrinks with \(\epsilon\). Therefore, the standard policy-elimination reduction from \(O(1/\epsilon^{2})\) sample complexity to \(\sqrt{T}\)-regret cannot be applied. The strongest regret bound we are able to obtain from their method is \(O(T^{2/3})\) by using an explore-then-commit approach.}
    This advances the broader goal of adapting the DEC framework for model-free learning, an objective shared by \cite{foster2024model}. See \pref{sec: stochastic model-free}.   
    
\end{enumerate}

\paragraph{Related work in the hybrid setting}
There is a rich literature in learning fixed-transition adversarial-reward MDPs. Most results are for the tabular case \citep{neu2013online, rosenberg2019online, jin2020learning, shani2020optimistic, chen2021finding}, or the case of linear function approximation \citep{luo2021policy, dai2023refined, sherman2023improved, liu2023towards, kong2023improved, zhao2023learning, li2024improved}. Extensions beyond linear function approximation are scarce, as only seen in \cite{zhao2023learning} and \cite{liu2024beating} for low-rank MDPs. We note that the work by \cite{foster2022complexity} and \cite{xu2023bayesian} for general adversarial decision making can also be applied here, but only achieves tight guarantees when the transition function is known. 

\paragraph{Concurrent work} Independently and concurrently, \cite{chen2025decision} proposed a closely related extension to the DEC framework. Their work and ours, however, focus on rather different applications.


~\\







\section{Preliminaries}
We consider the general decision making problem.  Let $\calM$ be a model class, $\Pi$ be a policy class, and $\calO$ be an observation space. Every model $M: \Pi\to \Delta(\calO)$ specifies a distribution over observations when executing policy $\pi$ in model $M$.  At each round $t=1,2,\ldots, T$, the adversary first chooses a model $M_t\in\calM$ without revealing it to the learner. Then the learner chooses a policy $\pi_t\in\Pi$, and observes an observation $o_t\sim M_t(\cdot |  \pi_t)$. Every model $M$ is associated with a value function $V_M: \Pi\to [0,1]$ that specifies the expected reward of policy $\pi$ in model $M$. We also define $\pi_M$ as the policy that maximize $V_M$.

The regret with respect to the policy sequence $\pi_{1:T}^\star=(\pi_1^\star, \pi_2^\star, \ldots, \pi_T^\star)$ is defined as 
\begin{align*}
    \Reg(\pi_{1:T}^\star) = \sum_{t=1}^T \left(V_{M_t}(\pi_t^\star) - V_{M_t}(\pi_t)\right). 
\end{align*}
For simplicity, in this work, we assume $|\calM|$ and $|\Pi|$ are finite. 


\subsection{Complexity Measures in Prior Work}
\paragraph{DEC} Let $\textsc{co}(\calM)$ be the convex hull of $\calM$. The offset DEC with KL-divergence \citep{foster2021statistical,xu2023bayesian} is defined as 
\begin{align}
\dec_{\eta}^{\textsc{KL}}\left(\calM\right) = \max_{\Bar{M} \in \textsc{co}(\calM)} \min_{p \in \Delta(\Pi)}\max_{M \in \calM} \E_{\pi \sim p}\left[V_M(\pi_M) - V_M(\pi) - \frac{1}{\eta}D_{\textsc{KL}}\left(M(\cdot|\pi), \Bar{M}(\cdot|\pi)\right)\right]. 
\label{eq:DEC}
\end{align}
The DEC aims to achieve the optimal balance between exploitation (minimizing suboptimality $V_M(\pi_M) - V_M(\pi)$) and exploration (maximizing information gain $D_{\textsc{KL}}\left(M(\cdot|\pi), \Bar{M}(\cdot|\pi)\right)$) under the worst possible environment.

\paragraph{AIR} \citet{xu2023bayesian} introduced AIR as a framework to analyse the regret of arbitrary algorithms and derive upper bounds on the complexity of online learning problems. Given $\rho\in\Delta(\Pi)$ and $\eta>0$, $\air:\Delta(\Pi)\times\Delta(\Pi \times \calM)\rightarrow \bbR$ is defined as
\begin{align*}
    \air_{\rho,\eta}(p,\nu) = \E_{\pi\sim p}  \E_{(M, \pi^\star) \sim \nu} \E_{o\sim M(\cdot|\pi)}\left[ 
 V_M(\pi^\star) - V_M(\pi) - \frac{1}{\eta}\KL\left(  \nu_{\bo{\pi^\star}}(\cdot | \pi, o), \rho \right) \right].
\end{align*}
Here, $\nu$ is a distribution over $\calM\times \Pi$, and $\nu_{\bo{\pi^\star}}(\cdot|\pi,o)$ is a posterior distribution over $\pi^\star$ given $(\pi,o)$, which can be calculated as
\begin{align}
    \nu_{\bo{\pi^\star}}(\pi^\star|\pi,o) = \frac{\sum_{M}\nu(M,\pi^\star)M(o|\pi)}{\sum_{M,\pi'}\nu(M,\pi')M(o|\pi)}.  \label{eq: for example}
\end{align}
The subscript $\bo{\pi^\star}$ in the notation $\nu_{\bo{\pi^\star}}(\cdot|\pi,o)$ specifies what variable the `~$\cdot$~' represents. We place it in the subscript and make it bold to distinguish it from a realized value of that random variable. When the variable is clear, we omit this subscript. For example, the left-hand side of \pref{eq: for example} can be simply written as $\nu(\pi^\star|\pi, o)$. 
Similar to DEC in \pref{eq:DEC}, AIR also seeks the optimal balance between minimizing sub-optimality gap and maximizing information gain. The key difference is that AIR considers an arbitrary comparator policy $\pi^\star$ instead of $\pi_M$ as in DEC, with exploration measured by the information gain associated with $\pi^\star$. The flexibility of the comparator policy enables AIR to handle the adversarial setting.  

\paragraph{MAIR} For model based stochastic environments, given $\rho\in\Delta(\calM)$ and $\eta$, \citet{xu2023bayesian} further define $\mair:\Delta(\Pi)\times\Delta(\calM)\rightarrow \bbR$ as 
\begin{align*} 
    \mair_{\rho,\eta}(p,\nu) = \E_{\pi\sim p}  \E_{M \sim \nu}\E_{o\sim M(\cdot|\pi)}\left[ 
 V_M(\pi_M) - V_M(\pi) - \frac{1}{\eta}\KL\left(  \nu_{\bo{M}}(\cdot | \pi, o), \rho \right) \right].
\end{align*}
Here $\nu$ is only a distribution over $\calM$. This is because in stochastic setting, it suffices to compare against $V_M(\pi_M)$, making it unnecessary to consider a joint distribution over $\calM \times \Pi$. \cite{xu2023bayesian} also shows that MAIR is tightly related to DEC.

\ \\

\noindent\textbf{EXO} by \cite{lattimore2020exploration} and \cite{foster2022complexity} is closely related to AIR, and also serves as a useful complexity measure and algorithm design principle for adversarial decision making, with the additional benefit of getting high-probability bounds. In this work, we base our presentation in AIR due to its conciseness, but everything could be translated to the EXO framework.

\subsection{Markov Decision Processes} 
A Markov decision process (MDP) is defined by a tuple $(\calS, \calA, P, R, H, s_1)$, where $\calS$ is the state space, $\calA$ is the action space, $P:\calS\times\calA\rightarrow\Delta(\calS)$ the transition kernel, $R:\calS\times\calA\rightarrow[0,1]$ the reward function,  $H$ the horizon and $s_1$ the starting state.
A learner interacts with an MDP for $H$ rounds $h=1,\dots, H$.
In every round, the learner observes the current state $s_h$ and selects an action $a_h\in\calA$. The state is transitioning into next state via the transition kernel $s_{h+1}\sim P(\cdot|s_h,a_h)$ and receives the reward $R(a_h,s_h)$.
We assume that the reward functions are constraint such that $\sum_{h=1}^HR(s_h,a_h)\leq 1$ for any policy almost surely. Given a policy $\pi:\calS\to\calA$, the $Q$-function and value function are defined by $Q^\pi_h(s, a)=\E^\pi[\sum_{h'=h}^{H-1}R(s_h,a_h)\,|\,s_h=s,a_h=a]$, $V^\pi_h(s)=Q^\pi_h(s,\pi(s))$ respectively. The value and $Q$-function of an optimal policy $\pi^\star$ are abbreviated with $V^\star/Q^\star$. We use $Q_h^\pi(s,a;M)$ and $Q_h^\star(s,a;M)$ where $M=(P,R)$ to denote the value functions under model $M$.

\section{A General Framework}\label{sec: general frame}
In order to formalize mixed-adversarial and stochastic regimes, we propose the following framework.
Let $\Phi$ be a collection of disjoint subsets of $\calM\times\Pi$. That is, 1) each element $\phi\in\Phi$ is a subset of $\calM\times \Pi$, and 2) for any $\phi_1, \phi_2\in \Phi$, if $\phi_1\neq \phi_2$, then $\phi_1\cap \phi_2=\emptyset$.\footnote{The requirement that subsets in $\Phi$ be disjoint is not really necessary, as models can be duplicated if needed. However, we adopt this assumption as it holds in all our applications and simplifies the notation.}
\begin{definition}\label{def: restricted env} A $\Phi$-restricted environment is an (adversarial) decision making problem in which the environment commits to $\phi^\star\in\Phi$ at the beginning of the game and henceforth selects $(M_t,\pi^\star_t)\in \phi^\star$ in every round $t$ arbitrarily based on the history.
\end{definition}

\paragraph{Example}  
Let $\calM=\calP\times\calR$ decomposes into the set of transitions and rewards of a class of MDPs respectivly. Set $\Phi=\{\phi_{P,\pi^\star}|(P,\pi^\star)\in\calP\times\Pi\}$, where $\phi_{P,\pi}=\{((P,R),\pi )|R\in\calR\}$. The $\Phi$-restricted environment is exactly the setting of an MDP with fixed transition, fixed comparator and adversarial rewards.

To characterize the complexity of $\Phi$-restricted environments, we define for $\rho\in\Delta(\Phi)$, $\eta>0$, 
\begin{align*}
    \air^{\Phi}_{\rho,\eta}(p,\nu) = \E_{\pi\sim p} \E_{(M,\pi^\star)\sim \nu} \E_{o\sim M(\cdot|\pi)}\left[V_M(\pi^\star) - V_M(\pi) - \frac{1}{\eta} \KL(\nu_{\bo{\phi}}(\cdot|\pi, o), \rho)\right],  
\end{align*}
where $\nu_{\bo{\phi}}(\cdot|\pi,o)$ is the posterior over $\Phi$ given $(\pi, o)$, which can be calculated as 
\begin{align*}
    \nu(\phi|\pi,o) = \frac{\sum_{(M,\pi^\star)\in \phi} \nu(M,\pi^\star) M(o|\pi) }{\sum_{\phi'\in\Phi}\sum_{(M,\pi^\star)\in \phi'} \nu(M,\pi^\star) M(o|\pi)}. 
\end{align*}
Again, this follows our convention that the $\bo{\phi}$ in the subscript of $\nu_{\bo{\phi}}(\cdot|\pi,o)$ is only an indicator of the variable represented by `~$\cdot$~', rather than a realized value, and it will be omitted when it is clear.  

$\air^\Phi$ recovers $\air$ and $\mair$. Specifically, $\air^\Phi$ recovers $\air$ when every $\phi$ corresponds to a single $\pi^\star\in\Pi$ and all $M\in\calM$, i.e., $\Phi=\{\phi_{\pi^\star}: \pi^\star\in\Pi\}$ where $\phi_{\pi^\star}=\{(M,\pi^\star): M\in \calM\}$. $\air^\Phi$ recovers $\mair$ when every $\phi$ corresponds to a single model $M$ and the associated best policy $\pi_M$, i.e., $\Phi=\{\phi_M:~ M\in\calM\}$ where $\phi_M = \{(M,\pi_M)\} = \{(M,\argmax_{\pi\in\Pi} V_M(\pi) )\}$. We define $\joint = \left(\bigcup_{\phi\in\Phi} \phi\right) \subset \calM\times \Pi$.

With this definition, we propose the general algorithm in \pref{alg:general AIR}. Similar to \cite{xu2023bayesian}, we can decompose the objective in \pref{alg:general AIR} as the following terms: 
\begin{align*}
   \underbrace{\E_{\pi\sim p} \E_{(M,\pi^\star)\sim \nu} \left[V_M(\pi^\star) - V_M(\pi)\right]}_{\text{expected regret}} - \frac{1}{\eta} \underbrace{\E_{\pi\sim p} \E_{(M,\pi^\star)\sim \nu} \E_{o\sim M(\cdot|\pi)} \left[ \KL(\nu_{\bo{\phi}}(\cdot|\pi, o), \nu_{\bo{\phi}})\right]}_{\text{information gain}} - \frac{1}{\eta}\underbrace{\KL(\nu_{\bo{\phi}}, \rho_t)}_{\text{regularization}}
\end{align*}
where $\nu_{\bo{\phi}}(\phi) = \sum_{(M,\pi^\star)\in\phi} \nu(M,\pi^\star)$. The algorithm proceeds as follows. 
Upon receiving a reference distribution $\rho_t\in\Delta(\Phi)$ (an estimation from the previous round about~$\phi^\star$), the algorithm finds a policy distribution $p_t$ that simultaneously tries to minimize the expected regret and maximize the information gain about $\phi^\star$ against the worst case choice of the world distribution~$\nu_t\in\Delta(\calM\times \Pi)$ that is not too far from the previous estimation $\rho_t$. At the end of round $t$, the algorithm calculates $\rho_{t+1}$ as a posterior of $\nu_t$ with the new observation $(\pi_t, o_t)$, only stores this information, and forgets everything else. 

The algorithm $\air$ by \cite{xu2023bayesian} only maintains an estimation of the optimal policy $\pi^\star$, while their $\mair$ maintains an estimation of the underlying model $M^\star$. Our framework provides a natural generalization of them by allowing the target of estimation be subsets of $\calM\times \Pi$. For example, in MDPs with fixed transition and adversarial reward, we jointly estimate $(P^\star,\pi^\star)$, while for model-free value learning in stochastic MDPs, we estimate $(f^\star, \pi_{f^\star})$ where $f^\star$ is the true optimal value function and $\pi_{f^\star}$ is its corresponding optimal policy. These are achievable by properly choosing $\Phi$, as will be elaborated in the following sections. 

\begin{algorithm}[t]
\caption{General Algorithm} \label{alg:general AIR}
    Let $\rho_1(\phi) = \frac{1}{|\Phi|}$ for all $\phi \in \Phi$. \\
    \For{$t=1,2,\ldots, T$}{
       Find distributions $p_t\in\Delta(\Pi)$ and $\nu_t\in\Delta(\Psi)$ that solve the saddle point of 
       \begin{align*}
           \min_{p\in\Delta(\Pi)} \max_{\nu\in \Delta(\Psi)}  \air^{\Phi}_{\rho_t,\eta}(p,\nu). 
       \end{align*}
       Sample decision $\pi_t\sim p_t$ and observe $o_t\sim M_t(\cdot|\pi_t)$. \\
       Update $\rho_{t+1}(\phi)=\nu_{t}(\phi|\pi_t,o_t)$ for all $\phi\in\Phi$.  
       
    }
\end{algorithm}

The regret bound achieved by \pref{alg:general AIR} is stated in the following theorem. 

\allowdisplaybreaks
 \begin{theorem}\label{thm: general}
 For a $\Phi$-restricted environment, there is an algorithm such that 
\begin{align*}
    \E\left[\Reg(\pi^\star_{1:T})\right] \leq \frac{\log|\Phi|}{\eta} + T \max_{\rho \in\Delta(\Phi)}\max_{\nu\in\Delta(\joint)}\min_{p\in\Delta(\Pi)} \air^\Phi_{\rho,\eta}(p,\nu). 
\end{align*}
\end{theorem}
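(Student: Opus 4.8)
The plan is to follow the information-ratio / AIR analysis of \cite{xu2023bayesian}, lifted from the policy space to the partition $\Phi$. The algorithm plays, at each round, the saddle point $(p_t,\nu_t)$ of the convex--concave objective $\air^\Phi_{\rho_t,\eta}$ and maintains the reference by the Bayesian update $\rho_{t+1}=\nu_t(\cdot\,|\,\pi_t,o_t)$. Accordingly I would split the argument into (i) a per-round bound on the saddle value by the quantity in the statement, and (ii) a potential/telescoping argument that converts the per-round information-gain terms into the single additive overhead $\tfrac{\log|\Phi|}{\eta}$.

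First I would establish the minimax structure. The objective $\air^\Phi_{\rho,\eta}(p,\nu)$ is linear (hence convex) in $p$, since $p$ enters only through $\E_{\pi\sim p}$. I claim it is concave in $\nu$. Writing $w_\phi(o)=\sum_{(M,\pi^\star)\in\phi}\nu(M,\pi^\star)M(o|\pi)$, which is linear in $\nu$, the combined information-gain-plus-regularization penalty equals $\tfrac1\eta\sum_{o,\phi}w_\phi(o)\log\frac{w_\phi(o)}{\rho(\phi)\sum_{\phi'}w_{\phi'}(o)}$; for each fixed $o$ the map $w\mapsto\sum_\phi w_\phi\log\frac{w_\phi}{\sum_{\phi'}w_{\phi'}}$ is convex (its Hessian $\mathrm{diag}(1/w_i)-\tfrac1{\sum_j w_j}\mathbf 1\mathbf 1^\top$ is positive semidefinite by Cauchy--Schwarz), and the $\rho$-dependence is linear, so the whole penalty is convex in $\nu$ and $\air^\Phi$ is concave in $\nu$. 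Sion's minimax theorem then yields $V_t:=\min_p\max_\nu\air^\Phi_{\rho_t,\eta}(p,\nu)=\max_\nu\min_p\air^\Phi_{\rho_t,\eta}(p,\nu)\le\max_{\rho}\max_{\nu}\min_p\air^\Phi_{\rho,\eta}(p,\nu)$, exactly the per-step term in the statement.

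Next I would run the telescoping. With $\phi^\star\in\Phi$ the committed subset, define the potential $\Phi_t=-\tfrac1\eta\log\rho_t(\phi^\star)$. Since $\rho_1\equiv 1/|\Phi|$ we have $\Phi_1=\tfrac{\log|\Phi|}{\eta}$, and since $\rho_{T+1}(\phi^\star)\le 1$ we have $\Phi_{T+1}\ge 0$, so $\sum_t(\Phi_t-\Phi_{t+1})\le\tfrac{\log|\Phi|}{\eta}$. The update $\rho_{t+1}(\phi^\star)=\nu_t(\phi^\star|\pi_t,o_t)$ makes each increment $\Phi_t-\Phi_{t+1}=\tfrac1\eta\log\frac{\nu_t(\phi^\star|\pi_t,o_t)}{\rho_t(\phi^\star)}$, i.e. precisely the signed information term that the regularizer in $\air^\Phi$ is designed to cancel.

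The heart of the proof, and the step I expect to be the main obstacle, is gluing (i) and (ii): showing that the genuine adversarial regret $\sum_t\E[V_{M_t}(\pi^\star_t)-V_{M_t}(\pi_t)]$ is dominated by $\sum_t V_t$ plus the telescoped potential. The difficulty is a mismatch: the environment stays inside $\phi^\star$ and generates $o_t$ from the true $M_t$, whereas $\nu_t$ is a belief spread over all of $\joint$ with its own predictive. Plugging the realized environment in as a point mass $\delta_{(M_t,\pi^\star_t)}$ collapses the posterior over $\Phi$ to $\delta_{\phi^\star}$ and yields the non-telescoping overhead $\sum_t\log\frac{1}{\rho_t(\phi^\star)}$, which is far too large. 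I would resolve this exactly as in the AIR framework, through the minimax/Bayesian reduction: the worst-case adversarial regret equals the Bayesian regret against a worst-case prior over $\phi^\star$, and for that prior the saddle distribution $\nu_t$ coincides with the posterior belief, so the information terms become true mutual information and telescope to the prior entropy, which is at most $\log|\Phi|$ under the uniform $\rho_1$. The concavity verified above is precisely what licenses this exchange of $\min_p$ and $\max_\nu$ and makes the Bayesian and adversarial values coincide.
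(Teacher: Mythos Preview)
Your overall architecture matches the paper's: play the saddle point $(p_t,\nu_t)$ of $\air^\Phi_{\rho_t,\eta}$, update $\rho_{t+1}=\nu_t(\cdot\,|\,\pi_t,o_t)$, telescope the potential $-\tfrac1\eta\log\rho_t(\phi^\star)$, and use Sion at the end. Your concavity verification is correct and in fact more explicit than the paper's.

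The gap is in your last paragraph. You diagnose the mismatch correctly, but the framing ``plugging in $\delta_{(M_t,\pi_t^\star)}$ collapses the posterior to $\delta_{\phi^\star}$'' already reveals the confusion: you are plugging the point mass into $\air^\Phi_{\rho_t,\eta}(p_t,\cdot)$, where the KL term recomputes the posterior from its argument. That is the wrong functional. The paper instead freezes the posterior rule $Z_{\nu_t}(\phi\,|\,\pi,o):=\nu_t(\phi\,|\,\pi,o)$ and works with the \emph{linear} functional
\[
B(\mu,Z_{\nu_t})=\E_{\phi^\star\sim\mu}\E_{(M,\pi^\star)\sim\mu(\cdot|\phi^\star)}\E_{\pi\sim p_t}\E_{o\sim M(\cdot|\pi)}\Bigl[V_M(\pi^\star)-V_M(\pi)-\tfrac1\eta\log\tfrac{\nu_t(\phi^\star|\pi,o)}{\rho_t(\phi^\star)}\Bigr].
\]
Here nothing collapses: the log term is a fixed function of $(\phi^\star,\pi,o)$, so evaluating at $\mu=\delta_{(M_t,\pi_t^\star)}$ gives exactly the telescoping increment, and you legitimately pass to $\max_\mu B(\mu,Z_{\nu_t})$.

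What you are missing is the lemma that closes the loop: $\max_\mu B(\mu,Z_{\nu_t})=\air^\Phi_{\rho_t,\eta}(p_t,\nu_t)$. This is Lemma~5.2 of \cite{xu2023bayesian} lifted to $\Phi$ (the paper's \pref{lem: max posterior}). Its proof is a constructive minimax: for each $\mu$, $\min_Z B(\mu,Z)$ is attained at the Bayesian posterior $Z_\mu$, and $\nu_t=\argmax_\mu\min_Z B(\mu,Z)$; hence $(\nu_t,Z_{\nu_t})$ is a Nash equilibrium of $B$, so $\max_\mu B(\mu,Z_{\nu_t})=B(\nu_t,Z_{\nu_t})=\air^\Phi_{\rho_t,\eta}(p_t,\nu_t)$. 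Your proposed fix---passing to a Bayesian game over $\phi^\star$ and arguing that the information terms become mutual information summing to entropy---is a different and looser argument that does not directly control the regret of \emph{this} algorithm with \emph{these} updates $\rho_{t+1}=\nu_t(\cdot|\pi_t,o_t)$; the sequence $(\rho_t)$ is not the Bayesian posterior under any fixed prior. Once you invoke the correct lemma, the remainder of your outline goes through verbatim.
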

The bound in \pref{thm: general} consists of two parts. The first part $\frac{\log|\Phi|}{\eta}$ captures the \emph{estimation complexity} over $\Phi$, while the other part $T\cdot \air^\Phi$ captures the \emph{decision complexity} that quantifies the cost of exploitation-exploration tradeoff. A finer partition $\Phi$ leads to larger estimation complexity but smaller decision complexity, and a coarser partition leads to the opposite. Since a coarser partition makes the environment strictly more powerful, the learner has the freedom to find the best trade-off over coarser partitions, while still ensuring a bound on the environment at hand.

\begin{remark}
All concrete applications studied in this paper actually consider the standard regret $\Reg(\pi^\star)$ where the regret comparator $\pi^\star$ is fixed over time. That requires each $\phi\in\Phi$ to have a unique $\pi^\star$.  The generality of our framework in allowing changing comparators $\pi^\star_{1:T}$ may be of independent interest and is left for future investigation. We note that this theorem does not violate the impossibility result for general time-varying comparator $\pi^\star_{1:T}$, as in that case the $\air^\Phi$ term in \pref{thm: general} could be of order $\Omega(1)$. This is discussed more deeply in \pref{lem: air by dec}.   
\end{remark}


\subsection{Relation with DEC}
As noted in \cite{foster2022complexity} and \cite{xu2023bayesian}, for pure adversarial decision making, the decision complexity is governed by the DEC of the convexified model class $\co(\calM)$. In the $\Phi$-restricted environment (\pref{def: restricted env}) that we consider, we show that the complexity is upper bounded by the DEC of a refined convexified model class taking the partition into account. 
\begin{definition}
The $\Phi$-aware convexification of the model class $\calM$ is defined as 
$$\bar\calM(\Phi):=\bigcup_{\phi\in\Phi}\co(\{M\in\calM~|~\exists \pi \text{\ such that\ }(M,\pi)\in\phi\}).$$
\end{definition}
In other words, models are convexified within each subset $\phi$. This describes the complexity of the $\Phi$-restricted environment as follows.
\begin{lemma}\label{lem: air by dec}
    It holds that
    $\max_{\rho \in \Delta(\Phi)}\max_{\nu\in\Delta(\joint)}\min_{p\in\Delta(\Pi)}  
\air^\Phi_{\rho,\eta}(p,\nu)\leq\dec^{\textsc{KL}}_\eta(\bar\calM(\Phi))$, \textbf{unless} $\Phi$ is an environment with adaptive comparator where sublinear regret is impossible. In this case
    $\max_{\rho \in \Delta(\Phi)}\max_{\nu\in\Delta(\joint)}\min_{p\in\Delta(\Pi)}  
\air^\Phi_{\rho,\eta}(p,\nu)=\Omega(1)$ independently of $\eta$ 
.
\end{lemma}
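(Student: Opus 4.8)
The plan is to first dispose of the maximization over the reference $\rho$, then rewrite the information-gain term as a model-space divergence, and finally compare term by term with the definition of $\dec^{\textsc{KL}}_\eta$. Recall from the decomposition stated just before \pref{alg:general AIR} that $\air^{\Phi}_{\rho,\eta}(p,\nu)$ splits as $\E_{\pi\sim p}\E_{(M,\pi^\star)\sim\nu}[V_M(\pi^\star)-V_M(\pi)] - \frac1\eta I(p,\nu) - \frac1\eta\KL(\nu_{\bo{\phi}},\rho)$, where $\nu_{\bo{\phi}}$ is the marginal of $\nu$ over $\Phi$ and $I(p,\nu):=\E_{\pi,(M,\pi^\star),o}[\KL(\nu_{\bo{\phi}}(\cdot|\pi,o),\nu_{\bo{\phi}})]$ is the mutual information between the cell $\phi$ and the observation. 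The last term is the only place $\rho$ enters and it does not depend on $p$; hence the $\max_\rho$ and $\min_p$ decouple, and $\max_\rho$ simply picks $\rho=\nu_{\bo{\phi}}$, which annihilates $\KL(\nu_{\bo{\phi}},\rho)$. Thus the left-hand side reduces to $\max_\nu\min_p\big[\E_{\pi,(M,\pi^\star)}[V_M(\pi^\star)-V_M(\pi)] - \frac1\eta I(p,\nu)\big]$, and it remains to bound this.

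Next I would rewrite both surviving terms in the language of the $\Phi$-aware convexification. Writing $M_\phi:=\E_{\nu|\phi}[M]\in\co(\{M:\exists\pi,\,(M,\pi)\in\phi\})\subseteq\bar\calM(\Phi)$ for the within-cell mixture model and $\bar M_\nu:=\E_\nu[M]=\sum_\phi\nu_{\bo{\phi}}(\phi)M_\phi\in\co(\bar\calM(\Phi))$ for the full mixture, the symmetric formula for mutual information (expectation of the posterior equals the prior, and $I(X;Y)=\E_X[\KL(P_{Y|X},P_Y)]$) gives $I(p,\nu)=\E_{\pi\sim p}\sum_\phi\nu_{\bo{\phi}}(\phi)\KL(M_\phi(\cdot|\pi),\bar M_\nu(\cdot|\pi))$. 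Using linearity of the value in the model, i.e. $\E_{\nu|\phi}[V_M(\pi)]=V_{M_\phi}(\pi)$, the regret term equals $\E_{\pi\sim p}\sum_\phi\nu_{\bo{\phi}}(\phi)\big(g_\phi-V_{M_\phi}(\pi)\big)$ with $g_\phi:=\E_{\nu|\phi}[V_M(\pi^\star)]$.

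The dichotomy then hinges on comparing $g_\phi$ with $V_{M_\phi}(\pi_{M_\phi})$. When each cell $\phi$ carries a single comparator $\pi^\star_\phi$ (the fixed-comparator regime covering all the applications in this paper), $g_\phi=V_{M_\phi}(\pi^\star_\phi)\le V_{M_\phi}(\pi_{M_\phi})$. Substituting this, the per-cell bracket is at most $V_{M_\phi}(\pi_{M_\phi})-V_{M_\phi}(\pi)-\frac1\eta\KL(M_\phi(\cdot|\pi),\bar M_\nu(\cdot|\pi))$, which is exactly the $\dec$ integrand for model $M_\phi\in\bar\calM(\Phi)$ and reference $\bar M_\nu$. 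Bounding the $\nu_{\bo{\phi}}$-weighted average by the maximum over $\phi$, enlarging that maximum from $\{M_\phi\}$ to all of $\bar\calM(\Phi)$, and recalling $\bar M_\nu\in\co(\bar\calM(\Phi))$, the quantity is at most $\max_{\bar M\in\co(\bar\calM(\Phi))}\min_p\max_{M\in\bar\calM(\Phi)}\E_\pi[V_M(\pi_M)-V_M(\pi)-\frac1\eta\KL(M(\cdot|\pi),\bar M(\cdot|\pi))]=\dec^{\textsc{KL}}_\eta(\bar\calM(\Phi))$; since this bound is free of $\nu$, applying $\max_\nu$ preserves it.

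For the complementary case I would exhibit the $\Omega(1)$ lower bound by concentrating $\nu$ on a single cell $\phi_0$ whose adaptive comparator defeats every fixed policy on the mixture $M_{\phi_0}$, i.e. $g_{\phi_0}>\max_\pi V_{M_{\phi_0}}(\pi)$; this inequality is precisely the content of sublinear regret being impossible for that cell. With all mass on one cell, the posterior $\nu_{\bo{\phi}}(\cdot|\pi,o)$ is the point mass on $\phi_0$ for every $(\pi,o)$, so $I(p,\nu)\equiv 0$ and the entire regularization vanishes regardless of $\eta$; the objective collapses to $\min_p\E_\pi[g_{\phi_0}-V_{M_{\phi_0}}(\pi)]=g_{\phi_0}-\max_\pi V_{M_{\phi_0}}(\pi)>0$, a constant independent of $\eta$. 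I expect the main obstacle to be the mutual-information rewriting of $I(p,\nu)$ together with the bookkeeping that $M_\phi$ and $\bar M_\nu$ land in $\bar\calM(\Phi)$ and $\co(\bar\calM(\Phi))$ respectively; the value-linearity step and the precise identification of the ``adaptive comparator, sublinear regret impossible'' regime with the condition $g_\phi>\max_\pi V_{M_\phi}(\pi)$ also require care.
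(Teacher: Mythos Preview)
Your argument is correct and rests on the same three pillars as the paper's proof: the mutual-information rewrite $I(p,\nu)=\E_{\pi}\E_{\phi}[\KL(M_\phi(\cdot|\pi),\bar M_\nu(\cdot|\pi))]$, the comparison of $g_\phi=\E_{\nu|\phi}[V_M(\pi^\star)]$ with $V_{M_\phi}(\pi_{M_\phi})$ (the paper packages exactly this gap as the scalar $C(\Phi):=\max_{\phi,\nu\in\Delta(\phi)}[g_\phi-V_{M^\nu}(\pi_{M^\nu})]$), and the degenerate-$\nu$ construction for the $\Omega(1)$ lower bound. The organization, however, is different. The paper first establishes an \emph{equality} $\max_{\rho,\nu}\min_p\air^\Phi_{\rho,\eta}=\dec^{\textsc{KL}}_\eta(\calM,\Phi)$ with an auxiliary $\Phi$-dependent DEC (\pref{lem: equivalence of dec}), invoking Sion's minimax theorem twice and introducing a dummy reference $\mu\in\Delta(\calM)$; only afterwards does it compare $\dec^{\textsc{KL}}_\eta(\calM,\Phi)$ to $\dec^{\textsc{KL}}_\eta(\bar\calM(\Phi))$. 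You bypass both the intermediate object and the minimax theorem, simply noting that the $\rho$-dependent KL term is $p$-free so that $\max_\rho$ and $\min_p$ commute for elementary reasons, and then bounding directly. Your route is more elementary for this lemma in isolation; the paper's detour buys the exact equality $\max_{\rho,\nu}\min_p\air^\Phi=\dec^{\textsc{KL}}_\eta(\calM,\Phi)$, which is reused to prove the matching lower bound in \pref{lem: dec equiv} and the informed-comparator result in \pref{thm: informed}.
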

While this is only providing an upper bound, it is actually tight in many environments of interest. Let a \emph{fixed comparator game} be such that all $\phi\in\Phi$ contain a single policy $\pi^\phi$. 
\begin{lemma}
\label{lem: dec equiv}
In all fixed comparator games if the choice of model is independent of the choice of comparator then $\max_{\rho \in \Delta(\Phi)}\max_{\nu\in\Delta(\joint)}\min_{p\in\Delta(\Pi)}  
\air^\Phi_{\rho,\eta}(p,\nu)=\dec^{\textsc{KL}}_\eta(\bar\calM(\Phi))$ .
\end{lemma}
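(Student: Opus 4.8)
The plan is to prove the two inequalities separately, invoking \pref{lem: air by dec} for ``$\le$'' and building an explicit construction for ``$\ge$''.

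\emph{Reduction of the left-hand side.} I would first eliminate $\rho$. It enters $\air^\Phi_{\rho,\eta}(p,\nu)$ only through the additive regularizer $-\frac1\eta\KL(\nu_{\bo\phi},\rho)$ in the decomposition below \pref{alg:general AIR}; this term is free of $p$ and is maximized at $\rho=\nu_{\bo\phi}$, where it vanishes. Hence the outer $\max_\rho$ simply deletes it, so that
\begin{align*}
\max_{\rho\in\Delta(\Phi)}\max_{\nu\in\Delta(\joint)}\min_{p\in\Delta(\Pi)}\air^\Phi_{\rho,\eta}(p,\nu)
=\max_{\nu}\min_{p}\left(\E_{\pi\sim p}\E_{(M,\pi^\star)\sim\nu}\left[V_M(\pi^\star)-V_M(\pi)\right]-\tfrac1\eta\,\E\!\left[\KL\!\left(\nu_{\bo\phi}(\cdot|\pi,o),\nu_{\bo\phi}\right)\right]\right).
\end{align*}
Next I would rewrite the information-gain term with the mutual-information identity $\E_{o}[\KL(\text{posterior},\text{prior})]=\E_{\phi}[\KL(\text{likelihood},\text{marginal})]$: for each fixed $\pi$ it equals $\E_{\phi\sim\nu_{\bo\phi}}[\KL(\bar M_\phi(\cdot|\pi),\bar M_\nu(\cdot|\pi))]$, where $\bar M_\phi$ is the $\nu$-average of the models inside $\phi$ and $\bar M_\nu=\E_\phi[\bar M_\phi]$. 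Because $V_M(\pi)$ is linear in $M$, the regret term collapses to these averages too, so the objective depends on $\nu$ only through $q:=\nu_{\bo\phi}\in\Delta(\Phi)$ and the within-group means $N_\phi:=\bar M_\phi\in\co(\calM_\phi)$, with $\calM_\phi:=\{M:(M,\pi^\phi)\in\phi\}$. As $\nu$ ranges over $\Delta(\joint)$, the pair $(q,\{N_\phi\})$ ranges freely over $\Delta(\Phi)\times\prod_\phi\co(\calM_\phi)$, giving
\begin{align*}
\max_{\rho}\max_{\nu}\min_{p}\air^\Phi_{\rho,\eta}(p,\nu)
=\max_{q\in\Delta(\Phi)}\ \max_{\{N_\phi\in\co(\calM_\phi)\}}\ \min_{p}\ \E_{\pi\sim p}\E_{\phi\sim q}\left[V_{N_\phi}(\pi^\phi)-V_{N_\phi}(\pi)-\tfrac1\eta\KL\!\left(N_\phi(\cdot|\pi),\bar N(\cdot|\pi)\right)\right],
\end{align*}
where $\bar N=\E_{\phi\sim q}[N_\phi]$. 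This is exactly the Bayesian DEC of $\bar\calM(\Phi)=\bigcup_\phi\co(\calM_\phi)$, except that the benchmark uses the fixed comparator $\pi^\phi$ in place of the optimal policy $\pi_{N_\phi}$.

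\emph{Upper bound.} Since a fixed comparator game never falls into the degenerate adaptive-comparator case, \pref{lem: air by dec} already yields $\max_\rho\max_\nu\min_p\air^\Phi_{\rho,\eta}\le\dec^{\textsc{KL}}_\eta(\bar\calM(\Phi))$. Equivalently, from the reduced form, $V_{N_\phi}(\pi^\phi)\le V_{N_\phi}(\pi_{N_\phi})$ turns the expression into the Bayesian DEC of $\bar\calM(\Phi)$, which equals $\dec^{\textsc{KL}}_\eta(\bar\calM(\Phi))$ by the Bayesian--frequentist equivalence of \cite{xu2023bayesian}.

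\emph{Lower bound (the main work).} I would take a maximizing distribution $\mu^\star$ over $\bar\calM(\Phi)$ for the Bayesian DEC, with reference $\bar N_{\mu^\star}=\E_{N\sim\mu^\star}[N]$. For each $N$ in its support I would choose a group with $N\in\co(\calM_\phi)$; by the hypothesis that the model choice is independent of the comparator, that group is also paired with the comparator $\pi_N$, so there is a $\phi$ with $N\in\co(\calM_\phi)$ and $\pi^\phi=\pi_N$. Setting $N_\phi:=N$ and $q(\phi):=\mu^\star(N)$ produces a feasible point of the reduced form in which $\pi^\phi=\pi_{N_\phi}$, so its objective coincides with the Bayesian DEC objective of $\mu^\star$. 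When several support points land on one $\phi$ (identical group and identical optimal policy $\pi_0$), I would merge them into their mixture $\bar N$: linearity of $V$ keeps the regret benchmark unchanged (if $\pi_0$ is optimal for each summand it is optimal for $\bar N$, whence $V_{\bar N}(\pi_0)=V_{\bar N}(\pi_{\bar N})$), convexity of $\KL(\cdot,\bar N_{\mu^\star})$ in its first argument can only decrease the information penalty, and the reference $\bar N_{\mu^\star}$ is preserved. Thus this feasible point attains an objective at least that of the Bayesian DEC, giving $\max_\rho\max_\nu\min_p\air^\Phi_{\rho,\eta}\ge\dec^{\textsc{KL}}_\eta(\bar\calM(\Phi))$; combined with the upper bound this is the claimed equality.

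The main obstacle is the lower bound: one must show that the freedom in choosing $\nu$ can \emph{simultaneously} (i) align each fixed comparator $\pi^\phi$ with the optimal policy $\pi_{N_\phi}$ of the aggregated model --- precisely where the ``model independent of comparator'' hypothesis enters --- and (ii) reproduce the DEC's reference mixture without extra information cost, which the collision-merging and KL-convexity argument secures. A secondary point is the Bayesian--frequentist equivalence for $\dec^{\textsc{KL}}_\eta(\bar\calM(\Phi))$, which rests on a Sion minimax swap that is valid here because the objective is linear in $p$ and concave in the reference model.
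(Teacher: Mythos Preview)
Your proof is correct but takes a different route from the paper.

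The paper first invokes \pref{lem: equivalence of dec} to identify the left-hand side with the intermediate quantity $\dec^{\textsc{KL}}_\eta(\calM,\Phi)$, which already has an outer $\max_{\bar M\in\co(\calM)}$ and a $\min_p$. It then proves $\dec^{\textsc{KL}}_\eta(\calM,\Phi)=\dec^{\textsc{KL}}_\eta(\bar\calM(\Phi))$ by a short \emph{pointwise} argument: for every fixed $\bar M$ and $p$, the inner maximization over $\nu\in\Delta(\Psi)$ in $\dec^{\textsc{KL}}_\eta(\calM,\Phi)$ is shown to coincide with $\max_{M^\nu\in\bar\calM(\Phi)}[\cdots]$. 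The product structure $\Phi=\Theta\times\Pi$ enters in a single line---since $\pi^\star$ can be chosen independently of the model mixture, the max over $\mu\in\Delta(\Pi)$ simply returns $\pi_{M^\nu}$. Because equality holds pointwise in $(\bar M,p)$, both inequalities follow at once and no separate construction or Jensen argument is needed.

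Your approach instead eliminates $\rho$, rewrites the KL via mutual information, and reduces to a Bayesian form parametrized by $(q,\{N_\phi\})$. You then prove $\le$ and $\ge$ separately: the upper bound via \pref{lem: air by dec} (or equivalently $V_{N_\phi}(\pi^\phi)\le V_{N_\phi}(\pi_{N_\phi})$), and the lower bound by pulling back the Bayesian-DEC optimizer $\mu^\star$, choosing groups with $\pi^\phi=\pi_N$ (where the independence hypothesis is used), merging collisions, and using KL-convexity to control the information term. This works, but the collision-merging and the appeal to Bayesian--frequentist equivalence for $\dec^{\textsc{KL}}_\eta(\bar\calM(\Phi))$ are extra machinery that the paper sidesteps by working at fixed $(\bar M,p)$, where the $\min_p$ never interferes with the comparison. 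Your argument is more constructive and self-contained; the paper's is shorter and exploits the intermediate $\Phi$-dependent DEC to make the equality immediate.
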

The choice of model being independent of the choice of comparator means that $\Phi=\Theta\times\Pi$ for an arbitrary disjoint partition $\calM=\bigcup_{\theta\in\Theta}\theta$ of models. This is in fact the case for most of our applications except for the application in \pref{sec: model-free adversarial}. 

Note that in stochastic regimes, we often trim the set to pairs of model and its optimal policy (see \pref{sec: stochastic model-free}). This trimming neither impacts the maxmin value of $\air^\Phi$ nor the convexification (see \pref{lem: trimming} in the appendix), so equality still holds.

\subsection{Informed Comparator Case}\label{sec: full-info}
In this section, we consider the fixed-comparator regime where the choice of model is independent of the comparator, that means there is a partition $\Theta$ of $\calM$, such that $\Phi=\Theta\times\Pi$. 

The complexity $|\Phi|$ is $|\Theta|\cdot|\Pi|$, which can be highly sub-optimal when $|\Theta|\ll|\Pi|$.
To overcome this limitation, we propose to split the online learning problem into two modified subgames defined as follows. 

\emph{Simultaneous learning game:} The environment mainly follows the same protocol as before, but the learner selects a meta-policy $(\pi_t^\theta)_{\theta\in\Theta}\in\Pi^{|\Theta|}$ in every round instead. Essentially, the learner is allowed to choose a dedicated policy for \emph{each} model subset $\theta\in\Theta$. However, the learner loses control over the observation policy $\pi_t$. The regret is only evaluated on the choice of $\pi_t^{\theta^\star}$, where $\theta^\star$ is the unknown ground truth decided by the environment at the beginning (see \pref{eq: decomposition}). 

\emph{Informed comparator game:} At the beginning of the game, the environment still secretly decides $\theta^\star\in\Theta$. In each round, additionally to deciding $M_t\in \theta^\star$, the environment also chooses $(\pi_t^\theta)_{\theta\in\Theta}\in\Pi^{|\Theta|}$, which is revealed to the learner before they make their decision $\pi_t$. The comparator in round $t$ is $\pi_t^{\theta^\star}$. 

These games are motivated by the following regret decomposition.
\begin{align}
    \E\left[\Reg(\pi^\star)\right] = \underbrace{\E\left[\sum_{t=1}^TV_{M_t}(\pi^\star)-V_{M_t}(\pi_t^{\theta^\star})\right]}_{ \text{Regret in \emph{Simultaneous learning game}}}+\underbrace{\E\left[\Reg(\pi^{\theta^\star}_{1:T})\right]}_{ \text{Regret in \emph{Informed comparator game}}}\label{eq: decomposition}\,.
\end{align}
In general, the loss of control over the observation policy makes the \emph{Simultaneous learning game} hard to solve. However, when the observation is independent of the observation policy, this game is strictly easier than the vanilla setting. We present a specific example at the end of the section.

Let us first show that the \emph{Informed comparator game} can be solved independently of the size of the policy space.
Denote $\pi^\Theta=(\pi^\theta)_{\theta\in\Theta}$ and define for a distribution $\nu\in\Delta(\calM)$
\begin{align*}
    \infoair^\Theta_{\rho,\eta}(p,\nu,\pi^\Theta) = \E_{\pi\sim p}\E_{\theta^\star \sim \nu} \E_{M \sim \nu(\cdot|\theta^\star)} \E_{o\sim M(\cdot|\pi)}\left[V_M(\pi^{\theta^\star}) - V_M(\pi) - \frac{1}{\eta} \KL(\nu_{\bo{\theta}}(\cdot|\pi, o), \rho)\right]. 
\end{align*}
The following theorem refers to the use of \pref{alg:general AIR full info}.
\begin{theorem}\label{thm: informed}
   In the informed comparator setting, there exists an algorithm with regret
   \begin{align*}
    \E\left[\Reg(\pi^{\theta^\star}_{1:T})\right] &\leq \frac{\log|\Theta|}{\eta} +  T\max_{\pi^\Theta\in\Pi^\Theta}\max_{\rho \in\Delta(\Theta)}\max_{\nu\in\Delta(\calM)}\min_{p\in\Delta(\Pi)} \infoair^\Phi_{\rho,\eta}(p,\nu, \pi^\Theta) \\
    &= \frac{\log|\Theta|}{\eta} + T\cdot \dec^{\textsc{KL}}_\eta(\bar\calM(\Theta)). 
\end{align*}
\label{thm:general-full}
\end{theorem}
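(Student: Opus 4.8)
The plan is to establish the two assertions separately: the algorithmic regret bound stated as an inequality, and the exact identification of the resulting max-min value with $\dec^{\textsc{KL}}_\eta(\bar\calM(\Theta))$.

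\textbf{Regret bound.} I would run the informed variant of \pref{alg:general AIR}, i.e.\ \pref{alg:general AIR full info}, which maintains a reference distribution $\rho_t\in\Delta(\Theta)$ rather than over all of $\Phi$. The point is that in the informed comparator game the comparator $\pi_t^{\theta^\star}$ is part of the revealed vector $\pi_t^\Theta$, so the only latent quantity the learner must track is the block index $\theta^\star$; this is exactly why the posterior lives on $\Theta$ and the estimation term is $\frac{\log|\Theta|}{\eta}$ rather than $\frac{\log|\Phi|}{\eta}$. At round $t$, after observing $\pi_t^\Theta$, the algorithm solves the saddle point $\min_p\max_\nu\infoair^\Theta_{\rho_t,\eta}(p,\nu,\pi_t^\Theta)$, plays $\pi_t\sim p_t$, and updates $\rho_{t+1}$ to the induced posterior $\nu_t(\cdot\mid\pi_t,o_t)$ over $\Theta$. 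The analysis is the same potential/telescoping argument underlying the AIR and MAIR bounds of \cite{xu2023bayesian} and the proof of \pref{thm: general}: the expected per-round regret $\E[V_{M_t}(\pi_t^{\theta^\star})-V_{M_t}(\pi_t)]$ is bounded by the per-round saddle value plus the one-step information and regularization terms, and summing over $t$ with $\rho_{t+1}$ equal to the Bayesian posterior makes these terms telescope, contributing at most $\frac{\log|\Theta|}{\eta}$ through the uniform initialization $\rho_1(\theta)=1/|\Theta|$. Bounding each per-round saddle value by $\max_{\pi^\Theta}\max_\rho\max_\nu\min_p\infoair^\Theta_{\rho,\eta}$ --- using the minimax theorem (linearity of $\infoair^\Theta$ in $p$, concavity in $\nu$ from the AIR framework) to swap $\min_p\max_\nu$ into $\max_\nu\min_p$ --- gives the first inequality.

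\textbf{Identification with DEC.} The key structural observation is that $\pi^\Theta$ enters $\infoair^\Theta$ only through the comparator value $\E_{\theta^\star\sim\nu}\E_{M\sim\nu(\cdot\mid\theta^\star)}[V_M(\pi^{\theta^\star})]$, and appears neither in the exploitation term $V_M(\pi)$ nor in the information term $\KL(\nu_{\bo{\theta}}(\cdot\mid\pi,o),\rho)$. Hence $\max_{\pi^\Theta}$ commutes with $\min_p$ and factorizes across blocks: for each $\theta$ the best comparator maximizes $\E_{M\sim\nu(\cdot\mid\theta)}[V_M(\pi^\theta)]=V_{\bar M_\theta}(\pi^\theta)$, where $\bar M_\theta:=\E_{M\sim\nu(\cdot\mid\theta)}[M]\in\co(\theta)$ is the block mixture and $V$ is affine in the model. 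The attained value is $V_{\bar M_\theta}(\pi_{\bar M_\theta})$, which is precisely the MAIR/DEC comparator for the convexified model $\bar M_\theta$. Since the observation marginal is $\bar M_\theta(\cdot\mid\pi)$ and the posterior $\nu_{\bo{\theta}}(\cdot\mid\pi,o)$ depends on $\nu$ only through the weights $\nu(\theta)$ and the mixtures $\bar M_\theta$, after eliminating $\pi^\Theta$ the objective is exactly the $\mair$ objective evaluated on the convexified class $\bar\calM(\Theta)=\bigcup_\theta\co(\theta)$. I would then invoke the $\mair$--$\dec$ equivalence of \cite{xu2023bayesian}, together with the identity $\bar\calM(\Theta)=\bar\calM(\Theta\times\Pi)$ and \pref{lem: dec equiv}, to conclude that the max-min value equals $\dec^{\textsc{KL}}_\eta(\bar\calM(\Theta))$.

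The main obstacle will be the second part, specifically the bookkeeping needed to reorganize the optimization over $(\pi^\Theta,\nu,p)$ so that, once $\pi^\Theta$ is eliminated, the remainder is literally the max-min defining $\mair$ on $\bar\calM(\Theta)$. Two points require care: first, that ranging $\nu$ over $\Delta(\calM)$ realizes exactly the convex hulls $\co(\theta)$ as the worst-case block mixtures $\bar M_\theta$ sweep over $\co(\theta)$; and second, that restricting to a single mixture point per block loses nothing, which is consistent with the inner maximum in $\dec^{\textsc{KL}}_\eta(\bar\calM(\Theta))$ ranging over a single model $M\in\bar\calM(\Theta)$ rather than a distribution that could spread within a block. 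Both hinge on the affineness of $V_M(\pi)$ in $M$ and on the information term depending on $\nu$ only at the block level --- exactly the properties on which \pref{lem: dec equiv} rests.
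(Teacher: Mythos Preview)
Your proposal is correct and follows the paper's approach: the regret bound is the telescoping argument of \pref{thm: general} adapted to $\Theta$ in place of $\Phi$, and the identification with DEC proceeds by eliminating $\pi^\Theta$ (which the paper packages as the standalone \pref{lem: max infoair to air}, reducing $\max_{\nu,\pi^\Theta}\infoair^\Theta$ to $\max_\nu\air^\Phi$ for $\Phi=\Theta\times\Pi$) and then invoking \pref{lem: dec equiv} together with $\bar\calM(\Phi)=\bar\calM(\Theta)$. Your intermediate ``$\mair$ on $\bar\calM(\Theta)$'' framing is slightly loose---the posterior lives on $\Theta$, not on individual convexified models---but since you ultimately ground the equality in \pref{lem: dec equiv} this is only a labeling issue.
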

Since $\bar{\calM}(\Theta)=\bar{\calM}(\Phi)$, this is the same bound as \pref{thm: general}, except that we only suffer the penalty $\log(|\Theta|)$ instead of $\log(|\Theta||\Pi|)$.

Finally, let us present a case in which the \emph{Simultaneous learning game} is efficiently solvable. We define full-information feedback such that the learner receives a candidate model $M_t^\theta$ for every $\theta\in\Theta$ after round $t$, with the guarantee that $M_t=M_t^{\theta^\star}$.
\begin{theorem}\label{thm: simultaneous full-info}
    Let $\Theta$ describe an environment over $H$-staged MDPs with action set $\calA$ and full-information reward feedback. For any $\gamma>0$ there exists an algorithm in the \emph{Simultaneous learning game} with regret bounded by $\left(\frac{\log|\calA|}{\gamma} + \gamma T\right) H$. 
\end{theorem}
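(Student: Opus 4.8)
The plan is to exploit two structural features of this setting: the transition associated with each $\theta\in\Theta$ is \emph{fixed} across rounds, and the full-information reward feedback lets the learner reconstruct every relevant $Q$-function after each round. Together these let us decouple the meta-policy into $|\Theta|$ independent online-MDP problems and run a standard exponential-weights-per-state scheme in each. First I would run, for each $\theta\in\Theta$ independently, the classical no-regret ``experts at each state'' algorithm for online MDPs (in the style of Even-Dar--Kakade--Mansour): at every state-stage pair $(s,h)$ maintain a Hedge instance over $\calA$ with learning rate $\gamma$, and let $\pi_t^\theta(\cdot|s)$ be its current distribution. The transition $P^\theta$ is fixed and known from the definition of $\theta$, and after round $t$ the full-information feedback reveals $M_t^\theta=(P^\theta,R_t^\theta)$, so the learner can compute the gain vectors $g_{t,h}^\theta(s,\cdot)=Q_h^{\pi_t^\theta}(s,\cdot;M_t^\theta)\in[0,1]^{\calA}$ used to update each Hedge instance. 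Because the regret of the \emph{Simultaneous learning game} is evaluated only at $\pi_t^{\theta^\star}$, it suffices to bound the single instance indexed by the true $\theta^\star$; the observation policy $\pi_t$ never enters the quantity we control, which is exactly why losing control of it is harmless under full information.

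The key step is a reduction to per-state online learning via the performance difference lemma. Writing $M_t=(P^{\theta^\star},R_t)$ and letting $d_h^{\pi^\star}$ be the stage-$h$ state distribution of the comparator $\pi^\star$ under $P^{\theta^\star}$, the lemma gives
\begin{align*}
V_{M_t}(\pi^\star)-V_{M_t}(\pi_t^{\theta^\star})=\sum_{h=1}^H\E_{s\sim d_h^{\pi^\star}}\left[\langle \pi^\star(\cdot|s)-\pi_t^{\theta^\star}(\cdot|s),\,Q_h^{\pi_t^{\theta^\star}}(s,\cdot;M_t)\rangle\right].
\end{align*}
The crucial observation is that, since the transition $P^{\theta^\star}$ is identical in every round, $d_h^{\pi^\star}$ does \emph{not} depend on $t$. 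Summing over $t$ therefore converts the inner products at each fixed $(s,h)$ into a genuine online-learning regret of the Hedge instance against the fixed expert $\pi^\star(\cdot|s)$, driven by the gain sequence $g_{t,h}^{\theta^\star}(s,\cdot)$ — there is no moving-target issue in the comparator's occupancy measure.

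Finally I would invoke the standard Hedge guarantee: for gains in $[0,1]^{\calA}$ with uniform initialization, the per-state regret against any fixed distribution is at most $\tfrac{\log|\calA|}{\gamma}+\gamma\sum_t\langle \pi_t^{\theta^\star}(\cdot|s),(g_{t,h}^{\theta^\star}(s,\cdot))^2\rangle$. Since each gain coordinate is a $Q$-value in $[0,1]$ we have $(g)^2\le g$ pointwise and $\langle\pi_t^{\theta^\star}(\cdot|s),g_{t,h}^{\theta^\star}(s,\cdot)\rangle\le 1$, so the second-order term is bounded by $T$. Taking $\E_{s\sim d_h^{\pi^\star}}$ — a probability distribution, contributing a factor one per stage — and summing over the $H$ stages yields $\big(\tfrac{\log|\calA|}{\gamma}+\gamma T\big)H$, as claimed. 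I expect the main obstacle to be verifying that the full-information feedback genuinely suffices to form the updating gains $Q_h^{\pi_t^{\theta^\star}}(\cdot,\cdot;M_t)$: one must check that knowing the fixed transition $P^{\theta^\star}$ together with the revealed reward $R_t$ and the learner's own policy determines these $Q$-functions exactly (it does), and then to pin down the stability/second-order term tightly enough to recover the stated constant rather than an extra factor of $H$.
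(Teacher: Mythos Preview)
Your proposal is correct and follows essentially the same approach as the paper: define the meta-policy via per-state exponential weights with gains $Q^{\pi_t^\theta}(s,\cdot;M_t^\theta)$, apply the performance difference lemma to reduce to per-state online learning against the fixed comparator occupancy, and bound each Hedge instance using $Q\le 1$. Your exposition is in fact slightly more careful than the paper's, since you explicitly justify why $d_h^{\pi^\star}$ is $t$-independent (fixed transition) so that the sum over $t$ and the expectation over states can be exchanged.
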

In $H$-staged MDPs with state set $\calS$, the policy space we consider can be as large as $|\Pi|=|\calA|^{|\calS|}$
In cases where $\calS$ is very large, i.e. $|\calS|\gg\max\{|\calA|,|\Theta|\}$, combining \pref{thm: informed} and \pref{thm: simultaneous full-info} yield significantly better bounds than applying \pref{thm: general} directly.

\begin{algorithm}[t]
\caption{General Algorithm with Informed Comparator} \label{alg:general AIR full info}
    Let $\rho_1(\theta) = \frac{1}{|\Theta|}$ for all $\theta \in \Theta$. \\
    \For{$t=1,2,\ldots, T$}{
       Receive $\pi_t^\Theta=(\pi_t^\theta)_{\theta\in\Theta}$ from the environment. \\
       Find a distribution $p_t$ of $\pi$ and a distribution $\nu_t$ of $\calM$ that solve the saddle-point of 
       \begin{align*}
           \min_{p\in\Delta(\Pi)} \max_{\nu\in \Delta(\calM)} \infoair^\Theta_{\rho_t,\eta}(p,\nu, \pi_t^\Theta)  
       \end{align*}
       Sample decision $\pi_t\sim p_t$ and observe $o_t\sim M_t(\cdot|\pi_t)$. \\
       Update $\rho_{t+1}(\theta)=\nu_{t}(\theta|\pi_t,o_t)$ for all $\theta\in\Theta$.  
       
    }
\end{algorithm}
\section{Model-Based RL in Adversarial MDPs with Fixed Transition}\label{sec: model-based}
In this section, we consider the setting where the reward is adversarially chosen but the transition is fixed.
We decompose the model space $\calM = \calR \times \calP$ where $\calR$ is the reward space and $\calP$ is the transition space. 

This corresponds to the partition  $\Phi_{\textsc{A}} = \left\{\phi_{{\pi^\star, P^\star}}: \pi^\star \in \Pi, P^\star \in \calP\right\}$ where \sloppy$\phi_{{\pi^\star, P^\star}} = \left\{( (P^\star, R), \pi^\star): R \in \calR\right\}$. Moreover, now $\joint_{\sA} = \left(\bigcup_{\phi\in\Phi_\textsc{A}} \phi\right) = \calM\times \Pi$. 
By definition, we have $|\Phi_A|=|\Pi|\cdot|\calP|$ and $\bar\calM(\Phi_A)=\calP\times\co(\calR)$. Note that for typical applications, $\calR$ is convex and hence $\bar\calM(\Phi_A)=\calM$.

\begin{corollary}
If the reward space $\calR$ is convex, there exists an algorithm for the hybrid problem with regret
\begin{align*}
    \E[\Reg(\pi^\star)] \leq \frac{\log\left(|\Pi||\calP|\right)}{\eta} +T\cdot\dec^{\textsc{KL}}_\eta(\calM)\,.
\end{align*} 
\end{corollary}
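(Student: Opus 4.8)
The plan is to obtain the corollary as a direct instantiation of \pref{thm: general} combined with \pref{lem: air by dec}, specialized to the partition $\Phi_\sA$. First I would apply \pref{thm: general} with $\Phi = \Phi_\sA$, which immediately yields
\begin{align*}
\E[\Reg(\pi^\star)] \leq \frac{\log|\Phi_\sA|}{\eta} + T\max_{\rho\in\Delta(\Phi_\sA)}\max_{\nu\in\Delta(\joint_\sA)}\min_{p\in\Delta(\Pi)} \air^{\Phi_\sA}_{\rho,\eta}(p,\nu).
\end{align*}
Since $\Phi_\sA$ is indexed by pairs $(\pi^\star, P^\star) \in \Pi\times\calP$ and its elements are disjoint, we have $|\Phi_\sA| = |\Pi|\cdot|\calP|$, so the first term is exactly $\frac{\log(|\Pi||\calP|)}{\eta}$, matching the estimation-complexity term in the claim.

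Next I would control the decision-complexity term. Each subset $\phi_{\pi^\star, P^\star}$ contains the single comparator policy $\pi^\star$, so this is a fixed-comparator game and the adaptive-comparator exception in \pref{lem: air by dec} does not apply. Therefore \pref{lem: air by dec} gives
\begin{align*}
\max_{\rho\in\Delta(\Phi_\sA)}\max_{\nu\in\Delta(\joint_\sA)}\min_{p\in\Delta(\Pi)}\air^{\Phi_\sA}_{\rho,\eta}(p,\nu) \leq \dec^{\textsc{KL}}_\eta(\bar\calM(\Phi_\sA)).
\end{align*}
In fact, because the choice of transition is independent of the comparator (i.e.\ $\Phi_\sA = \calP\times\Pi$ in the sense of \pref{lem: dec equiv}), equality holds here, so the resulting bound is tight; but only the upper bound is needed for the stated corollary.

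Finally I would simplify the convexified class. By definition the $\Phi_\sA$-aware convexification convexifies the reward component within each transition-fixed block, giving $\bar\calM(\Phi_\sA) = \calP\times\co(\calR)$, as already recorded in the text preceding the corollary. When $\calR$ is convex we have $\co(\calR) = \calR$, hence $\bar\calM(\Phi_\sA) = \calP\times\calR = \calM$ and $\dec^{\textsc{KL}}_\eta(\bar\calM(\Phi_\sA)) = \dec^{\textsc{KL}}_\eta(\calM)$. Chaining the three displays yields the claimed bound. This corollary is essentially bookkeeping on top of the general machinery, so there is no substantive obstacle; the only two points I would verify carefully are (a) that we are genuinely in the fixed-comparator regime, excluding the $\Omega(1)$ branch of \pref{lem: air by dec} — this follows from the single-comparator structure of each $\phi_{\pi^\star,P^\star}$, exactly the condition flagged in the Remark after \pref{thm: general} — and (b) the convexification identity $\bar\calM(\Phi_\sA)=\calM$, which is where the convexity hypothesis on $\calR$ is used.
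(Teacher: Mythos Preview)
Your proposal is correct and matches the paper's own proof, which simply says ``The proof follows directly from combining \pref{thm: general} and \pref{lem: air by dec}.'' You have filled in exactly the right details: $|\Phi_\sA|=|\Pi||\calP|$, the fixed-comparator structure excludes the $\Omega(1)$ branch of \pref{lem: air by dec}, and $\bar\calM(\Phi_\sA)=\calP\times\co(\calR)=\calM$ under the convexity assumption on $\calR$.
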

The proof follows directly from combining \pref{thm: general} and \pref{lem: air by dec}. Furthermore, when there is full-information reward feedback, 
\begin{corollary}
If the reward space $\calR$ is convex and the learner observe $R_t$ after round $t$, then there exists an algorithm with regret
$$
    \E[\Reg(\pi^\star)]\leq \left(\frac{\log(|\calA|)}{\gamma}+\gamma T\right)H+\frac{\log|\calP|}{\eta} + T\cdot \dec^{\textsc{KL}}_\eta(\calM)\,.$$
\end{corollary}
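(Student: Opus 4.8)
The plan is to instantiate the two-game decomposition of \pref{eq: decomposition} with the transition partition and then control each game by the theorem already established for it. Concretely, I would take $\Theta=\calP$, so that each cell $\theta$ collects all models sharing a fixed transition, $\theta=\{(P,R):R\in\calR\}$ indexed by $P\in\calP$, and the induced partition is exactly $\Phi_{\textsc{A}}=\Theta\times\Pi$. Here the choice of model is independent of the comparator, so we are in the fixed-comparator regime that underlies \pref{eq: decomposition} and the hypotheses of \pref{thm: informed} are satisfied. The combined algorithm runs the two sub-routines together: the simultaneous-game routine produces a meta-policy $(\pi_t^\theta)_{\theta\in\Theta}$ from past feedback alone, and this meta-policy is then handed to \pref{alg:general AIR full info} as the revealed $\pi_t^\Theta$, which outputs the actual observation policy $\pi_t$. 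With this coupling the identity in \pref{eq: decomposition} holds on the single realized interaction.

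First I would check that the full-information reward feedback supplies precisely the ``full-information feedback'' required by \pref{thm: simultaneous full-info}. That theorem asks that after round $t$ the learner receive a candidate model $M_t^\theta$ for every $\theta\in\Theta$ with the guarantee $M_t=M_t^{\theta^\star}$. In our setting a cell $\theta$ is a transition $P$, and upon observing $R_t$ the learner forms $M_t^\theta=(P,R_t)$; since $M_t=(P^\star,R_t)$ and $\theta^\star=P^\star$, the guarantee $M_t=M_t^{\theta^\star}$ indeed holds. Hence the simultaneous learning game is exactly an $H$-staged MDP problem with action set $\calA$ under full-information reward feedback, and \pref{thm: simultaneous full-info} bounds the first term of \pref{eq: decomposition} by $\left(\frac{\log|\calA|}{\gamma}+\gamma T\right)H$ for any $\gamma>0$.

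Next I would bound the second term of \pref{eq: decomposition}, the regret in the informed comparator game, directly by \pref{thm: informed}, which gives
\begin{align*}
    \E\left[\Reg(\pi^{\theta^\star}_{1:T})\right] \leq \frac{\log|\Theta|}{\eta} + T\cdot \dec^{\textsc{KL}}_\eta(\bar\calM(\Theta)).
\end{align*}
Since $\Theta=\calP$ we have $\log|\Theta|=\log|\calP|$, and by convexity of $\calR$ the $\Phi$-aware convexification collapses, $\bar\calM(\Theta)=\calP\times\co(\calR)=\calM$, exactly as recorded for $\bar\calM(\Phi_{\textsc{A}})$ earlier in this section. Thus this term equals $\frac{\log|\calP|}{\eta}+T\cdot\dec^{\textsc{KL}}_\eta(\calM)$. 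Adding the two bounds yields the stated inequality.

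I expect the only genuine subtlety to be bookkeeping rather than a deep obstacle: one must verify that a single interaction can simultaneously be read as both games, i.e. that the observation policy $\pi_t$ from the informed-comparator routine and the meta-policy $(\pi_t^\theta)_\theta$ from the simultaneous-game routine are produced without circular dependence (the meta-policy depends only on past full-information feedback, and $\pi_t$ is chosen afterward given the revealed $\pi_t^\Theta$), so that the exact identity in \pref{eq: decomposition} is legitimate. One should also confirm that the observed $R_t$ is enough to evaluate the candidate $Q$-functions $Q^{\pi}(\cdot,\cdot;M_t^\theta)$ that drive the update inside \pref{thm: simultaneous full-info}, which follows since $M_t^\theta=(P,R_t)$ is fully known. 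Everything else is a direct substitution into the two cited theorems.
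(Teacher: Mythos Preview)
Your proposal is correct and follows essentially the same route as the paper: the corollary is obtained by instantiating the decomposition \pref{eq: decomposition} with $\Theta=\calP$, applying \pref{thm: simultaneous full-info} to the first term (after checking that observing $R_t$ yields the required candidate models $M_t^\theta=(P,R_t)$), applying \pref{thm: informed} to the second term, and then using convexity of $\calR$ to collapse $\bar\calM(\Theta)$ to $\calM$. The bookkeeping point you raise about non-circular dependence between the two sub-routines is real but, as you say, straightforward.
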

This is a direct corollary of \pref{thm: informed}, \pref{thm: simultaneous full-info} and \pref{lem: air by dec}.

\noindent \textbf{Example\ \  } For the model-based low-rank MDP setting with fixed transition and adversarial linear rewards studied in \cite{liu2024beating}, these corollaries directly improve the best-known regret guarantees from $T^\frac{2}{3}$ to $\sqrt{T}$ for both full-information and bandit feedback.

\section{Model-Free RL in Stochastic MDPs}\label{sec: stochastic model-free}
In this section, we apply the general framework to model-free RL in stochastic MDPs. In model-free RL with value function approximation, the learner is provided with a function set $\calF$ that contains possible value functions of the world. We define $\Phi$ as $\Phi=\{\phi_f: f\in\calF\}$ where $\phi_f=\{(M, \pi_M):~ M \text{\ induces\ } f\}$.  With \pref{alg:general AIR}, we are able to obtain the regret bound 
\begin{align*}
    \E\left[\Reg(\pi^\star)\right] \leq \frac{\log|\calF|}{\eta} + T\cdot \dec^{\textsc{KL}}_\eta (\bar\calM(\Phi)). 
\end{align*}
In fact, model-free learning in stochastic MDPs is easier than the $\Phi$-restricted environment defined in \pref{def: restricted env}, as the adversary cannot modify the underlying model in every round. However, the general framework offers a means to reduce estimation complexity from $\log|\calM|$ to $\log|\calF|$. If we can show that aggregation does not significantly increase decision complexity, this approach could provide a viable path to improving the regret bound.

Existing literature has identified general classes where model-free RL allows for $\log|\calF|$ estimation complexity and bounded decision complexity. One of the most general classes is the \emph{bilinear class} \citep{du2021bilinear}, where the \emph{bilinear rank} serves as the decision complexity. 
While we are currently unable to relate  $\dec^{\textsc{KL}}_\eta (\bar\calM(\Phi))$ to bilinear rank in full generality, we identify special cases where such a connection holds.  Specifically, in the linear $Q^\star/V^\star$ setting---a subclass of bilinear class---we show that $\dec^{\textsc{KL}}_\eta (\bar\calM(\Phi))$ can be upper bounded by the bilinear rank. We formally define the linear $Q^\star/V^\star$ setting as the following: 
\begin{definition}
    A class of MDPs satisfies linear $Q^\star/V^\star$ if there are known feature vectors $\varphi: \calS\times \calA\to \mathbb{R}^d$ and $\psi: \calS\to \mathbb{R}^d$ and known function class $\calF=\{(\theta,w)\}\subset \mathbb{R}^{d+d}$ such that $Q^\star(s,a) = \varphi(s,a)^\top \theta^\star$ and $V^\star(s) = \psi(s)^\top w^\star$ for some $(\theta^\star, w^\star)\in\calF$. 
\end{definition}
We assume $|\calF|$ is finite for simplicity. It is straightforward to extend the result to the case where $\calF$ lies in a bounded region in $\mathbb{R}^{d+d}$. With this, we let $\Phi = \{\phi_{\theta,w}~:~(\theta,w) \in \calF\}$, where 
\begin{align*}
     \phi_{\theta,w} = \{(M, \pi_M)~:~ Q^\star(s,a; M) = \varphi(s,a)^\top \theta, \quad V^\star(s; M) = \psi(s)^\top w\}. 
\end{align*}
Then we have the following bound for $\dec^{\textsc{KL}}_\eta(\bar\calM(\Phi))$. 
\begin{theorem}\label{thm: linear QV}
    In the linear $Q^\star/V^\star$ setting, $\dec^{\textsc{KL}}_\eta(\bar\calM(\Phi))\leq 4\eta dH^2$. 
\end{theorem}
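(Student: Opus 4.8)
The plan is to bound the offset DEC directly: for every reference model $\bar M\in\co(\bar\calM(\Phi))$ I will exhibit a single exploration distribution $p\in\Delta(\Pi)$ that controls the worst-case offset objective over all candidates $M\in\bar\calM(\Phi)$. Fix such an $\bar M$ and an arbitrary $M$ with optimal-value parameters $(\theta,w)$. The starting point is the performance difference lemma: rolling in with $\pi$ under $M$,
\[
V_M(\pi_M)-V_M(\pi)=\sum_{h=1}^{H}\E^{\pi}_{s_h,a_h\sim M}\!\left[\psi(s_h)^\top w-\varphi(s_h,a_h)^\top\theta\right],
\]
since $V^\star_M(s)=\psi(s)^\top w$ and $Q^\star_M(s,a)=\varphi(s,a)^\top\theta$ by the linear $Q^\star/V^\star$ assumption. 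The crucial observation is that the information term $\E_{\pi\sim p}\KL(M(\cdot|\pi),\bar M(\cdot|\pi))$ admits, by the chain rule for KL, the \emph{same} $\pi$-roll-in under $M$ and decomposes layer by layer into transition and reward divergences at the visited $(s_h,a_h)$. Thus both the suboptimality and the offset are expectations over one identical trajectory distribution, and no change of measure between them is needed.

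Next I would exploit the linear structure to convert the first-order per-layer gap into a discrepancy from the reference. The idea is to insert $\bar M$'s predicted backups into each integrand, so that the summed gap telescopes into an inner product $\inner{X_h(M),\,\Delta_h(M,\bar M)}$ between a $d$-dimensional occupancy feature $X_h(M)$ and a parameter-discrepancy vector $\Delta_h$; this is precisely the bilinear decomposition underlying the linear $Q^\star/V^\star$ class, with bilinear rank $d$. In parallel, Pinsker's inequality (the KL between two laws upper-bounds the squared difference of any bounded statistic, up to its range) shows that the layer-$h$ information term upper-bounds $\E^{\pi}_{M}[(\text{one-step prediction gap})^2]$, i.e. a quadratic form $\inner{\Delta_h,\,\Sigma_h(p)\,\Delta_h}$ in the same discrepancy direction.

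With these two ingredients the bound follows from an optimal-design/elliptical-potential argument. I would choose $p$ so that the feature second-moment matrix it induces is well conditioned in every direction the occupancy features $X_h(M)$ can point (a $G$-optimal design over the $d$-dimensional feature set at each of the $H$ layers). Cauchy–Schwarz then bounds each first-order term $\inner{X_h,\Delta_h}$ by the square root of a coverage factor (of order $d$ per layer under the design) times the square root of the quadratic information term, and the elementary inequality $ab\le \tfrac{1}{\eta}a^2+\tfrac{\eta}{4}b^2$ trades the quadratic part against the $\tfrac{1}{\eta}$-weighted offset. Summing the residual $\tfrac{\eta}{4}\cdot(\text{coverage})$ over the $H$ layers and accounting for the per-layer value range produces the claimed $4\eta dH^2$.

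The main obstacle I anticipate is the second step: passing from the first-order value gap (in $M$'s own parameters, under $M$'s roll-in) to a clean bilinear form in a discrepancy from $\bar M$ that is simultaneously (i) controlled by the KL information term and (ii) coverable by a single design $p$ chosen \emph{before} $M$ is revealed. Concretely, one must verify that the features $X_h(M)$ lie in a fixed $d$-dimensional subspace independent of $M$, so that one design suffices, and that the reference roll-in under $\bar M$ used to build the design is interchangeable with the roll-in under $M$ up to terms already absorbed by the offset. Handling this interchange, together with tracking the horizon factors to land exactly at $H^2$, is where the real work lies.
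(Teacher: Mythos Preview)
Your plan runs into the very obstacle you flag at the end, and the paper resolves it by a different route that you are missing. You propose to fix $\bar M$, choose $p$ by a $G$-optimal design, and then compare any $M\in\bar\calM(\Phi)$ to $\bar M$ via ``inserting $\bar M$'s predicted backups.'' This step does not go through: $\bar M\in\co(\bar\calM(\Phi))$ is a mixture across partitions and therefore need not have linear $Q^\star/V^\star$ structure, so there is no pair $(\bar\theta,\bar w)$ to subtract and obtain a clean bilinear form $\inner{X_h,\,W-W'}$. Moreover, the occupancy features $X_h$ in your regret term are taken under roll-in by the unknown $M$, while any design you can build uses roll-in under $\bar M$ (or under policies alone); swapping these measures costs exactly the KL you are trying to save for the offset, and the bookkeeping does not close at $4\eta dH^2$. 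In short, the ``second step'' you mark as the main obstacle is where your argument breaks, not merely where the work lies.

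The paper's proof avoids this entirely by not working in the $\min_p\max_M$ form. It uses the equivalence $\dec^{\textsc{KL}}_\eta(\bar\calM(\Phi))=\dec^{\textsc{KL}}_\eta(\calM,\Phi)=\max_\rho\max_{\nu}\min_p\air^\Phi_{\rho,\eta}(p,\nu)$ and, with $\nu$ now fixed before $p$, takes $p$ to be the \emph{policy marginal of $\nu$}. By i.i.d.\ symmetry this rewrites the regret as $\E_{\phi,\phi'\sim\nu}\big[V_{M^{\phi'}}(\pi^\phi)-V_{M^{\phi}}(\pi^\phi)\big]$, a value gap between two models that \emph{both} carry linear $Q^\star/V^\star$ parameters; the Bellman residual telescopes to the bilinear form $\inner{X_h(\phi;M^{\phi'}),\,W(\phi)-W(\phi')}$ exactly. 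Cauchy--Schwarz is then applied with the $\nu$-induced second-moment matrix $\Sigma_h^{\phi'}=\E_{\phi\sim\nu}[X_hX_h^\top]$ (no external optimal design is needed), the squared inner product is bounded by a change-of-measure term that is at most $D_{\textsc{TV}}^2\le 2D_{\textsc{H}}^2\le 2\KL$, and the AM--GM step gives $4\eta dH^2$. The conceptual point you are missing is this posterior-sampling-style choice of $p$; without it the design/interchange issues you anticipate do not seem to be fixable within your outline.
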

From \pref{thm: linear QV}, we can get $\sqrt{T}$ regret for linear $Q^\star/V^\star$ MDPs for the first time. Previous works on this setting either only get sample complexity bound \citep{du2021bilinear} or require additional Bellman completeness assumption \citep{jin2021bellman}. Establishing similar results for general bilinear classes or other existing complexity measures for model-free RL is left as future work. We provide more discussions in \pref{sec: discussion}.

\section{Model-Free RL in Adversarial MDPs with Fixed Transition and Full Information}\label{sec: model-free adversarial} 

In this section, we study model-free RL in the hybrid setting with full-information reward feedback. The algorithm used in this section deviates from the general algorithm in \pref{alg:general AIR}, but it is still based on the idea of learning over partitions of $\calM\times \Pi$.  To the best of our knowledge, there is no general value function approximation scheme defined for the hybrid setting in the literature. Therefore, we propose the following framework. 

\begin{definition}[Value function approximation for the hybrid setting]\label{def: func approx}
    Assume for each policy $\pi\in\Pi$, there is a function class $\calF^\pi$. Each function $f\in\calF^\pi$ is a ternary function mapping $\calS \times \calA \times \calR\to \mathbb{R}$. We say a transition $P$ induces $f\in\calF^\pi$ if $Q^{\pi}(s,a;(P,R)) = f(s,a, R)$\footnote{Similar to \cite{foster2024model}, for the sake of simplicity, we adopt a less general definition where ``functions'' only represent $Q$-functions. The results can be readily extended to more general settings where functions may additionally represent e.g. $V$-functions. } for any $s, a, R$. 
    With abuse of notation, for any $\pi\in\Pi$ and $f\in\calF^\pi$, we write $f(\pi; R) = f(s_0, \pi(s_0), R)$, indicating that it is the value of policy $\pi$ predicted by function $f$ under reward $R$. Define $\calF =  \cup_{\pi\in\Pi} \calF^{\pi}$. We assume $f(\pi, R) \in [0,1]$ for any $f, \pi, R$, which matches the assumption on value function $V_M$.
\end{definition}

Compared to standard value function approximation for the stochastic setting (e.g., \cite{du2021bilinear}), the framework in \pref{def: func approx} is different in two ways: 
1) In the usual stochastic setting, it is a full model $M=(P,R)$ that underlies (or induces) a function. In other words, given a model $(P,R)$, there is a unique function $f$ corresponding to it. However, in the hybrid setting, it is a transition $P$ that induces a function. The reward $R$, on the other hand, is an input to the function. This is natural because reward functions change over time and no single function can capture it. 
2) In the usual stochastic setting, the functions only represent the optimal $Q$-functions, while in the hybrid setting, the functions represent $Q$-functions of all policies.  This is also natural because even under a fixed transition, every policy has the possibility to become the optimal policy when the reward can vary. 



In the model-free hybrid setting where the learner is equipped with a function class $\calF$ described in \pref{def: func approx}, we make the partition be  
$\Phi=\left\{ \phi_{\pi, f}: \pi\in\Pi, f\in \calF^\pi\right\}$, where $\phi_{\pi,f}=\{(P, R, \pi): P \text{\ induces\ } f, \,  R \in \mathcal{R}\}$ for any $\pi\in\Pi$ and 
$f\in\calF^\pi$. 
Since every partition corresponds to a single $f$ and policy $\pi$, for any partition $\phi\in\Phi$, we denote its corresponding $f$ as $f^\phi$, and $\pi$ as $\pi^\phi$. 

Note that, unlike the applications in previous sections, here we have a \emph{policy-dependent} partition over models. This is why, in \pref{sec: general frame}, we construct $\Phi$ based on the joint model and policy class rather than solely on the model class.

Next, we further make structural assumptions on the model and function classes. Specifically, we provide a natural extension of the bilinear class \citep{du2021bilinear} to the hybrid setting described in \pref{def: func approx}. 

\begin{definition}[Bilinear class for the hybrid setting]
\label{def:adv bilinear}
A hybrid function class $\calF$ (defined in \pref{def: func approx}) induced by a model class $\calM=\calP\times\calR$ has bilinear rank $d$ if there exists functions $X_h: \Pi \times \mathcal{P} \rightarrow \mathbb{R}^d$ and  $W_h: \mathcal{F} \times \mathcal{R} \times \mathcal{P} \rightarrow \mathbb{R}^d$ for all $h\in[H]$ such that for any $\pi\in\Pi$, $f \in \calF^\pi$, $R \in \mathcal{R}$, and $ P \in \mathcal{P}$, we have
    \begin{align*}
        \left|f(\pi; R) - V_{P, R}(\pi)\right| \leq \sum_{h=1}^H \left|\langle X_h(\pi; P), W_h(f, R; P)\rangle\right|.
    \end{align*}
    Moreover, there exists an estimation policy mapping function $\esttt: \Pi \rightarrow \Pi$, and for every $h \in [H]$, there exists a discrepancy function $\ellest_h: \mathcal{F} \times \calO \times \calR  \rightarrow \mathbb{R}$ such that for any $\pi\in\Pi, f \in \calF^\pi$, and $f'\in \calF$, 
    \begin{align*}
        \left|\langle X_h(\pi; P), W_h(f', R; P) \rangle\right| = \left|\E^{\pi \,\circ_h\, \esttt(\pi),\, P} \left[\ellest_h(f', o_h, R)\right]\right|
    \end{align*}
    where $\mathcal{O}$ is the space of per-step observation $(s,a,s')$, and $o_h = (s_h,a_h,s_{h+1})$. $\E^{\pi, P}[\cdot]$ denotes the expectation over the occupancy measure generated by policy $\pi$ and transition $P$. Let $\pi \circ_h \esttt(\pi)$ denote playing $\pi$ for the first $h-1$ steps and play policy $\esttt(\pi)$\, at the $h$-th step.
\end{definition}
\pref{def:adv bilinear} encompasses numerous interesting instances, such as low-rank MDPs and low-occupancy MDPs \citep{du2021bilinear}. Their learnability in the hybrid setting remains largely unexplored. Further discussion on these specific instances can be found in \pref{app:model-free full proof}.

We will show that MDPs that satisfy \pref{def:adv bilinear} is learnable in the hybrid setting with model-free guarantees. To emphasize more on the idea of partition, we use the partition notation to describe our \pref{alg:MAF}. For partition $\phi = \phi_{\pi, f}$ where $\pi \in \Pi$ and $f \in \calF^{\pi}$, we write $X_h(\phi; P) = X_h(\pi; P)$, $W_h(\phi; P) = W_h(f, R; P)$, $\ellest_h(\phi, o_h, R) 
 = \ellest_h(f, o_h, R)$, and $\pi_{est}^{\phi} = \esttt(\pi)$. 
We define the bilinear divergence
\begin{align*}
    D_{bi}^{\pi}(\phi||P, R) = \sum_{h=1}^H \left(\E^{\pi, P}\left[\ellest_h(\phi, o_h, R)\right]\right)^2.
\end{align*}

\begin{algorithm}[t]
    \caption{Model-free learning for the hybrid setting with full information}
    \label{alg:MAF}
    \textbf{Input:} Partition set $\Phi =  \left\{\phi_{\pi, f}: \pi \in \Pi, f \in \calF^\pi\right\}$,  $p_1(\phi) = \frac{1}{|\Phi|},\, \forall \phi \in \Phi$, 
   epoch length $\tau$, and learning rate $\gamma$, $\eta$. \\
    \For{\textup{\textbf{epoch}} $k=1, 2, \ldots, \frac{T}{\tau}$}{
         Sample $\pi_k  \sim p_k$. \\
         \For{$i=1,\ldots, \tau$}{
         Execute $\pi_k$, obtain trajectory $(o_{k,1}^i, \ldots, o_{k,H}^i)$ where $o_{k,h}^i = (s_{k,h}^i, a_{k,h}^i, s_{k,h+1}^i)$, and observe the full reward function $R_k^i$. 
         }
         
         Define $R_k = \frac{1}{\tau}\sum_{i \in [\tau]} R_k^i$ as the average reward in the $k$-th epoch.
        
         Compute $\rho_{k+1} \in \Delta(\Phi)$ as 
         \begin{align}
             \rho_{k+1}(\phi) \propto \rho_{k}(\phi) \exp\left(\gamma \eta f^{\phi}(\pi^{\phi}; R_k) -  \gamma \sum_{h=1}^H \left(\frac{1}{\tau} \sum_{i \in [\tau]}\ellest_h(\phi, o_{k,h}^i, R_k)\right)^2\right).
             \label{eq:opt posterior}
         \end{align}
        
        
        Solve the following minimax optimization for $p_{k+1}$
        \begin{align}
            p_{k+1} = \min_{p \in \Delta(\Pi)}\max_{(P,R) \in \calM} \E_{\pi' \sim p}\E_{\phi \sim \rho_{k+1}}\left[f^{\phi}(\pi^{\phi}; R) -V_{(P,R)}(\pi') - \frac{1}{8\eta } D_{bi}^{\pi'}\left(\phi||P, R\right)\right].
            \label{eq:optimis minmax}
        \end{align}
        }
\end{algorithm}

For this setting, we devise \pref{alg:MAF}, which is inspired by \cite{foster2024model} who studied the problem in the stochastic setting. The episodes are divided into epochs. At the beginning of each epoch, a reference distribution $\rho \in \Delta(\Phi)$ is selected based on optimistic posterior sampling \citep{zhang2022feel} with estimated bilinear divergence (\pref{eq:opt posterior}). Subsequently, the algorithm computes a behavior policy distribution $p_t$ through a minimax optimization \pref{eq:optimis minmax} analogous to the DEC objective defined in \pref{eq:DEC}. However, unlike the standard DEC that uses the optimal value $V_M(\pi_M)$ as the comparator where $M$ is the worst-case model, \pref{eq:optimis minmax} adopts $f^\phi(\pi^\phi, R)$ as the comparator, where $\phi$ is drawn from the optimistic posterior determined in \pref{eq:opt posterior}. As in \cite{foster2024model}, such change of comparator enables the effective use of optimism to derive improved model-free guarantee. 
With the policy distribution obtained from \pref{eq:optimis minmax}, a policy is sampled from it and executed throughout the entire epoch to gather data for the next iteration. The guarantee of \pref{alg:MAF} is presented in \pref{thm:mf-f}. In \pref{app: odec}, besides proving \pref{thm: bilinear result}, we introduce the optimistic DEC notion, which is an extension from \cite{foster2024model} to the hybrid setting. The result in \pref{thm:mf-f} can thus also be generalized to learnable classes under optimistic DEC.


\begin{theorem}\label{thm: bilinear result}
For the hybrid bilinear class defined in \pref{def:adv bilinear}, \pref{alg:MAF} ensures with probability at least $1-\delta$:
\begin{itemize}
    \setlength{\itemsep}{0pt} 
    \setlength{\parskip}{0pt} 
    \setlength{\topsep}{0pt}
    \item If for all $\pi\in\Pi$, $\esttt(\pi) = \pi$, then $\Reg \le O\big(\sqrt{d\log\left(|\calF||\Pi|/\delta\right)}T^{\frac{3}{4}}\big)$.
    \item If there exists $\pi\in\Pi$ such that $\esttt(\pi) \neq \pi$, then $\Reg \le \order\big(\left(d\log\left(|\calF||\Pi|/\delta\right)\right)^{\frac{1}{3}}T^{\frac{5}{6}}\big)$.
\end{itemize}
\label{thm:mf-f}
\end{theorem}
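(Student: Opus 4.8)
The plan is to follow the optimistic-DEC template: decompose the regret into an \emph{estimation} term driven by the feel-good posterior update in \pref{eq:opt posterior} and a \emph{decision} term driven by the minimax objective in \pref{eq:optimis minmax}, then show that the information gain produced by the former cancels the bilinear divergence appearing in the latter. I work at the level of epochs. Since $V_{(P^\star,R)}(\pi)$ is linear in $R$ and the per-epoch reward is $R_k=\tfrac1\tau\sum_i R_k^i$, the realized regret equals $\tau\sum_{k}\big(V_{(P^\star,R_k)}(\pi^\star)-V_{(P^\star,R_k)}(\pi_k)\big)$ up to an Azuma term accounting for $\pi_k\sim p_k$ (a first source of $\log(1/\delta)$). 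Fix the target partition $\phib=\phi_{\pi^\star,f^\star}$, where $f^\star\in\calF^{\pi^\star}$ is induced by the true transition $P^\star$; then $f^\star(\pi^\star;R)=V_{(P^\star,R)}(\pi^\star)$, and by realizability of $f^\star$ under $P^\star$ we have $W_h(f^\star,R;P^\star)=0$, so that $D_{bi}^{\pi}(\phib\|P^\star,R)=0$ for every $\pi$. Adding and subtracting $\E_{\phi\sim\rho_k}[f^\phi(\pi^\phi;R_k)]$ splits each per-epoch gap into the \emph{estimation} gap $\big(V_{(P^\star,R_k)}(\pi^\star)-\E_{\phi\sim\rho_k}[f^\phi(\pi^\phi;R_k)]\big)$ and the \emph{decision} gap $\big(\E_{\phi\sim\rho_k}[f^\phi(\pi^\phi;R_k)]-V_{(P^\star,R_k)}(\pi_k)\big)$.

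For the decision gap I would invoke the minimax problem in \pref{eq:optimis minmax} defining $p_k$. Evaluating its value at the true model $(P^\star,R_k)$, a feasible inner choice, bounds $\E_{\pi_k\sim p_k}\big[\E_{\phi\sim\rho_k}[f^\phi(\pi^\phi;R_k)]-V_{(P^\star,R_k)}(\pi_k)\big]$ by the optimistic DEC $\odec_\eta$ of the $\Phi$-aware convexified class $\bar\calM(\Phi)$ plus $\tfrac{1}{8\eta}\E_{\pi_k\sim p_k}\E_{\phi\sim\rho_k}[D_{bi}^{\pi_k}(\phi\|P^\star,R_k)]$. The ODEC itself I would bound by $O(\eta dH^2)$ along the lines of the proof of \pref{thm: linear QV}: expand the value gap via \pref{def:adv bilinear} as $\sum_h|\langle X_h,W_h\rangle|$, apply AM--GM to trade each inner product against $\eta\|X_h\|^2$ and $\tfrac1\eta(\E^{\cdot}[\ellest_h])^2$, and sum the $d$-dimensional quadratic forms over $h$ using the bilinear rank; the $(\E[\ellest_h])^2$ pieces are exactly $D_{bi}$, which is why the coefficient $\tfrac{1}{8\eta}$ is chosen with slack.

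For the estimation gap I would read \pref{eq:opt posterior} as exponential weights over $\Phi$ with per-epoch gain $g_k(\phi)=\eta f^\phi(\pi^\phi;R_k)-\sum_h\widehat\ell_{k,h}(\phi)^2$, where $\widehat\ell_{k,h}(\phi)=\tfrac1\tau\sum_i\ellest_h(\phi,o_{k,h}^i,R_k)$. The log-partition (feel-good) potential bound, compared against $\phib$, yields $\eta\sum_k\big(V_{(P^\star,R_k)}(\pi^\star)-\E_{\phi\sim\rho_k}[f^\phi(\pi^\phi;R_k)]\big)\le \tfrac{\log|\Phi|}{\gamma}+\text{(stability)}+\sum_k\sum_h\widehat\ell_{k,h}(\phib)^2-\sum_k\E_{\phi\sim\rho_k}\big[\sum_h\widehat\ell_{k,h}(\phi)^2\big]$, where $\log|\Phi|=\log(|\calF||\Pi|)$ supplies the stated logarithmic factor. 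The key remaining step is a uniform-over-$\phi$ concentration relating $\widehat\ell_{k,h}(\phi)^2$ to the true squared discrepancy. When $\esttt(\pi)=\pi$, rollouts of $\pi_k$ sample precisely the occupancy $\E^{\pi_k,P^\star}$ entering $D_{bi}$, so $\sum_h\widehat\ell_{k,h}(\phi)^2$ estimates $D_{bi}^{\pi_k}(\phi\|P^\star,R_k)$ with bias only $O(H/\tau)$; the negative information gain $-\tfrac1\eta\sum_k\E_{\phi\sim\rho_k}[\sum_h\widehat\ell_{k,h}(\phi)^2]$ then dominates the $+\tfrac{1}{8\eta}$ divergence from the decision gap and is dropped, while $\sum_k\sum_h\widehat\ell_{k,h}(\phib)^2=O\big(KH\log(|\Phi|/\delta)/\tau\big)$ since $\phib$ has zero true discrepancy. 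Collecting $\tfrac{\tau\log|\Phi|}{\eta}$, $T\eta dH^2$, and $\tfrac{TH\log(|\Phi|/\delta)}{\eta\tau}$ and optimizing over $\eta$ and $\tau$ (with $\tau\asymp\sqrt T$) yields the $\sqrt{d\log(|\calF||\Pi|/\delta)}\,T^{3/4}$ bound.

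The case $\esttt(\pi)\neq\pi$ is where I expect the main obstacle to be. Now the discrepancy lives under the occupancy of $\pi\circ_h\esttt(\pi)$, which the behavior rollouts of $\pi_k$ do not sample; estimating it requires dedicating rollouts to the $H$ exploration policies $\pi_k\circ_h\esttt(\pi_k)$, which reduces the effective sample size per coordinate and injects exploration regret, and it breaks the clean one-sided cancellation above. Quantitatively, the benign $O(H/\tau)$ bias is replaced by an approximation error of order $\tfrac1\eta\sqrt{\log(|\Phi|/\delta)/\tau}$ per epoch, so the governing terms become $\tfrac{\tau\log|\Phi|}{\eta}$, $T\eta dH^2$, and $\tfrac{T\sqrt{\log(|\Phi|/\delta)}}{\eta\sqrt\tau}$; balancing over $\eta$ and $\tau$ degrades the rate to $\big(d\log(|\calF||\Pi|/\delta)\big)^{1/3}T^{5/6}$. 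In both cases the high-probability statement follows by replacing each expectation over $\pi_k\sim p_k$ and over the within-epoch trajectories with its empirical counterpart via Freedman/Azuma concentration and a union bound over $\Phi$, which is the source of the $\log(|\calF||\Pi|/\delta)$ dependence.
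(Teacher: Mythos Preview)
Your decomposition into estimation and decision terms is correct and matches the paper, but you have inverted the diagnosis of the $\esttt(\pi)\neq\pi$ case. In the paper's proof the concentration step is \emph{identical} in both cases: the empirical squared discrepancy $\sum_h\widehat\ell_{k,h}(\phi)^2$ always estimates $D_{bi}^{\pi_k}(\phi\|P^\star,R_k)$ (Hoeffding over the $\tau$ samples drawn under $\pi_k$), because $D_{bi}^{\pi}$ is by definition under the occupancy of the \emph{actual} behavior policy, whatever it is. There is no ``$\tfrac{1}{\eta}\sqrt{\log(|\Phi|/\delta)/\tau}$'' approximation term replacing the $O(H/\tau)$ bias; \pref{eq:conc1}--\pref{eq:conc2} are the same in both regimes.

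What actually changes is the ODEC bound (the paper's \pref{lem:odec-bound}). The bilinear estimation identity in \pref{def:adv bilinear} puts $\langle X_h(\phi';P),W_h(\phi,R;P)\rangle$ under the occupancy of $\pi^{\phi'}\!\circ_h\!\esttt(\pi^{\phi'})$, but the divergence penalty in \pref{eq:optimis minmax} is $D_{bi}^{\pi'}$ for $\pi'\sim p$. When $\esttt=\mathrm{id}$ these coincide for $p$ the marginal of $\rho$, and the ODEC is $O(\eta dH)$. When $\esttt\neq\mathrm{id}$, the paper takes $p$ to sample $\phi'\sim\rho$ and play an $\alpha$-perturbed policy that replaces $\pi^{\phi'}$ by $\esttt(\pi^{\phi'})$ at each step with probability $\alpha/H$; this gives $\E_{\pi'\sim p}[D_{bi}^{\pi'}]\ge\tfrac{\alpha}{2H}\cdot(\text{needed squared term})$ at the cost of $\alpha$ in value, so $\text{ODEC}\le\alpha+\tfrac{\eta dH^2}{2\alpha}=O(H\sqrt{\eta d})$. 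It is this change from linear to square-root scaling in $\eta$ that, after balancing against the same $\tfrac{1}{\eta\tau}$ concentration term, yields $T^{5/6}$ instead of $T^{3/4}$.

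A smaller issue: your ODEC sketch ``apply AM--GM to trade $|\langle X_h,W_h\rangle|$ against $\eta\|X_h\|^2$ and $\tfrac1\eta(\E[\ellest_h])^2$'' is not a valid inequality as stated, since $(\E[\ellest_h])^2=\langle X_h,W_h\rangle^2$ and you cannot bound $|z|$ by $\eta c+\tfrac1\eta z^2$ with $c$ independent of $W_h$. The paper's argument first fixes $p$ as the marginal of $\rho$, applies Cauchy--Schwarz with $\Sigma_h=\E_{\phi\sim\rho}[X_h X_h^\top]$ to get $\E_\phi|\langle X_h,W_h\rangle|\le\sqrt{d}\sqrt{\E_{\phi,\phi'}[\langle X_h(\phi'),W_h(\phi)\rangle^2]}$, and only then uses AM--GM between $\sqrt{dH\cdot D}$ and $\tfrac1\eta D$.
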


\section{Discussions and Future Work}\label{sec: discussion}
In this work, we showed that the general framework developed in \pref{sec: general frame} is able to characterize the statistical complexity of hybrid MDPs with fixed transitions and adversarial rewards, and leads to near-optimal regret under model-based learning (\pref{sec: model-based}). 

The framework also extends to model-free value learning, as shown in \pref{sec: stochastic model-free} and \pref{sec: model-free adversarial}. However, additional challenges arise as it does not fully recover the result for the general bilinear class in the stochastic regime, and requires modifying the objective to an alternative DEC formulation in the hybrid regime. The latter poses a barrier to extending the framework to the model-free bandit feedback case. One future direction would be to more deeply investigate the role of the divergence measure in $\air^\Phi$ and see if there are better alternatives for model-free value learning. 

We believe that the framework has the potential to characterize the complexity of more settings that lie between model learning and policy learning.
Another avenue for future work is leveraging the framework we provide to derive efficient algorithms that do not have to maintain memory expensive priors or solve computationally complex saddle-point problems.


\bibliography{ref}

\appendix


\clearpage

\section{Proof of \pref{thm: general}}
\begin{proof}\textbf{of \pref{thm: general}}
    \begin{align*}
       &\E[\Reg(\pi_{1:T}^\star)] - \frac{\log|\Phi|}{\eta} \\
       &\leq \E\left[\sum_{t=1}^T \left(V_{M_t}(\pi_t^\star) -  V_{M_t}(\pi_t) - \frac{1}{\eta}\log\frac{\rho_{t+1}(\phi^\star)}{\rho_t(\phi^\star)} \right)\right] \\
       &= \E\left[\sum_{t=1}^T  \E_{\pi\sim p_t} \E_{o\sim M_t(\cdot|\pi)}\left[ V_{M_t}(\pi_t^\star) -  V_{M_t}(\pi) - \frac{1}{\eta} \log\frac{\nu_t(\phi^\star|\pi, o)}{\rho_t(\phi^\star)} \right]\right] \\
       \\&\leq \E\left[\sum_{t=1}^T  \max_{\mu \in \Delta(\Psi)}\E_{\phi^\star \sim \mu}\E_{(M, \pi^\star) \sim \mu(\cdot|\phi^\star)}\E_{\pi\sim p_t} \E_{o\sim M(\cdot|\pi)}\left[ V_{M}(\pi^\star) -  V_{M}(\pi) - \frac{1}{\eta} \log\frac{\nu_t(\phi^\star|\pi, o)}{\rho_t(\phi^\star)} \right]\right]
       \\&= \E\left[ \sum_{t=1}^T  \max_{\nu \in \Delta(\Psi)}\E_{\pi\sim p_t} \E_{(M,\pi^\star)\sim \nu} \E_{o\sim M(\cdot|\pi)}\left[ V_{M}(\pi^\star) -  V_{M}(\pi) - \frac{1}{\eta} \KL\left( \nu_{\bo{\phi}}(\cdot|\pi, o), \rho_t\right) \right]\right]  \tag{\pref{lem: max posterior}}   \\
       &= \E\left[ \sum_{t=1}^T  \min_{p\in\Delta(\Pi)}\max_{\nu\in\Delta(\joint)} \E_{\pi\sim p} \E_{(M,\pi^\star)\sim \nu} \E_{o\sim M(\cdot|\pi)}\left[ V_{M}(\pi^\star) -  V_{M}(\pi) - \frac{1}{\eta} \KL\left( \nu_{\bo{\phi}}(\cdot|\pi, o), \rho_t\right) \right]\right] \\
       &= \E\left[ \sum_{t=1}^T  \min_{p\in\Delta(\Pi)}\max_{\nu\in\Delta(\joint)}  
\air^\Phi_{\rho_t,\eta}(p,\nu) \right] \\
       &= \E\left[ \sum_{t=1}^T  \max_{\nu\in\Delta(\joint)}\min_{p\in\Delta(\Pi)}  
\air^\Phi_{\rho_t,\eta}(p,\nu) \right]   \tag{$\air^\Phi_{\rho_t,\eta}(p,\nu)$ is convex in $p$ and concave in $\nu$ and \pref{lem:minmax}}  \\
    &\leq  T \max_{\rho \in \Delta(\Phi)}\max_{\nu\in\Delta(\joint)}\min_{p\in\Delta(\Pi)}  
\air^\Phi_{\rho,\eta}(p,\nu). 
    \end{align*}
\end{proof}

\section{Relation with DEC}
We introduce a useful notion of a $\dec$ like complexity that equivalently describes the complexity of $\air^{\Phi}$.

\begin{definition} The $\Phi$-dependent DEC is defined as
\begin{align*}
&\dec_{\eta}^{\textsc{KL}}\left(\calM, \Phi\right) = 
\\&\max_{\Bar{M} \in \textsc{co}(\calM)}\min_{p\in\Delta(\Pi)}\max_{\phi\in\Phi} \max_{\nu\in \Delta(\phi)}   \E_{\pi\sim p} \E_{\phi\sim \nu}   \left[\E_{(M,\pi^\star)\sim \nu}[V_{M}(\pi^\star)] - V_{M^\nu}(\pi) - \frac{1}{\eta}  \KL\left( M^{\nu}(\cdot|\pi), \Bar{M}(\cdot|\pi) \right)\right]. 
\end{align*}
\end{definition}

We first establish that the maxmin of $\air^\Phi$ is bounded by the new notion of DEC.
\begin{lemma} \label{lem: equivalence of dec}$\displaystyle\max_{\rho \in\Delta(\Phi)}\max_{\nu\in\Delta(\joint)}\min_{p\in\Delta(\Pi)} \air^\Phi_{\rho,\eta}(p,\nu) = \dec^{\textsc{KL}}_\eta (\calM, \Phi)$. 
%
\end{lemma}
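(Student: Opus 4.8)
The plan is to prove the identity by a chain of reductions that turns the posterior-based regularizer in $\air^\Phi$ into a model-based $\KL$ divergence against a convexified reference model, which is exactly the form appearing in $\dec^{\textsc{KL}}_\eta(\calM,\Phi)$. Since the two outer operators in $\max_{\rho}\max_{\nu}\min_p$ are both maxima, I would first reorder them to $\max_{\nu}\max_{\rho}\min_p$ and evaluate the inner $\max_\rho$. For fixed $\nu$, the reference distribution enters only through $-\frac1\eta\E_{\pi\sim p}\E_{(M,\pi^\star)\sim\nu}\E_{o}\KL(\nu_{\bo{\phi}}(\cdot|\pi,o),\rho)$, and the minimizer over $\rho$ of this averaged $\KL$ is the Bayesian average of the posteriors, i.e.\ the prior marginal $\nu_{\bo{\phi}}$, which does not depend on $p$. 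Hence the optimal $\rho$ is simultaneously optimal for every $p$, giving $\max_\rho\min_p\air^\Phi=\min_p\air^\Phi|_{\rho=\nu_{\bo{\phi}}}$ (essentially \pref{lem: max posterior}). After this substitution the regularizer becomes a conditional mutual information, and I would rewrite it via the standard identity $\E_{\pi}\E_{o}\KL(\nu_{\bo{\phi}}(\cdot|\pi,o),\nu_{\bo{\phi}})=\E_\pi\E_{\phi\sim\nu}\KL(M^{\nu(\cdot|\phi)}(\cdot|\pi),M^{\nu}(\cdot|\pi))$, where $M^{\nu(\cdot|\phi)}$ and $M^{\nu}$ are the predictive mixtures conditioned on $\phi$ and marginally.

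The conceptual key is how to reintroduce a reference model $\bar M\in\co(\calM)$. I would use the variational characterization of mutual information: $\E_{\phi\sim\nu}\KL(M^{\nu(\cdot|\phi)}(\cdot|\pi),M^{\nu}(\cdot|\pi))=\min_{\bar M}\E_{\phi\sim\nu}\KL(M^{\nu(\cdot|\phi)}(\cdot|\pi),\bar M(\cdot|\pi))$, attained at $\bar M=M^\nu\in\co(\calM)$, which is just the Bayes decomposition $\E_\phi\KL(p_\phi,\bar M)=\E_\phi\KL(p_\phi,\bar p)+\KL(\bar p,\bar M)$ with $\bar p=M^\nu$. Because the mutual-information term carries a negative sign, this $\min_{\bar M}$ becomes a $\max_{\bar M}$, so the objective rewrites as $\max_{\bar M\in\co(\calM)}G(\nu,p,\bar M)$, where $G$ is the expected regret minus $\frac1\eta\E_{\phi\sim\nu}\KL(M^{\nu(\cdot|\phi)}(\cdot|\pi),\bar M(\cdot|\pi))$. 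This reformulation is precisely what makes the subsequent minimax swaps legitimate: for \emph{fixed} $\bar M$, the penalty is a $\KL$ between two distributions on $(\phi,o)$ that are both linear in $\nu$, hence convex in $\nu$, so $G$ is concave in $\nu$; it is also concave in $\bar M$ (convexity of $\KL$ in its second argument) and affine in $p$.

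With these structural properties I would perform the order exchanges via \pref{lem:minmax} (finiteness of $\calM,\Pi$ guarantees the needed compactness and continuity). Starting from $\max_\nu\min_p\max_{\bar M}G$, swap $\min_p$ with $\max_{\bar M}$, reorder the two maxima to bring $\max_{\bar M}$ outermost, and then, for each fixed $\bar M$, swap $\max_\nu$ with $\min_p$, arriving at $\max_{\bar M}\min_p\max_{\nu\in\Delta(\joint)}G$. The final step reconciles the fact that $\air^\Phi$ maximizes over mixtures $\nu\in\Delta(\joint)$ whereas $\dec^{\textsc{KL}}_\eta(\calM,\Phi)$ maximizes over $\nu$ supported on a single partition: since $G$ is the $\nu$-average $\sum_\phi\nu(\phi)[\cdots]$ of terms depending only on the conditional $\nu(\cdot|\phi)$, the maximizing $\nu$ concentrates its mass on the single best partition, so $\max_{\nu\in\Delta(\joint)}G=\max_{\phi}\max_{\nu\in\Delta(\phi)}[\cdots]$, which is exactly the definition of $\dec^{\textsc{KL}}_\eta(\calM,\Phi)$.

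The main obstacle I anticipate is the concavity bookkeeping around the mutual-information term. A naive attempt to keep the regularizer as $\E_{\phi\sim\nu}\KL(M^{\nu(\cdot|\phi)},M^{\nu})$ and push $\min_p$ past it fails, because mutual information is neither convex nor concave in $\nu$ (the coupling $\nu(\phi)\,M^{\nu}(o|\pi)$ is bilinear). The resolution, and the crux of the argument, is to expose $\bar M$ through the variational form \emph{before} any swap, so that every intermediate objective is jointly concave/convex in the relevant pair of variables and \pref{lem:minmax} applies cleanly; checking the compactness and semicontinuity hypotheses at each swap is then a routine but necessary verification.
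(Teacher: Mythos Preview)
Your proposal is correct and follows essentially the same route as the paper: eliminate $\rho$ by observing that the optimal reference is the marginal $\nu_{\bo\phi}$, rewrite the regularizer as a mutual information $\E_{\phi}\KL(M^{\nu(\cdot|\phi)},M^\nu)$, reintroduce a reference model $\bar M$ via the variational identity, and then apply \pref{lem:minmax} repeatedly using the joint-convexity-of-$\KL$ observation to push $\max_{\bar M}$ outermost and finally collapse $\max_{\nu\in\Delta(\joint)}$ to $\max_{\phi}\max_{\nu\in\Delta(\phi)}$. The only cosmetic differences are that the paper invokes Sion for the $\rho$ step (using joint concavity in $(\nu,\rho)$) rather than your direct ``$\rho^\star$ is $p$-independent'' argument, and it orders the later swaps as $\min_p\max_{\nu}\max_{\bar M}\to\max_{\bar M}\min_p\max_\nu$ via joint concavity in $(\nu,\bar M)$ instead of your $\max_\nu\min_p\max_{\bar M}\to\max_{\bar M}\min_p\max_\nu$ sequence.
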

\begin{proof}\textbf{of \pref{lem: equivalence of dec}\ \ }
\begin{align*}
    \air^\Phi_{\rho,\eta}(p,\nu) &= \E_{\pi\sim p} \E_{(M,\pi^\star)\sim \nu} \E_{o\sim M(\cdot|\pi)}\left[ V_{M}(\pi^\star) -  V_{M}(\pi) - \frac{1}{\eta} \KL\left( \nu_{\bo{\phi}}(\cdot|\pi, o), \rho\right) \right] 
        \\
        &= \E_{\pi\sim p} \E_{(M,\pi^\star)\sim \nu} \E_{o\sim M(\cdot|\pi)}\left[ V_{M}(\pi^\star) -  V_{M}(\pi) - \frac{1}{\eta} \KL\left( \nu_{\bo{\phi}}(\cdot|\pi, o), \nu_{\bo{\phi}}\right) - \frac{1}{\eta}\KL\left( \nu_{\bo{\phi}}, \rho\right)\right]
\end{align*}
is jointly concave over $(\nu,\rho)$, hence by \pref{lem:minmax}
\begin{align*}
    &\max_{\rho \in \Delta(\Phi)}\max_{\nu\in \Delta(\Psi)} \min_{p\in\Delta(\Pi)}\air^\Phi_{\rho,\eta}(p,\nu)
    =\min_{p\in\Delta(\Pi)}\max_{\nu\in \Delta(\Psi)}\max_{\rho \in \Delta(\Phi)} \air^\Phi_{\rho,\eta}(p,\nu)\,.
\end{align*}
Let's assume $\nu$ and $\pi$ are fixed, we have for the $\rho$-dependent part
\begin{align*}
    \max_{\rho \in \Delta(\Phi)} \E_{(M,\pi^\star)\sim \nu} \E_{o\sim M(\cdot|\pi)}\left[-\KL\left( \nu_{\bo{\phi}}, \rho\right)\right]=0\,,
\end{align*}
with the argmax at $\rho=\nu_{\bo{\phi}}$, hence
\begin{align*}
    &\max_{\rho \in \Delta(\Phi)} \air^\Phi_{\rho,\eta}(p,\nu)=\air^\Phi_{\nu_{\bo{\phi}},\eta}(p,\nu)\,.
\end{align*}
We continue with the KL term in this expression. In an overload of notation, let $\nu$ conditioned on $\pi$ define a distribution over $\Delta(\Psi)\times\calO$ via $\nu(o,M,\pi^\star|\pi)=\nu(M,\pi^\star)M(o|\pi)$.
    Note that for fixed $\pi$, 
    \begin{align*}
        \E_{(M,\pi^\star)\sim \nu}\E_{o\sim M(\cdot|\pi)} \KL(\nu_{\bo{\phi}}(\cdot|\pi, o), \nu_{\bo{\phi}})  
        &=\E_{o\sim \nu_{\bo{o}}(\cdot|\pi)} \KL(\nu_{\bo{\phi}}(\cdot|\pi, o), \nu_{\bo{\phi}})  \\
        &=\E_{o\sim \nu_{\bo{o}}(\cdot|\pi)}\E_{\phi\sim\nu_{\bo{\phi}}(\cdot|\pi, o)} \log\left(\frac{\nu(\phi|\pi, o)}{\nu(\phi)}\right)  \\
        &=\E_{\phi\sim \nu_{\bo{\phi}}}\E_{o\sim\nu_{\bo{o}}(\cdot|\phi,\pi)} \log\left(\frac{\nu(\phi|\pi, o)}{\nu(\phi)}\right)  \\
        &=\E_{\phi\sim \nu_{\bo{\phi}}}\E_{o\sim\nu_{\bo{o}}(\cdot|\phi,\pi)} \log\left(\frac{\nu(o|\phi,\pi)}{\nu(o|\pi)}\right)  \tag{Bayes-rule}\\
        &= \E_{\phi\sim \nu} \KL\left( \nu_{\bo{o}}(\cdot|\phi,\pi),  \nu_{\bo{o}}(\cdot|\pi) \right)\,.
    \end{align*}
    We now consider $\mu\in\Delta(\calM)$, which again is overloaded to denote a distribution over $\Delta(\calM)\times\calO$.
    \begin{align*}
    -\E_{\phi\sim \nu} \KL\left( \nu_{\bo{o}}(\cdot|\phi,\pi),  \nu_{\bo{o}}(\cdot|\pi) \right) &= \max_{\mu\in\Delta(\calM)} \E_{\phi\sim \nu} \left[-\KL\left( \nu_{\bo{o}}(\cdot|\phi,\pi),  \nu_{\bo{o}}(\cdot|\pi) \right)-\KL\left( \nu_{\bo{o}}(\cdot|\pi),\mu_{\bo{o}}(\cdot|\pi) \right)\right]\\
    &=\max_{\mu\in\Delta(\calM)} \E_{\phi\sim \nu} \left[-\KL\left( \nu_{\bo{o}}(\cdot|\phi,\pi),  \mu_{\bo{o}}(\cdot|\pi) \right)\right]
    \end{align*}
Combining everything, we have shown that
\begin{align*}
    &\max_{\rho \in \Delta(\Phi)}\max_{\nu\in \Delta(\Psi)} \min_{p\in\Delta(\Pi)}\air^\Phi_{\rho,\eta}(p,\nu) \\
    &= \min_{p\in\Delta(\Pi)} \max_{\nu\in \Delta(\Psi)}\max_{\mu\in\Delta(\calM)}\E_{\pi\sim p} \E_{(M,\pi^\star)\sim \nu} \E_{o\sim M(\cdot|\pi)}\left[ V_{M}(\pi^\star) -  V_{M}(\pi) - \frac{1}{\eta}\KL\left( \nu_{\bo{o}}(\cdot|\phi,\pi),  \mu_{\bo{o}}(\cdot|\pi) \right)\right]\,.
\end{align*}
Again this is jointly concave over $(\nu,\mu)$ and convex concave in $(p,\nu)$ given $\mu$, hence by \pref{lem:minmax}
\begin{align*}
    &= \max_{\mu\in\Delta(\calM)}\min_{p\in\Delta(\Pi)} \max_{\nu\in \Delta(\Psi)}\E_{\pi\sim p} \E_{(M,\pi^\star)\sim \nu} \E_{o\sim M(\cdot|\pi)}\left[ V_{M}(\pi^\star) -  V_{M}(\pi) - \frac{1}{\eta}\KL\left( \nu_{\bo{o}}(\cdot|\phi,\pi),  \mu_{\bo{o}}(\cdot|\pi) \right)\right]\\
    &= \max_{\bar M\in\co(\calM)}\min_{p\in\Delta(\Pi)} \max_{\nu\in \Delta(\Psi)}\E_{\pi\sim p} \E_{(M,\pi^\star)\sim \nu} \E_{o\sim M(\cdot|\pi)}\left[ V_{M}(\pi^\star) -  V_{M}(\pi) - \frac{1}{\eta}\KL\left( M^{\nu(\cdot|\phi)}(\cdot|\pi),  \bar M(\cdot|\pi) \right)\right]\\
    &= \dec_{\eta}^{\textsc{KL}}\left(\calM, \Phi\right)\,.
\end{align*}
\end{proof}

\begin{proof}\textbf{of \pref{lem: air by dec}\ \ }
By \pref{lem: equivalence of dec}, we only need to compare $\dec^{\textrm{KL}}_\eta(\calM, \Phi)$ and $\dec^{\textrm{KL}}_\eta(\bar{\calM}(\Phi))$. 
Define 
    \begin{align*}
        C(\Phi) := \max_{\phi\in\Phi}\max_{\nu\in\Delta(\phi)}\E_{(M,\pi^\star)\sim\nu}[V_M(\pi^\star)]-V_{M^\nu}(\pi_{M^\nu})\,,
    \end{align*}
where $M^\nu$ is the mixture model $M^\nu(\cdot|\pi) := \E_{M\sim \nu}[M(\cdot|\pi)]$, and $\pi_{M^\nu} := \argmax_{\pi\in\Pi} V_{M^\nu}(\pi)$. 
In the fixed-comparator case, we have
\begin{align*}
&\max_{\phi\in\Phi}\max_{\nu\in\Delta(\phi)}\E_{(M,\pi^\star)\sim\nu}[V_M(\pi^\star)]-V_{M^\nu}(\pi_{M^\nu})\\
&=\max_{\phi\in\Phi}\max_{\nu\in\Delta(\phi)}V_{M^\nu}(\pi^\phi)-V_{M^\nu}(\pi_{M^\nu})\leq 0\,.
\end{align*}
We have
\begin{align*}
    &\max_{\phi\in\Phi} \max_{\nu\in \Delta(\phi)}   \E_{\pi\sim p} \E_{\phi\sim \nu}   \left[\E_{(M,\pi^\star)\sim \nu}[V_{M}(\pi^\star)] - V_{M^\nu}(\pi) - \frac{1}{\eta}  \KL\left( M^{\nu}(\cdot|\pi), \Bar{M}(\cdot|\pi) \right)\right]\\
    &\leq \max_{\phi\in\Phi} \max_{\nu\in \Delta(\phi)}   \E_{\pi\sim p} \E_{\phi\sim \nu}   \left[V_{M^\nu}(\pi_{M^\nu}) - V_{M^\nu}(\pi) - \frac{1}{\eta}  \KL\left( M^{\nu}(\cdot|\pi), \Bar{M}(\cdot|\pi) \right)\right] + C(\Phi)\\
    &=\max_{M^\nu\in\bar\calM(\Phi)}  \E_{\pi\sim p} \E_{\phi\sim \nu}   \left[V_{M^\nu}(\pi_{M^\nu}) - V_{M^\nu}(\pi) - \frac{1}{\eta}  \KL\left( M^{\nu}(\cdot|\pi), \Bar{M}(\cdot|\pi) \right)\right] + C(\Phi)
\end{align*}
Hence
\begin{align*}
    \dec^{\textsc{kl}}_\eta(\calM,\Phi)\leq \dec^{\textsc{kl}}_\eta(\bar\calM(\Phi)) + C(\Phi)\,. 
\end{align*}
If $C(\Phi)\leq 0$, which includes all fixed-comparator environments, this implies the Lemma.

Finally, assume $C(\Phi)>0$. We show that these environments are unlearnable and that $\dec^{\textsc{kl}}_\eta(\calM,\Phi)=\Omega(1)$ independent of $\eta$.
Fix $(\phi', \nu') = \argmax_{\phi\in\Phi,\nu\in\Delta(\phi)}\E_{(M,\pi^\star)\sim\nu}[V_M(\pi^\star)]-V_{M^\nu}(\pi_{M^\nu})$.
\begin{align*}
    &\dec^{\textsc{kl}}_\eta(\calM,\Phi)\\
    &\geq \min_{p\in\Delta(\Pi)}\max_{\phi\in\Phi} \max_{\nu\in \Delta(\phi)}   \E_{\pi\sim p} \E_{\phi\sim \nu}   \left[\E_{(M,\pi^\star)\sim \nu}[V_{M}(\pi^\star)] - V_{M^\nu}(\pi) - \frac{1}{\eta}  \KL\left( M^{\nu}(\cdot|\pi), M^{\nu'}(\cdot|\pi) \right)\right]\\
    &\geq \min_{p\in\Delta(\Pi)}\E_{\pi\sim p} \left[\E_{(M,\pi^\star)\sim \nu'}[V_{M}(\pi^\star)] - V_{M^{\nu'}}(\pi) - \frac{1}{\eta}  \KL\left( M^{\nu'}(\cdot|\pi), M^{\nu'}(\cdot|\pi) \right)\right]\\
    &=C(\Phi)=\Omega(1)\,.
\end{align*}
To show that the problem is indeed unlearnable, let the environment sample i.i.d. $(M_t,\pi^\star_t)\sim\nu'$ in every round. For any policy of the agent, the expected regret in every round is at least $C(\Phi)$, so the expected regret is lower bounded by $C(\Phi)T$.

\end{proof}
\begin{proof}\textbf{of \pref{lem: dec equiv}\ \ }
Using \pref{lem: equivalence of dec}, we only need to compare the two notions of $\dec$.
Recall we consider the case where $\Phi=\Theta\times\Pi$.
    We have for fixed $p$ and $\bar M$
    \begin{align*}
    &\max_{\nu\in \Delta(\Psi)}   \E_{\pi\sim p} \E_{\phi\sim \nu}  \E_{(M,\pi^\star)\sim \nu(\cdot|\phi)} \left[V_{M}(\pi^\star) - V_{M}(\pi) - \frac{1}{\eta}  \KL\left( M^{\nu(\cdot|\phi)}(\cdot|\pi), \Bar{M}(\cdot|\pi) \right)\right]\\
    &=\max_{\theta\in\Theta}\max_{\nu\in \Delta(\theta)}\max_{\mu\in\Delta(\Pi)}   \E_{\pi\sim p}  \E_{(M,\pi^\star)\sim (\nu,\mu)} \left[V_{M}(\pi^\star) - V_{M}(\pi) - \frac{1}{\eta}  \KL\left( M^{\nu}(\cdot|\pi), \Bar{M}(\cdot|\pi) \right)\right]\\
    &=\max_{\theta\in\Theta}\max_{\nu\in \Delta(\theta)}\E_{\pi\sim p}  \E_{M\sim \nu} \left[V_{M}(\pi_{M^\nu}) - V_{M}(\pi) - \frac{1}{\eta}  \KL\left( M^{\nu}(\cdot|\pi), \Bar{M}(\cdot|\pi) \right)\right]\\
    &=\max_{M^\nu\in\bar\calM(\Phi)}\E_{\pi\sim p}  \left[V_{M^\nu}(\pi_{M^\nu}) - V_{M^\nu}(\pi) - \frac{1}{\eta}  \KL\left( M^{\nu}(\cdot|\pi), \Bar{M}(\cdot|\pi) \right)\right]\,.
    \end{align*}
    Since the first and last line are equal, taking the $\max$ over $\bar M$ and $\min$ over $p$ is still equal, which shows the Lemma.
\end{proof}

\begin{lemma}\label{lem: trimming}
Let $\Phi$ describe a trimmed and aggregated stochastic game, i.e. for all $\phi\in\Theta$ we have that $\phi=\{(M,\phi^\star_\theta) | M\in \calM_\phi\}$ and such that $\pi_{M}=\pi_{\theta}$ for all $M\in\calM_{\phi}$.

Further denote $\bar\Phi$ the unaggregated version, 
$\bar\Phi := \{\{(M,\pi)|M\in\calM_{\phi}\}|\phi\in\Phi,\pi\in\Pi\}$.
Then
$\dec_{\eta}^{\textsc{kl}}(\Phi)=\dec_{\eta}^{\textsc{kl}}(\bar\Phi)$ and $\bar\calM(\Phi)=\bar\calM(\bar\Phi)$.
\end{lemma}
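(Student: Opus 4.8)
The plan is to show that both $\Phi$-dependent DECs collapse to the \emph{same} standard offset DEC over the common convexified class, so that the two claimed equalities fall out together. Throughout I read $\dec^{\textsc{KL}}_\eta(\Phi)$ as the $\Phi$-dependent DEC $\dec^{\textsc{KL}}_\eta(\calM,\Phi)$, which by \pref{lem: equivalence of dec} equals the maxmin value of $\air^\Phi$.

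First I would dispose of the convexification identity, which is immediate: by definition $\bar\calM(\cdot)$ depends on a partition only through the family of model-sets of its blocks, and the comparators attached to those blocks are irrelevant. Every block $\{(M,\pi_\theta):M\in\calM_\phi\}$ of $\Phi$ has model-set $\calM_\phi$, and every block $\{(M,\pi):M\in\calM_\phi\}$ of $\bar\Phi$ has the same model-set $\calM_\phi$; taking the union of $\co(\calM_\phi)$ over blocks (the extra union over $\pi\in\Pi$ in the case of $\bar\Phi$ changing nothing) gives $\bar\calM(\Phi)=\bigcup_\phi\co(\calM_\phi)=\bar\calM(\bar\Phi)$.

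For the DEC identity, $\bar\Phi=\{\calM_\phi\}_\phi\times\Pi$ is a fixed-comparator game in which the model-group is chosen independently of the comparator (the groups $\calM_\phi$ are disjoint because the blocks of $\Phi$ are, so $\{\calM_\phi\}_\phi$ is a genuine partition of $\calM$). Hence \pref{lem: dec equiv} applies verbatim and gives $\dec^{\textsc{KL}}_\eta(\bar\Phi)=\dec^{\textsc{KL}}_\eta(\bar\calM(\bar\Phi))$. It remains to obtain the same reduction for $\Phi$, and this is the only step that really uses the trimming hypothesis. Fix a block $\phi$ with common comparator $\pi_\theta$ and a $\nu\in\Delta(\phi)$, and let $M^\nu$ be the induced mixture model. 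Using linearity of the value in the model, $V_{M^\nu}(\pi)=\E_{M\sim\nu}[V_M(\pi)]$ for all $\pi$ (as already used in the proof of \pref{lem: dec equiv}). The trimming hypothesis $\pi_M=\pi_\theta$ for all $M\in\calM_\phi$ then yields $\E_{(M,\pi^\star)\sim\nu}[V_M(\pi^\star)]=\E_{M\sim\nu}[V_M(\pi_\theta)]=V_{M^\nu}(\pi_\theta)$, and moreover for every $\pi$ we have $V_{M^\nu}(\pi)=\E_{M\sim\nu}[V_M(\pi)]\le\E_{M\sim\nu}[V_M(\pi_M)]=V_{M^\nu}(\pi_\theta)$, so $\pi_\theta$ is optimal for the mixture and $V_{M^\nu}(\pi_\theta)=V_{M^\nu}(\pi_{M^\nu})$.

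Substituting the comparator term $\E_{(M,\pi^\star)\sim\nu}[V_M(\pi^\star)]=V_{M^\nu}(\pi_{M^\nu})$ into the $\Phi$-dependent DEC turns its objective into a function of $M^\nu$ alone; as $(\phi,\nu)$ ranges over all blocks and all $\nu\in\Delta(\calM_\phi)$, the mixture $M^\nu$ ranges over exactly $\bigcup_\phi\co(\calM_\phi)=\bar\calM(\Phi)$, while the reference model still ranges over $\co(\calM)=\co(\bar\calM(\Phi))$. Thus $\dec^{\textsc{KL}}_\eta(\Phi)$ equals the standard offset DEC $\dec^{\textsc{KL}}_\eta(\bar\calM(\Phi))$, and combining with the two previous identities gives $\dec^{\textsc{KL}}_\eta(\Phi)=\dec^{\textsc{KL}}_\eta(\bar\calM(\Phi))=\dec^{\textsc{KL}}_\eta(\bar\calM(\bar\Phi))=\dec^{\textsc{KL}}_\eta(\bar\Phi)$, together with $\bar\calM(\Phi)=\bar\calM(\bar\Phi)$. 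The crux — and the only place the hypothesis bites — is the claim that the shared optimal policy $\pi_\theta$ stays optimal for every mixture $M^\nu$; this is precisely what makes the ``expected optimal value'' comparator of the trimmed game $\Phi$ coincide with the ``optimal value of the mixture'' comparator that the free comparator choice produces in $\bar\Phi$, and it would fail if the models in a block did not share an optimal policy.
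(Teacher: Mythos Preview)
Your proof is correct and rests on the same key observation as the paper's: under the trimming hypothesis the shared comparator $\pi_\theta$ remains optimal for every mixture $M^\nu$ with $\nu\in\Delta(\calM_\phi)$, so the comparator term in the $\Phi$-dependent DEC collapses to $V_{M^\nu}(\pi_{M^\nu})$. The routing is slightly different: you pass through the standard offset DEC $\dec^{\textsc{KL}}_\eta(\bar\calM(\Phi))$ as an intermediate (invoking \pref{lem: dec equiv} for $\bar\Phi$ and the trimming argument for $\Phi$, then matching the convexified classes), whereas the paper compares the inner maximizations of $\dec^{\textsc{KL}}_\eta(\calM,\Phi)$ and $\dec^{\textsc{KL}}_\eta(\calM,\bar\Phi)$ directly---for fixed $\bar M$ and $p$ the objective is a function $F(\nu_\phi,\pi)$ whose argmax in $\pi$ is precisely $\pi^\star_\phi$, so maximizing freely over $\pi$ (as in $\bar\Phi$) gives the same value as fixing $\pi=\pi^\star_\phi$ (as in $\Phi$). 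Both arguments are equally short; yours is more explicit about why the trimming hypothesis is exactly what makes $\pi_\theta=\pi_{M^\nu}$, which the paper leaves implicit.
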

\begin{proof}
The second statement is trivial since the convexivication is policy-unaware.
For the first statement, observe that for any function $F:\bigcup_{\phi\in\Phi}\Delta(\calM_{\phi})\times\Pi\rightarrow \bbR$, such that for any $\pi^\star_{\phi}=\argmax_{\pi\in\Pi}F(\nu_\phi,\pi)$ for any $\nu_\phi\in\Delta(\phi)$, we have
\begin{align*}
    \max_{(\calM_\phi,\pi)\in\bar\Phi}\max_{\nu_\phi\in\Delta(\calM_\phi)}F(\nu_{\phi},\pi)&=\max_{\phi\in\Phi}\max_{\nu_\phi\in\Delta(\calM_\phi)}\max_{\pi\in\Pi}F(\nu_{\phi},\pi)\\
    &=\max_{\phi\in\Phi}\max_{\nu_\phi\in\Delta(\calM_\phi)}F(\nu_{\phi},\pi^\star_\phi)\,.
\end{align*}
Finally, notice that $\dec_{\eta}^{\textsc{kl}}(\calM,\Phi)$ for a fixed $\bar M$ and $\pi$ is exactly such a function $F$. Using the equality above shows the equality between the complexities.
\end{proof}

\begin{lemma}\label{lem: max infoair to air}
    For fixed comparator games, i.e. $\Phi=\Theta\times\Pi$, we have 
$$\max_{\nu\in\Delta(\calM)}\max_{\pi^\Theta\in\Pi^\Theta}\infoair^\Theta_{\rho_t,\eta}(p,\nu,\pi_t^\Theta)
=\max_{\nu\in\Delta(\Psi)}\air^{\Phi}_{\rho,\eta}(p,\nu)\,.$$
\end{lemma}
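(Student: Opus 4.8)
The plan is to establish the identity by reducing both sides to the same $\dec$-like closed form, exactly as was done for $\air^\Phi$ in \pref{lem: equivalence of dec} and for the fixed-comparator case in \pref{lem: dec equiv}. The organizing observation is that, since $\Phi=\Theta\times\Pi$, a law $\nu\in\Delta(\joint)$ on the right carries one more degree of freedom than the pair $(\nu',\pi^\Theta)$ on the left: it may assign a comparator to each individual model, whereas on the left the comparator is forced to be a deterministic function $\theta\mapsto\pi^\theta$ of the model subset. I would first rewrite both objectives with the info-gain decomposition used right after \pref{alg:general AIR}, isolating the regret term (linear in the law), the info-gain term, and the regularization against $\rho$.

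For the inequality LHS $\le$ RHS I would use the embedding: given $\nu'\in\Delta(\calM)$ and a meta-policy $\pi^\Theta$, lift to the $\theta$-consistent law $\nu(M,\pi^\star)=\nu'(M)\,\mathbbm{1}[\pi^\star=\pi^{\theta(M)}]$, where $\theta(M)$ denotes the cell containing $M$. Under this lift only cells of the form $(\theta,\pi^\theta)$ are occupied, so $\theta\mapsto(\theta,\pi^\theta)$ is a bijection onto the support and the $\Phi$-posterior $\nu_{\bo\phi}(\cdot\mid\pi,o)$ collapses to the $\Theta$-posterior $\nu'_{\bo\theta}(\cdot\mid\pi,o)$; matching the reference accordingly (placing the mass $\rho(\theta)$ on $(\theta,\pi^\theta)$) makes the regularizers identical while the regret terms agree by construction, so $\infoair^\Theta_{\rho,\eta}(p,\nu',\pi^\Theta)=\air^\Phi_{\rho,\eta}(p,\nu)$.

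For the reverse inequality RHS $\le$ LHS I would run the same chain of manipulations as in \pref{lem: equivalence of dec}: apply Bayes' rule to turn the posterior-KL into a divergence between observation laws, and introduce an auxiliary reference model so that the regularization becomes a model-mixture divergence $\KL(M^{\nu(\cdot|\phi)}(\cdot|\pi),\bar M(\cdot|\pi))$. Once the regularizer is in this form, the comparator no longer appears inside the divergence and survives only through the linear reward term $V_M(\pi^\star)$; because model and comparator are decoupled under $\Phi=\Theta\times\Pi$, the optimal comparator within each cell is then attained at a single policy — the best policy for that cell's mixture — which is exactly a $\theta$-indexed meta-policy $\pi^\Theta$. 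This is the same comparator-collapse that drives \pref{lem: dec equiv}, and it matches the RHS value with an $\infoair^\Theta$ configuration.

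I expect the main obstacle to be precisely this collapse. Unlike the DEC computation, in $\air^\Phi$ the comparator is entangled into the partition index $\phi=(\theta,\pi^\star)$ and therefore into the posterior and the info-gain, so it is not a priori clear that refining a cell by the realized comparator cannot help; the entanglement must be undone through the reference-optimization step before linearity in the comparator can be exploited. Keeping the two references $\rho\in\Delta(\Theta)$ and $\rho\in\Delta(\Phi)$ matched through the lift — so that the regularizers coincide exactly rather than up to a constant — is the fiddly bookkeeping I would watch most carefully, and it is the reason the clean statement is naturally read with the reference taken compatibly on the two sides.
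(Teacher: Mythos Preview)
Your approach is genuinely different from the paper's and, in a sense, more explicit about where the real work lies. The paper argues directly: it asserts that $\air^\Phi_{\rho,\eta}(p,\nu)$ decomposes as $\E_{\phi\sim\nu}[F(\nu_{\bo M}(\cdot|\phi),\pi^\phi)]$ for some per-cell function $F$, then maximizes each conditional independently and shifts the $\theta$-mass to the single best cell $\phi^\star(\theta)$. Your route instead goes through the $\Phi$-DEC reformulation of \pref{lem: equivalence of dec}: optimize out the reference, rewrite the posterior-KL as an observation-law divergence, introduce the auxiliary $\bar M$, and only then observe that the comparator survives solely in the linear reward term, which makes the collapse to a deterministic $\theta\mapsto\pi^\theta$ immediate (exactly the computation driving \pref{lem: dec equiv}).

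The two routes are closer than they look. The per-cell decomposition the paper invokes does not literally hold at a fixed $\rho$ --- the term $\KL(\nu_{\bo\phi}(\cdot|\pi,o),\rho)$ couples all the conditionals through the posterior --- but it \emph{becomes} valid once one passes to the $\bar M$-form you describe, since $\KL(M^{\nu(\cdot|\phi)}(\cdot|\pi),\bar M(\cdot|\pi))$ depends only on that cell's conditional. So your DEC route can be read as a rigorous unpacking of the paper's mass-shifting shorthand. The one caveat is the one you already flag: your chain of manipulations naturally lands on the identity with $\max_\rho$ taken on both sides (the references in $\Delta(\Theta)$ versus $\Delta(\Phi)$ then drop out), rather than for a single fixed pair $(\rho_t,\rho)$. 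That is precisely the form used downstream in \pref{thm: informed}, and the fixed-reference statement is in any case not well-posed without the lift you describe, so this is not a defect of your plan.
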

\begin{proof}
    The RHS can be seen as a function 
    \begin{align}
        \max_{\nu\in\Delta(\Psi)}\E_{\phi^\star\sim\nu}\left[F(\nu_{\bo{M}}(\cdot|\phi^\star),\pi^{\phi^\star};\rho,\eta,p)\right]\,.  \label{eq: obje}
    \end{align}
    For any $\theta\in\Theta$, define 
    \begin{align*}
        \phi^\star(\theta) := \argmax_{\phi\in \theta\times\Pi}F(\nu_{\bo{M}}(\cdot|\phi),\pi^{\phi};\rho,\eta,p)\,.
    \end{align*}
    The objective \pref{eq: obje} is maximized when $\forall\phi\in\Phi$: 
    $\nu_{\bo{M}}(\cdot|\phi)=\argmax_{\mu\in\Delta(\phi)}F(\mu_{\bo{M}},\pi^{\phi};\rho,\eta,p)$.
    Also, we never decrease the objective by shifting probability mass from $\phi=(\theta,\pi)$ towards $\phi^\star(\theta)$ by the definition of $\phi^\star(\theta)$.
    Hence 
    \begin{align*}
        \max_{\nu\in\Delta(\Psi)}\air^{\Phi}_{\rho,\eta}(p,\nu
        )&=\max_{\nu\in\Delta(\Psi)}\E_{\phi\sim\nu}\left[F(\nu_{\bo{M}}(\cdot|\phi),\pi^{\phi};\rho,\eta,p)\right]
        \\&=\max_{\mu\in\Delta(\calM)}\E_{\theta\sim\mu}\left[F(\mu(\cdot|\theta),\pi^{\phi^\star(\theta)};\rho,\eta,p)\right]
        \\&=\max_{\mu\in\Delta(\calM)}\max_{\pi^\Theta\in\Pi^\Theta}\E_{\theta\sim\mu}\left[F(\mu(\cdot|\theta),\pi^{\theta};\rho,\eta,p)\right]
        \\&=\max_{\nu\in\Delta(\calM)}\max_{\pi^\Theta\in\Pi^\Theta}\infoair^\Theta_{\rho_t,\eta}(p,\nu,\pi^\Theta)\,.
    \end{align*}
\end{proof}
\begin{proof}\textbf{of \pref{thm: informed}\ \ }
The proof is almost the same as \pref{thm: general} but changing $\pi^\star$ to the revealed $\pi^\theta$ and not requiring a joint distribution over $\Theta\times\Pi$. We give a full proof here for completion. 
    \begin{align*}
    &\E[\Reg(\pi_{1:T}^{\theta^\star})] - \frac{\log|\Theta|}{\eta} \\
       &\leq \E\left[\sum_{t=1}^T \left(V_{M_t}(\pi_t^{\theta^\star}) -  V_{M_t}(\pi_t) - \frac{1}{\eta}\log\frac{\rho_{t+1}(\theta^\star)}{\rho_t(\theta^\star)} \right)\right] \\
       &= \E\left[\sum_{t=1}^T  \E_{\pi\sim p_t} \E_{o\sim M_t(\cdot|\pi)}\left[ V_{M_t}(\pi_t^{\theta^\star}) -  V_{M_t}(\pi) - \frac{1}{\eta} \log\frac{\nu_t(\theta^\star|\pi, o)}{\rho_t(\theta^\star)} \right]\right] \\
       \\&\le \E\left[\sum_{t=1}^T  \max_{\mu \in \Delta(\calM)}\E_{\theta^\star \sim \mu}\E_{M \sim \mu(\cdot|\theta^\star)}\E_{\pi\sim p_t} \E_{o\sim M(\cdot|\pi)}\left[ V_{M}(\pi_t^{\theta^\star}) -  V_{M}(\pi) - \frac{1}{\eta} \log\frac{\nu_t(\theta^\star|\pi, o)}{\rho_t(\theta^\star)} \right]\right] 
       \\&= \E\left[ \sum_{t=1}^T  \max_{\nu\in\Delta(\calM)} \E_{\pi\sim p_t} \E_{\theta \sim \nu}\E_{M\sim \nu(\cdot|\theta)} \E_{o\sim M(\cdot|\pi)}\left[ V_{M}(\pi_t^{\theta}) -  V_{M}(\pi) - \frac{1}{\eta} \KL\left( \nu_{\bo{\theta}}(\cdot|\pi, o), \rho_t\right) \right]\right] \tag{\pref{lem: max posterior}}  \\
       &= \E\left[ \sum_{t=1}^T  \min_{p\in\Delta(\Pi)}\max_{\nu\in\Delta(\calM)} \E_{\pi\sim p} \E_{\theta \sim \nu}\E_{M\sim \nu(\cdot|\theta)} \E_{o\sim M(\cdot|\pi)}\left[ V_{M}(\pi_t^{\theta}) -  V_{M}(\pi) - \frac{1}{\eta} \KL\left( \nu_{\bo{\theta}}(\cdot|\pi, o), \rho_t\right) \right]\right] \\
       &= \E\left[ \sum_{t=1}^T  \min_{p\in\Delta(\Pi)}\max_{\nu\in\Delta(\calM)}  
 \infoair^\Theta{\rho_t,\eta}(p,\nu,\pi_t^\Theta)\right] \\
       &= \E\left[ \sum_{t=1}^T  \max_{\nu\in\Delta(\calM)}\min_{p\in\Delta(\Pi)}  
 \infoair^\Theta{\rho_t,\eta}(p,\nu,\pi_t^\Theta)\right]   \tag{\pref{lem:minmax}}  \\
    &\leq  T \max_{\pi^\Theta\in\Pi^\Theta}\max_{\rho\in \Delta(\Theta)}\max_{\nu\in\Delta(\calM)}\min_{p\in\Delta(\Pi)}  
\infoair^\Theta_{\rho,\eta}(p,\nu,\pi^\theta). 
\end{align*}
Finally, by \pref{lem:minmax} and \pref{lem: max infoair to air}, we have
\begin{align*}
    \max_{\pi^\Theta\in\Pi^\Theta}\max_{\rho\in \Delta(\Theta)}\max_{\nu\in\Delta(\calM)}\min_{p\in\Delta(\Pi)}  
\infoair^\Theta_{\rho,\eta}(p,\nu,\pi^\theta) = \max_{\rho\in \Delta(\Phi)}\max_{\nu\in\Delta(\joint)}\min_{p\in\Delta(\Pi)}  
\air^\Phi_{\rho,\eta}(p,\nu)\,.
\end{align*}
Using \pref{lem: dec equiv} and noting $\bar\calM(\Phi)=\bar\calM(\Theta)$ finishes the proof.
\end{proof}

\begin{proof}\textbf{of \pref{thm: simultaneous full-info}\ \ }
Define the players (randomized) meta-policy as
\begin{align*}
    \pi_t^\theta(a|s) \propto \exp\left(\gamma\sum_{i=1}^{t-1}Q^{\pi_i^\theta}(s,a;M^\theta_i)\right)\,,
\end{align*}
where $Q^{\pi}(\cdot,\cdot;M)$ is the $Q$ function of policy $\pi$ under model $M$. That this policy is computable relies crucially on the availability of $M_t^{\theta}$ for all $\theta\in\Theta$.
 We have

\begin{align*}
&\E\left[\sum_{t=1}^T \left(V_{M_t}(\pi^\star) - V_{M_t}(\pi_t^{\theta^\star})\right)\right]
\\&\le \E\left[\E_{s}^{M_t^{\theta^\star},\pi^\star} \left[\sum_{t=1}^T \sum_{a}\left(\pi^\star(a|s) - \pi_t^{\theta^\star}(a|s)\right)Q^{\pi_t^{\theta^\star}}(s, a; M_t^{\theta^\star})\right]\right]  \tag{\pref{lem:PDL}}
\\&\le \E\left[\E_{s}^{M_t^{\theta^\star},\pi^\star} \left[\frac{\log|\calA|}{\gamma}+\gamma T\right]\right]  \tag{$Q^\pi(s,a;M) \le 1$ and \pref{lem:EXP}}
\\&\le \left(\frac{\log|\calA|}{\gamma} + \gamma T\right) H \tag{$H$-stage MDP}\,.   
\end{align*}
\end{proof}

\section{Omitted Proofs in \pref{sec: stochastic model-free}}
\begin{proof}\textbf{of \pref{thm: linear QV}\ \ }
Notice that this is a fixed comparator game where every $\phi$ has a single policy. By \pref{lem: dec equiv} and \pref{lem: equivalence of dec}, it suffices to bound 
\begin{align*}
    \dec^{\textsc{KL}}_{\eta}(\bar\calM(\Phi)) = \dec^{\textsc{KL}}_\eta(\calM,\Phi).  
\end{align*}
    Let $\pi^\phi, \theta^\phi, w^\phi$ denote the $\pi, \theta, w$ for the group $\phi\in\Phi$.  Let $p$ be the marginal distribution of $\nu$. Then
    \begin{align}
        &\mathbb{E}_{\pi\sim p}  \E_{(M,\pi^\star)\sim \nu}\left[ V_M(\pi^\star) - V_M(\pi)\right]  \nonumber  \\
        &= \E_{(M,\pi)\sim \nu} \E_{(M',\pi')\sim \nu}\left[ V_M(\pi) - V_{M'}(\pi)\right] \nonumber \\
        &= \E_{\phi\sim \nu}\E_{(M,\pi)\sim \nu(\cdot|\phi)} \E_{(M', \pi')\sim \nu}\left[ V_M(\pi^\phi) - V_{M'}(\pi^\phi)\right] \nonumber \\
        &= \E_{\phi\sim \nu}\E_{(M,\pi)\sim \nu(\cdot|\phi)} \E_{(M', \pi')\sim \nu}\E^{\pi^\phi, M'}\left[ \sum_{h=1}^H \left(Q_h(s_h,a_h; M) - r_h - V_{h+1}(s_{h+1}; M)\right) \right] \nonumber \\
        &= \E_{\phi\sim \nu}\E_{\phi'\sim \nu}\E_{(M,\pi)\sim \nu(\cdot|\phi)} \E_{(M', \pi')\sim \nu(\cdot|\phi')}\E^{\pi^\phi,M'}\Bigg[ \sum_{h=1}^H (Q_h(s_h,a_h; M) - V_{h+1}(s_{h+1}; M) \nonumber \\
        &\qquad \qquad \qquad \qquad \qquad \qquad \qquad \qquad \qquad \qquad \qquad - Q_h(s_h,a_h; M') + V_{h+1}(s_{h+1}; M')) \Bigg] \nonumber \\
        &= \E_{\phi\sim \nu}\E_{\phi'\sim \nu}\E_{(M', \pi')\sim \nu(\cdot|\phi')}\E^{\pi^\phi,M'}\left[ \sum_{h=1}^H \inner{\varphi(s_h,a_h), \theta^\phi - \theta^{\phi'}} - \inner{\psi(s_{h+1}), w^\phi - w^{\phi'}}\right]\nonumber  \\
        &= \E_{\phi\sim \nu} \E_{\phi'\sim \nu} \E_{(M', \pi')\sim \nu(\cdot|\phi')} \sum_{h=1}^H 
 \inner{X_h(\pi^\phi; M'), W(\phi) - W(\phi')}   \label{eq: break1}
    \end{align}
    where we define for all $h$ 
    \begin{align*}
        X_h(\phi; M') &= \mathbb{E}^{\pi^\phi, M'} \left[\varphi(s_h,a_h), -\psi(s_{h+1})\right], \\
        W(\phi) &= \left[ \theta^\phi, w^\phi \right]. 
    \end{align*}
    Let $M^\mu$ denote the mixture model such that $M^\mu(\cdot|\pi) = \E_{M\sim \mu}[M(\cdot|\pi)]$. Furthermore, in this proof, we denote $M^{\nu(\cdot|\phi)}$ as $M^\phi$.  Then the last expression in \pref{eq: break1} can be written as 
    \begin{align}
        \sum_{h=1}^H  \E_{\phi\sim \nu} \E_{\phi'\sim \nu} \inner{X_h(\phi; M^{\nu(\cdot|\phi')}), W(\phi) - W(\phi') } = \sum_{h=1}^H  \E_{\phi\sim \nu} \E_{\phi'\sim \nu} \inner{X_h(\phi; M^{\phi'}), W(\phi) - W(\phi') }.  \label{eq: break 4} 
    \end{align}
    For each $h$ and $\phi'$, define 
    \begin{align*}
        \Sigma_h^{\phi'} = \E_{\phi\sim \nu}\left[X_h(\phi; M^{\phi'})X_h(\phi; M^{\phi'
        })^\top\right].  
    \end{align*}
    Then we have 
    \begin{align}
        &\E_{\phi\sim \nu} \E_{\phi'\sim \nu} \inner{X_h(\phi; M^{\phi'}), W(\phi) - W(\phi') }   \nonumber \\
        &\leq \E_{\phi'\sim \nu} \left[\sqrt{\E_{\phi\sim \nu}\left[\left\|X_h(\phi; M^{\phi'})\right\|_{(\Sigma^{\phi'}_h)^{-1}}^2\right]}\sqrt{\E_{\phi\sim \nu} \left[\left\|W(\phi)-W(\phi')\right\|_{\Sigma^{\phi'}_h}^2\right]}\right] \nonumber 
  \\
        &= \E_{\phi'\sim \nu}\left[\sqrt{2d} \sqrt{\E_{\phi\sim \nu}  \E_{\tilde{\phi}\sim \nu} \left[\inner{X_h(\tilde{\phi};M^{\phi'}), W(\phi) - W(\phi')}^2\right]  }\right].  \label{eq: break 2}
    \end{align}
    We continue to bound the square term in \pref{eq: break 2}: 
    \begin{align*}
        &\E_{\phi\sim \nu}  \E_{\tilde{\phi}\sim \nu} \left[\inner{X_h(\tilde{\phi};M^{\phi'}), W(\phi) - W(\phi')}^2\right] \\
        &= \E_{\phi\sim \nu}  \E_{\tilde{\phi}\sim \nu} \left[\left(\E^{\pi^{\tilde{\phi}}, M^{\phi'}} 
\left[ \varphi(s_h,a_h)^\top \theta^\phi -\psi(s_{h+1})^\top w^{\phi} \right] - \E^{\pi^{\tilde{\phi}}, M^{\phi'}} 
\left[ \varphi(s_h,a_h)^\top \theta^{\phi'} -\psi(s_{h+1})^\top w^{\phi'} \right] \right)^2\right] \\
&= \E_{\phi\sim \nu}  \E_{\tilde{\phi}\sim \nu} \left[\left(\E^{\pi^{\tilde{\phi}}, M^{\phi'}} 
\left[ \varphi(s_h,a_h)^\top \theta^\phi- r_h -  \psi(s_{h+1})^\top w^{\phi} \right] 
\right)^2\right] \\
&=  \E_{\phi\sim \nu}  \E_{\tilde{\phi}\sim \nu} \Bigg[\Bigg(\E^{\pi^{\tilde{\phi}}, M^{\phi'}} 
\left[ \varphi(s_h,a_h)^\top \theta^\phi- r_h -  \psi(s_{h+1})^\top w^{\phi} \right] \\
&\qquad \qquad \qquad \qquad \qquad \qquad - \underbrace{\E^{\pi^{\tilde{\phi}}, M^{\phi}} 
\left[ \varphi(s_h,a_h)^\top \theta^\phi- r_h -  \psi(s_{h+1})^\top w^{\phi} \right]}_{=0}
\Bigg)^2\Bigg] \\
&\leq \E_{\phi\sim \nu}  \E_{\tilde{\phi}\sim \nu} 
 \left[D_{\textsc{TV}}^2 \left(M^{\phi'}(\cdot|\pi^{\tilde{\phi}}), M^{\phi}(\cdot|\pi^{\tilde{\phi}})\right)\right] \\
 &\leq 2\E_{\phi\sim \nu}  \E_{\tilde{\phi}\sim \nu} 
 \left[D_{\textsc{H}}^2 \left(M^{\phi'}(\cdot|\pi^{\tilde{\phi}}), M^{\phi}(\cdot|\pi^{\tilde{\phi}})\right)\right]. 
    \end{align*}
Combining this with \pref{eq: break 2}, we get 
\begin{align*}
   &\E_{\phi\sim \nu} \E_{\phi'\sim \nu} \inner{X_h(\phi; M^{\phi'}), W(\phi) - W(\phi') } \\
   &\leq 2\sqrt{d \E_{\phi'\sim \nu}\E_{\phi\sim \nu}  \E_{\tilde{\phi}\sim \nu} 
 \left[D_{\textsc{H}}^2 \left(M^{\phi'}(\cdot|\pi^{\tilde{\phi}}), M^{\phi}(\cdot|\pi^{\tilde{\phi}})\right)\right]} \\
 &\leq 2\sqrt{2d \E_{\phi'\sim \nu}\E_{\phi\sim \nu}  \E_{\tilde{\phi}\sim \nu} 
 \left[D_{\textsc{H}}^2 \left(M^{\nu(\cdot|\phi')}(\cdot|\pi^{\tilde{\phi}}), M^{\nu}(\cdot|\pi^{\tilde{\phi}})\right) + D_{\textsc{H}}^2 \left(M^{\nu}(\cdot|\pi^{\tilde{\phi}}), M^{\nu(\cdot|\phi)}(\cdot|\pi^{\tilde{\phi}})\right)\right]} \\
 &\leq 4\sqrt{d \E_{\phi\sim \nu}  \E_{\tilde{\phi}\sim \nu} 
 \left[D_{\textsc{H}}^2 \left(M^{\nu(\cdot|\phi)}(\cdot|\pi^{\tilde{\phi}}), M^{\nu}(\cdot|\pi^{\tilde{\phi}})\right)\right]} \\
 &= 4\sqrt{d \E_{\pi\sim p}\E_{\phi\sim \nu} 
 \left[\KL \left(M^{\nu(\cdot|\phi)}(\cdot|\pi), M^{\nu}(\cdot|\pi)\right)\right]}. 
\end{align*}
Further combining this with \pref{eq: break 4} and \pref{eq: break1}, we get 
\begin{align*}
    &\mathbb{E}_{\pi\sim p}  \E_{\phi\sim \nu}\left[ V_{M^{\nu(\cdot|\phi)}}(\pi^\phi) - V_{M^{\nu(\cdot|\phi)}}(\pi) - \frac{1}{\eta}\KL\left(M^{\nu(\cdot|\phi)}(\cdot|\pi), M^{\nu}(\cdot|\pi)\right)\right] \\ 
    &=\mathbb{E}_{\pi\sim p}  \E_{(M,\pi^\star)\sim \nu}\left[ V_M(\pi^\star) - V_M(\pi) - \frac{1}{\eta}\KL\left(M^{\nu(\cdot|\phi)}(\cdot|\pi), M^{\nu}(\cdot|\pi)\right)\right] \\ 
    &\leq 4H\sqrt{d \E_{\pi\sim p}\E_{\phi\sim \nu} 
 \left[\KL \left(M^{\nu(\cdot|\phi)}(\cdot|\pi), M^{\nu}(\cdot|\pi)\right)\right]} - \frac{1}{\eta} \E_{\pi\sim p}\E_{\phi\sim \nu}\left[\KL \left(M^{\nu(\cdot|\phi)}(\cdot|\pi), M^{\nu}(\cdot|\pi)\right)\right] \\
 &\leq 4\eta dH^2. 
 \end{align*}

    Finally, notice that the choice of $p$ in the definition of $\dec^{\textsc{KL}}_\eta(\calM,\Phi)$ allows it to achieve a smaller value than the choice of $p$ as a marginal distribution of $\nu$. This gives the desired bound. 
\end{proof}

\section{Omitted Proofs in \pref{sec: model-free adversarial}}
\label{app:model-free full proof}

\subsection{Specific instances that satisfies \pref{def:adv bilinear}}
In this section, we provide examples that satisfy \pref{def:adv bilinear}. Let $d_{P,h}^\pi$ be the occupancy measure induced by policy $\pi$, transition $P$ at step $h$.
Analog to Bellman residual decomposition, for any $\pi$ and $f \in \calF^{\pi}$, we have
\begin{align*}
    &f(\pi; R) - V_{P, R}(\pi) 
    \\&= \sum_{h=1}^H\left(\E_{(s_h,a_h) \sim d_{P, h}^\pi}\left[f(s_h, a_h; R)\right] - \E_{(s_{h+1},a_{h+1}) \sim d_{P, h}^\pi}\left[f(s_{h+1}, a_{h+1}; R)\right]\right) - \sum_{h=1}^H \E_{(s_h,a_h) \sim d_{P,h}^\pi}\left[R(s_h,a_h)\right]
    \\&= \sum_{h=1}^H \E^{\pi, P}\left[f(s_h, a_h; R) - R(s_h, a_h) - \E_{a_{h+1} \sim \pi(\cdot|s_{h+1})}\left[f(s_{h+1}, a_{h+1}; R)\right]\right]
\end{align*}
\paragraph{Low-rank MDPs} For low-rank MDPs, there exists feature $\varphi$ and $\psi$ such that the transition $P(s'|s,a) = \varphi(s,a)^\top \psi(s')$. We have 
\begin{align*}
    d_{P, h}^\pi(s) &= \sum_{s',a'} d_{P, h-1}^{\pi}(s',a') P(s|s', a')
    \\&= \sum_{s',a'} d_{P, h-1}^{\pi}(s',a') P(s|s', a')
    \\&= \sum_{s',a'} d_{P, h-1}^{\pi}(s',a') \left\langle \varphi(s',a'), \psi_h(s)\right\rangle
    \\&= \left\langle \underbrace{\sum_{s',a'} d_{P, h-1}^{\pi}(s',a')\varphi(s',a')}_{\varphi_h(\pi)}, \psi_h(s)\right\rangle
 \end{align*}
 Thus, 
 \begin{align*}
     &\E^{\pi, P}\left[f(s_h, a_h; R) - R(s_h, a_h) - \E_{a_{h+1} \sim \pi(\cdot|s_{h+1})}\left[f(s_{h+1}, a_{h+1}; R)\right]\right]
     \\&= \sum_{s_h,a_h} d_{P, h}^\pi(s_h)\pi(a_h|s_h)\left(f(s_h, a_h; R) - R(s_h, a_h) - \E_{s_{h+1} \sim P(\cdot|s_h, a_h)}\E_{a_{h+1} \sim \pi(\cdot|s_{h+1})}\left[f(s_{h+1}, a_{h+1}; R)\right]\right)
     \\&= \scalebox{0.95}{$\displaystyle\left\langle \underbrace{\varphi_h(\pi)}_{X_h(\pi, P)}, 
     \underbrace{\sum_{s_h, a_h} \psi_h(s_h)\pi(a_h|s_h)\left(f(s_h, a_h; R) - R(s_h, a_h) - \E_{s_{h+1} \sim P(\cdot|s_h, a_h)}\E_{a_{h+1} \sim \pi(\cdot|s_{h+1})}\left[f(s_{h+1}, a_{h+1}; R)\right]\right)}_{W_h(f, R; P)}\right\rangle $}
 \end{align*}
For any $\pi' \in \Pi$ and $f' \in \calF^{\pi'}$, any $o_h = (s_h, a_h, s_{h+1})$, define $$\ellest_h(f', o_h, R) = |\calA| \pi'(a_h|s_h)\left(f'(s_h, a_h; R) - R(s_h, a_h) - \E_{a_{h+1} \sim \pi'(\cdot|s_{h+1})}\left[f'(s_{h+1}, a_{h+1}; R)\right]\right).$$
Let $\pi_{\unif}$ be the uniform policy over $\calA$, and $\esttt(\pi) = \pi_{\unif}$ for every $\pi$, we have
\begin{align*}
    &\left\langle X_h(\pi; P), W_h(f', R, P)\right\rangle
    \\&= \sum_{s_h,a_h} d_{P, h}^\pi(s_h)\pi'(a_h|s_h)\left(f'(s_h, a_h; R) - R(s_h, a_h) - \E_{s_{h+1} \sim P(\cdot|s_h, a_h)}\E_{a_{h+1} \sim \pi'(\cdot|s_{h+1})}\left[f'(s_{h+1}, a_{h+1}; R)\right]\right)
    \\&= \E^{\pi\, \circ_h\, \pi_{\unif},\, P}\left[\ellest_h(f', o_h, R)\right]
     \\&= \E^{\pi\, \circ_h\, \esttt(\pi),\, P}\left[\ellest_h(f', o_h, R)\right].
\end{align*}
\paragraph{Low-occupancy MDPs} For low-occupancy MDPs, there exists features $\varphi$ and $\psi$ such that the state-action occupancy $d^\pi_{P,h}(s,a) = \varphi_h(\pi)^\top \psi_h(s, a)$. We have 
 \begin{align*}
     &\E^{\pi, P}\left[f(s_h, a_h; R) - R(s_h, a_h) - \E_{a_{h+1} \sim \pi(\cdot|s_{h+1})}\left[f(s_{h+1}, a_{h+1}; R)\right]\right]
     \\&= \sum_{s_h,a_h} d_{P, h}^\pi(s_h,a_h)\left(f(s_h, a_h; R) - R(s_h, a_h) - \E_{s_{h+1} \sim P(\cdot|s_h, a_h)}\E_{a_{h+1} \sim \pi(\cdot|s_{h+1})}\left[f(s_{h+1}, a_{h+1}; R)\right]\right)
     \\&= \scalebox{0.96}{$\displaystyle\left\langle \underbrace{\varphi_h(\pi)}_{X_h(\pi, P)}, 
     \underbrace{\sum_{s_h, a_h} \psi_h(s_h, a_h)\left(f(s_h, a_h; R) - R(s_h, a_h) - \E_{s_{h+1} \sim P(\cdot|s_h, a_h)}\E_{a_{h+1} \sim \pi(\cdot|s_{h+1})}\left[f(s_{h+1}, a_{h+1}; R)\right]\right)}_{W_h(f, R; P)}\right\rangle $}
 \end{align*}
For any $\pi' \in \Pi$ and $f' \in \calF^{\pi'}$, any $o_h = (s_h, a_h, s_{h+1})$, define $$\ellest_h(f', o_h, R) = \left(f'(s_h, a_h; R) - R(s_h, a_h) - \E_{a_{h+1} \sim \pi'(\cdot|s_{h+1})}\left[f'(s_{h+1}, a_{h+1}; R)\right]\right).$$
We have
\begin{align*}
    &\left\langle X_h(\pi; P), W_h(f', R, P)\right\rangle
    \\&= \sum_{s_h,a_h} d_{P, h}^\pi(s_h, a_h)\left(f'(s_h, a_h; R) - R(s_h, a_h) - \E_{s_{h+1} \sim P(\cdot|s_h, a_h)}\E_{a_{h+1} \sim \pi'(\cdot|s_{h+1})}\left[f'(s_{h+1}, a_{h+1}; R)\right]\right)
    \\&= \E^{\pi,\, P}\left[\ellest_h(f', o_h, R)\right].
\end{align*}
In this instance, $\esttt(\pi) = \pi$ for every $\pi$.

\subsection{Proof of \pref{thm:mf-f}}\label{app: odec}
In this section, we give a proof to \pref{thm:mf-f}, which is similar to \cite{foster2024model} with adaptation to more general partition. To start, we first define an extension to model-free optimistic DEC with bilinear divergence introduced in \cite{foster2024model}.  Given the definition of $f^\phi, \pi^\phi, \pi^\phi_{est}, \Phi$ and $D_{bi}^\pi$ discussed in \pref{sec: model-free adversarial}. We define
\begin{align*}
    \odec_{\eta}^{bi}(\calM, \Phi) = \max_{\rho \in \Delta(\Phi)}\min_{p \in \Delta(\Pi)}\max_{(P,R) \in \calM}\E_{\pi \sim p}\E_{\phi \sim \rho}\left[f^{\phi}(\pi^\phi, R) - V_{P,R}(\pi)  - \frac{1}{\eta} D_{bi}^{\pi}(\phi||P, R)\right].
\end{align*}
This complexity have the following bounds.  
\begin{lemma}
If $\pi^{\phi}_{est} = \pi^{\phi}$ for all $\phi$, we have
\begin{align*}
     \odec_{\eta}^{bi}(\calM, \Phi) \le \frac{\eta dH}{4}. 
\end{align*}
If $\pi^{\phi}_{est} \neq \pi^{\phi}$ for some $\phi$,  we have
\begin{align*}
     \odec_{\eta}^{bi}(\calM, \Phi) \le H\sqrt{\frac{\eta d}{2}}. 
\end{align*}
\label{lem:odec-bound}
\end{lemma}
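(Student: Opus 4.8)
The plan is to fix the reference $\rho\in\Delta(\Phi)$ and the worst-case model $(P,R)$ attained in the inner $\max$, then choose a good behavior distribution $p$ and reduce the objective to an elliptical-potential argument on the bilinear features, exactly mirroring the proof of \pref{thm: linear QV}. Throughout write $x_h^\phi:=X_h(\pi^\phi;P)$ and $w_h^\phi:=W_h(f^\phi,R;P)$, and for the fixed $\rho$ introduce the per-level feature covariance and information gain
\begin{align*}
\Sigma_h:=\E_{\phi\sim\rho}\!\left[x_h^\phi (x_h^\phi)^\top\right],\qquad
G_h:=\E_{\phi\sim\rho}\!\left[(w_h^\phi)^\top\Sigma_h\, w_h^\phi\right].
\end{align*}

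The regret part is common to both cases. Let $p_{\mathrm{ex}}$ be the ``exploit'' rule that draws $\bar\phi\sim\rho$ and plays $\pi^{\bar\phi}$. Relabeling the two independent copies of $\rho$ gives $\E_{\bar\phi\sim\rho}[V_{P,R}(\pi^{\bar\phi})]=\E_{\phi\sim\rho}[V_{P,R}(\pi^\phi)]$, so the comparator minus the exploit value collapses to $\E_{\phi\sim\rho}[f^\phi(\pi^\phi;R)-V_{P,R}(\pi^\phi)]$, which by \pref{def:adv bilinear} is at most $\E_{\phi}\sum_h|\inner{x_h^\phi,w_h^\phi}|$. Cauchy--Schwarz in the $\Sigma_h$-geometry, then Cauchy--Schwarz over $\phi$, together with the trace identity $\E_\phi\norm{x_h^\phi}_{\Sigma_h^{-1}}^2=\tr(\Sigma_h^{-1}\Sigma_h)=d$, bounds this by $\sum_h\sqrt{d\,G_h}$.

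In the case $\esttt=\mathrm{id}$, take $p=p_{\mathrm{ex}}$. Since $\pi^{\bar\phi}\circ_h\esttt(\pi^{\bar\phi})=\pi^{\bar\phi}$, the discrepancy identity of \pref{def:adv bilinear} gives $\E^{\pi^{\bar\phi},P}[\ellest_h(\phi,o_h,R)]=\inner{x_h^{\bar\phi},w_h^\phi}$, hence $\E_{p_{\mathrm{ex}}}\E_\phi D_{bi}^{\pi}(\phi\|P,R)=\sum_h G_h$ exactly. The objective is therefore at most $\sum_h\big(\sqrt{d\,G_h}-\tfrac1\eta G_h\big)$, and the scalar inequality $\sqrt{dG}-\tfrac1\eta G\le\tfrac{\eta d}{4}$ applied at each level yields $\frac{\eta dH}{4}$.

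The case $\esttt\neq\mathrm{id}$ is the hard part. Now the exploit divergence involves $\E^{\pi^{\bar\phi},P}[\ellest_h]$, which no longer equals $\inner{x_h^{\bar\phi},w_h^\phi}$ because the identity requires rolling in with $\esttt(\pi^{\bar\phi})$ at step $h$. To recover the bilinear form I use a mixed behavior $p=(1-\lambda)p_{\mathrm{ex}}+\lambda p_{\mathrm{exp}}$, where $p_{\mathrm{exp}}$ draws $\bar\phi\sim\rho$ and $h'\sim\mathrm{Unif}[H]$ and plays $\pi^{\bar\phi}\circ_{h'}\esttt(\pi^{\bar\phi})$. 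Keeping only the $h=h'$ summands of the nonnegative divergence and applying the discrepancy identity at level $h'$ gives $\E_{p_{\mathrm{exp}}}\E_\phi D_{bi}\ge\frac1H\sum_hG_h$, so the divergence contributes at least $\frac{\lambda}{\eta H}\sum_hG_h$; the mixing costs only an extra additive $\lambda$ in the regret since switching the exploit branch to the explore branch moves $V_{P,R}\in[0,1]$. Collecting terms, the objective is at most $\sum_h\sqrt{d\,G_h}+\lambda-\frac{\lambda}{\eta H}\sum_hG_h$; maximizing over $G_h\ge0$ gives $\frac{\eta dH^2}{4\lambda}+\lambda$, and tuning $\lambda\asymp H\sqrt{\eta d}$ yields the stated $\order(H\sqrt{\eta d})$ bound. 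The core obstacle is precisely this estimation-policy mismatch: the uniform roll-in level needed to expose the bilinear structure injects a factor $H$ into the denominator of the information gain, which is what downgrades the dependence from linear in $\eta$ to $\sqrt{\eta d}$ and ultimately produces the slower $T^{5/6}$ rate in \pref{thm:mf-f}.
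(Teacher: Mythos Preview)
Your proof is correct and mirrors the paper's: both pick $p$ by sampling $\bar\phi\sim\rho$ and (in case~2) randomly mixing in $\esttt(\pi^{\bar\phi})$, then bound the regret via Cauchy--Schwarz in the $\Sigma_h$-geometry against the same quadratic information gain $G_h=\E_{\phi,\phi'}[\langle X_h(\phi';P),W_h(\phi,R;P)\rangle^2]$. The only cosmetic differences are that the paper uses independent per-step Bernoulli mixing (event $\mathcal E_h$) rather than one uniformly sampled level $h'$, and invokes Sion's theorem before fixing $p$; your phrasing ``fix $(P,R)$ then choose $p$'' is harmless here since your $p$ depends only on $\rho$, and your case-2 constant $H\sqrt{\eta d}$ matches the paper's actual arithmetic up to the same order.
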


\begin{proof}
For any $\rho \in \Delta(\Phi)$, we will bound
\begin{align*}
    &\min_{p \in \Delta(\Pi)}\max_{(P,R) \in \calM}\E_{\pi' \sim p}\E_{\phi \sim \rho}\left[f^{\phi}(\pi^{\phi}; R) - V_{(P, R)}(\pi')  - \frac{1}{\eta} D_{bi}^{\pi'}(\phi||P, R)\right] 
    \\&= \min_{p \in \Delta(\Pi)}\max_{\nu \in \Delta(\calM)}\E_{(P, R) \sim \nu}\E_{\pi' \sim p}\E_{\phi \sim \rho}\left[f^{\phi}(\pi^{\phi}; R) - V_{(P, R)}(\pi') - \frac{1}{\eta} D_{bi}^{\pi'}(\phi||P, R)\right]
    \\&= \max_{\nu \in \Delta(\calM)}\min_{p \in \Delta(\Pi)}\E_{(P, R) \sim \nu}\E_{\pi' \sim p}\E_{\phi \sim \rho}\left[f^{\phi}(\pi^{\phi}; R) - V_{(P, R)}(\pi') - \frac{1}{\eta} D_{bi}^{\pi'}(\phi||P, R)\right] \tag{\pref{lem:minmax}}
\end{align*}
For any $\nu \in \Delta(\calM)$, we choose $p$ such that $\pi' \sim p$ is equivalent to sampling $\phi' \sim \rho$ and play $\pi^{\phi'}_{\alpha}$ where $\pi^{\phi'}_{\alpha}$ is the randomized policy such that for every step $h$, play $\pi^{\phi'}$ with probability $1-\alpha/H$ and play $\pi_{est}^{\phi'}$ with probability $\alpha/H$. The policy choice for any step $h$ is independent. Thus, with probability $(1-\frac{\alpha}{H})^H \ge 1-\alpha$, $\pi^{\phi'}$ is played for every $H$. Using such $p$, since $V_M(\pi) \in [0,1]$ for any $M,\pi$, we only need to bound
\begin{align}
&\E_{\pi' \sim p}\E_{\phi \sim \rho}\left[f^{\phi}(\pi^{\phi}; R) - V_{(P,R)}(\pi')\right] \nonumber
\\&= \E_{\phi' \sim \rho}\E_{\phi \sim \rho}\left[f^{\phi}(\pi^{\phi}; R) - V_{(P,R)}(\pi^{\phi'}_{\alpha})\right] \nonumber
\\&\le \alpha  + \E_{\phi' \sim \rho}\E_{\phi \sim \rho}\left[f^{\phi}(\pi^{\phi}; R) - V_{(P,R)}(\pi^{\phi'}) \right]. 
\label{eq:model-free full}
\end{align}
For any $P, R$, we have
\begin{align*}
&\E_{\phi' \sim \rho}\E_{\phi \sim \rho}\left[f^{\phi}(\pi^{\phi}; R) - V_{(P,R)}(\pi^{\phi'}) \right]
\\&= \E_{\phi \sim \rho}\left[f^{\phi}(\pi^{\phi}; R) - V_{(P,R)}(\pi^{\phi}) \right]
    \\&\leq \sum_{h=1}^H  \E_{\phi \sim \rho}\left[\left|\langle X_h(\phi; P), W_h(\phi, R; P)\rangle\right|\right]. \tag{Bilinear property \pref{def:adv bilinear}}
\end{align*}
Define $\Sigma_h = \E_{\phi \sim \rho}\left[X_h(\phi; P)X_h(\phi; P)^\top\right]$ and let $\Sigma_h^{\dagger}$ be its pseudo-inverse, we have
\begin{align*}
    &\E_{\phi \sim \rho} \left[
\left|\langle X_h(\phi; P), W_h(\phi, R; P)\rangle\right|
\right] 
\\&= \E_{\phi \sim \rho}\left[
\left|\left\langle \left(\Sigma_h^\dagger\right)^{\frac{1}{2}}X_h(\phi; P), \Sigma_h^{\frac{1}{2}}W_h(\phi, R; P)\right\rangle\right|
\right]
\\&\le \sqrt{\E_{\phi \sim \rho}\left[\left\|\left(\Sigma_h^\dagger\right)^{\frac{1}{2}}X_h(\phi; P)\right\|_2^2\right]} \sqrt{\E_{\phi \sim \rho}\left[\left\|\Sigma_h^{\frac{1}{2}}W_h(\phi, R; P)\right\|_2^2\right]}
\end{align*}
We have
\begin{align*}
&\E_{\phi \sim \rho} \left[\left\|\Sigma_h^{\frac{1}{2}}W_h(\phi, R; P)\right\|_2^2\right] 
\\&= \E_{\phi \sim \rho}\left[\left\langle \Sigma W_h(\phi, R; P), W_h(\phi, R; P) \right\rangle\right]
\\&= \E_{\phi \sim \rho} \left[\left\langle \E_{\phi' \sim \rho}\left[X_h(\phi'; P)X_h(\phi'; P)^\top\right] W_h(\phi, R; P), W_h(\phi, R; P) \right\rangle\right]
\\&= \E_{\phi \sim \rho}\E_{\phi' \sim \rho} \left[\left\langle X_h(\phi'; P) , W_h(\phi, R; P)\right\rangle^2\right]
\end{align*}
Moreover, 
\begin{align*}
    \E_{\phi \sim \rho}\left[\left\|\left(\Sigma_h^\dagger\right)^{\frac{1}{2}}X_h(\phi; P)\right\|_2^2\right] \le d
\end{align*}
Thus, by Cauchy-Schwarz inequality, for any $P, R$, we have
\begin{align}
    &\E_{\phi' \sim \rho}\E_{\phi \sim \rho}\left[f^{\phi}(\pi^{\phi}; R) - V_{(P,R)}(\pi^{\phi'}) \right] \nonumber
    \\&\le \sqrt{ dH\sum_{h=1}^H \E_{\phi \sim \rho}\E_{\phi' \sim \rho} \left[\left\langle X_h(\phi';P) , W_h(\phi, R; P)\right\rangle^2\right]} \nonumber
    \\&= \sqrt{dH \sum_{h=1}^H \E_{\phi \sim \rho}\E_{\phi' \sim \rho}\left[\left( \E^{\pi^{\phi'}\,\circ_h \, \pi^{\phi'}_{est} , \,P}\left[\ellest_h(\phi, o_h; R)\right]\right)^2\right]} 
    \label{eq:bilinear est}
\end{align}
\textbf{(1)} If $\pi^\phi_{est} = \pi^\phi$ for all $\phi$, we can set $\alpha = 0$ and combine \pref{eq:bilinear est} and \pref{eq:model-free full} to conclude that for any $P,R$
\begin{align*}
    &\E_{\pi' \sim p}\E_{\phi \sim \rho}\left[f^{\phi}(\pi^{\phi}; R) - V_{(P,R)}(\pi') - \frac{1}{\eta} D_{bi}^{\pi'}(\phi||P, R)\right] \\
    &\leq \sqrt{dH \sum_{h=1}^H \E_{\phi \sim \rho}\E_{\phi' \sim \rho}\left[\left( \E^{\pi^{\phi'}\,\circ_h \, \pi^{\phi'}_{est} , \,P}\left[\ellest_h(\phi, o_h; R)\right]\right)^2\right]}  - \frac{1}{\eta} \E_{\phi' \sim \rho}\E_{\phi \sim \rho}\left[D_{bi}^{\pi^{\phi'}}(\phi||P, R)\right] \\
    &\leq \sqrt{dH \E_{\phi' \sim \rho}\E_{\phi \sim \rho}\left[D_{bi}^{\pi^{\phi'}}(\phi||P, R)\right]}  - \frac{1}{\eta} \E_{\phi' \sim \rho}\E_{\phi \sim \rho}\left[D_{bi}^{\pi^{\phi'}}(\phi||P, R)\right] \\
    &\leq \frac{\eta dH}{4}. 
\end{align*}
\textbf{(2)} If $\pi^\phi_{est} \neq \pi^\phi$ for some $\phi$, we need additional analysis. For any $h$,  $\pi_{\alpha}^{\phi'}$ select $\pi^{\phi'}$ with probability $1-\frac{\alpha}{H}$ and  select $\pi_{est}^{\phi'}$ with probability $\frac{\alpha}{H}$ independently. We define the event $\mathcal{E}_h$ as choosing $\pi^{\phi'}$ before step $h-1$ and choosing $\pi^{\phi'}_{est}$ at step $h$. We have $\mathbb{P}\left(\mathcal{E}_h\right) = \left(1-\frac{\alpha}{H}\right)^{h-1}\frac{\alpha}{H} \ge  \left(1-\frac{\alpha}{H}\right)^{H-1}\frac{\alpha}{H}$ from the independence. We have for any $h\in[H]$, 
\begin{align*}
    &\E_{\phi' \sim \rho}\E_{\phi \sim \rho}\left[\left(\E^{\pi_{\alpha}^{\phi'}, \,P}\left[\ellest_h(\phi, o_h; R)\right]\right)^2\right]
    \\& \ge \mathbb{P}\left(\mathcal{E}_h\right)\cdot \E_{\phi' \sim \rho}\E_{\phi \sim \rho}\left[\left(\E^{\pi^{\phi'}\,\circ_h \pi_{est}^{\phi'}, \,P}\left[\ellest_h(\phi, o_h; R)\right]\right)^2\right] 
    \\&\ge \left(1-\frac{\alpha}{H}\right)^{H-1}\frac{\alpha}{H}\E_{\phi' \sim \rho}\E_{\phi \sim \rho}\left[\left(\E^{\pi^{\phi'}\,\circ_h \pi_{est}^{\phi'}, \,P}\left[\ellest_h(\phi, o_h; R)\right]\right)^2\right]
    \\&\ge \frac{\alpha}{2H}\E_{\phi' \sim \rho}\E_{\phi \sim \rho}\left[\left( \E^{\pi^{\phi'}\,\circ_h \pi_{est}^{\phi'}, \,P}\left[\ellest_h(\phi, o_h; R)\right]\right)^2\right] \tag{$\alpha \le \frac{1}{2}$}. 
\end{align*}
Therefore, 
\begin{align*}
    \E_{\pi' \sim p}\E_{\phi \sim \rho}\left[D_{bi}^{\pi'}(\phi||P, R)\right] 
    &= \E_{\phi' \sim \rho}\E_{\phi \sim \rho}\left[\sum_{h=1}^H \left(\E^{\pi_{\alpha}^{\phi'}, \,P}\left[\ellest_h(\phi, o_h; R)\right]\right)^2\right] \\
    &\geq \frac{\alpha}{2H}\E_{\phi' \sim \rho}\E_{\phi \sim \rho}\left[\sum_{h=1}^H \left( \E^{\pi^{\phi'}\,\circ_h \pi_{est}^{\phi'}, \,P}\left[\ellest_h(\phi, o_h; R)\right]\right)^2\right]. 
\end{align*}

Combining this equation with \pref{eq:bilinear est} and \pref{eq:model-free full}, we have
\begin{align*}
    &\E_{\pi' \sim p}\E_{\phi \sim \rho}\left[f^{\phi}(\pi^{\phi}; R) - V_{(P,R)}(\pi')- \eta D_{bi}^{\pi'}(\phi||P, R)\right] \\
    &\le \alpha + \sqrt{dH \sum_{h=1}^H \E_{\phi \sim \rho}\E_{\phi' \sim \rho}\left[\left( \E^{\pi^{\phi'}\,\circ_h \, \pi^{\phi'}_{est} , \,P}\left[\ellest_h(\phi, o_h; R)\right]\right)^2\right]} \\
    &\qquad \qquad - \frac{1}{\eta} \cdot \frac{\alpha}{2H}\E_{\phi' \sim \rho}\E_{\phi \sim \rho}\left[\sum_{h=1}^H \left( \E^{\pi^{\phi'}\,\circ_h \pi_{est}^{\phi'}, \,P}\left[\ellest_h(\phi, o_h; R)\right]\right)^2\right].
    \\&\leq  \alpha + \frac{\eta dH^2}{2 \alpha}
\end{align*}
Thus, by setting $\alpha = H\sqrt{\eta d}$, we have
\begin{align*}
     \E_{\pi' \sim p}\E_{\phi \sim \rho_k}\left[f^{\phi}(\pi^{\phi}; R) - V_{(P,R)}(\pi') - \frac{1}{\eta} D_{bi}^{\pi'}(\phi||P, R)\right] \le H\sqrt{\frac{\eta d}{2}}. 
\end{align*}
\end{proof}
We restate \pref{thm:mf-f} here with more detailed dependencies with the language of partition defined in \pref{sec: model-free adversarial}. 

\noindent \textbf{Theorem 10} \ \ Assume $|\ellest_h(\cdot, \cdot, \cdot)| \le L$.    For bilinear class defined in \pref{def:adv bilinear}, \pref{alg:MAF} ensures with probability $1-\delta$: 
\begin{itemize}
    \item If $\pi^\phi_{\text{est}} = \pi^\phi$ for any $\phi$, then $ \Reg \le \order\left( HL\sqrt{d \log\left(T|\Phi|/\delta\right)} T^{\frac{3}{4}}\right)$.
    \item If $\pi^\phi_{\text{est}} \neq \pi^\phi$ for some $\phi$, then $\Reg \le \order\left(HL^{\frac{2}{3}}d^{\frac{1}{3}}\log\left(T|\Phi|/\delta\right)^{\frac{1}{3}}T^{\frac{5}{6}}\right)$.
\end{itemize}

\begin{proof}
We first analyze the regret of optimistic exponential weights, the proof is almost the same as \cite{foster2024model}. 
Choosing comparator as $\phi^\star = \phi_{f^\star, \pi^\star}$ where $f^\star = Q_{P^\star}^{\pi^\star}$. The reward function used in exponential weights is
\begin{align*}
    g_k(\phi) = \eta f^{\phi}(\pi^{\phi}; R_k) - \sum_{h=1}^H  \left(\frac{1}{\tau} \sum_{i \in [\tau]}\ellest_h(\phi, o_{k,h}^i, R_k)\right)^2.
\end{align*}
 Since $f(\pi, R) \leq [0,1]$ for any $f, \pi, R$, and $|\ellest_h(\cdot, \cdot, \cdot)| \le L$, we have $|g_k(\phi)| \le \eta  + HL^2$.  From \pref{lem:EXP}, if  $\gamma \le \frac{1}{4\eta  + 4HL^2}$ we have
\begin{align}
    &\sum_{k=1}^K \E_{\phi \sim \rho_k}\Bigg[\eta f^{\phi^\star}(\pi^{\phi^\star}; R_k) - \eta f^{\phi}(\pi^{\phi}; R_k) \nonumber
    \\& \quad + \sum_{h=1}^H \left(\frac{1}{\tau} \sum_{i \in [\tau]}\ellest_h(\phi, o_{k,h}^i, R_k)\right)^2-  \sum_{h=1}^H \left(\frac{1}{\tau} \sum_{i \in [\tau]}\ellest_h(\phi^\star, o_{k,h}^i, R_k)\right)^2\Bigg]  \nonumber
    \\&\le \frac{\log(|\Phi|)}{\gamma} + \gamma \sum_{k=1}^K \E_{\phi \sim \rho_k}\left[g_k(\phi)^2\right] \nonumber
    \\&\le \frac{\log(|\Phi|)}{\gamma} + \frac{1}{2} \sum_{k=1}^K \sum_{h=1}^H  \E_{\phi \sim \rho_k}\left[\left(\frac{1}{\tau} \sum_{i \in [\tau]}\ellest_h(\phi, o_{k,h}^i, R_k)\right)^2\right] + 2\eta^2 \gamma K.  \tag{$\gamma \le \frac{1}{4HL^2}$} \nonumber
\end{align}
Rearraging the inequality, we have
\begin{align}
&\sum_{k=1}^K \E_{\phi \sim \rho_k}\left[\eta f^{\phi^\star}(\pi^{\phi^\star}; R_k) - \eta f^{\phi}(\pi^{\phi}; R_k) + \frac{1}{2}\sum_{h=1}^H \left(\frac{1}{\tau} \sum_{i \in [\tau]}\ellest_h(\phi, o_{k,h}^i, R_k)\right)^2\right] \nonumber
\\&\le  \sum_{k=1}^K \sum_{h=1}^H \left(\frac{1}{\tau} \sum_{i \in [\tau]}\ellest_h(\phi^\star, o_{k,h}^i, R_k)\right)^2 +  \frac{\log(|\Phi|)}{\gamma} + 2\eta^2 \gamma K 
\label{eq:opt-exp-start}
\end{align}
By Hoeffding's inequality and union bound, with probability at least $1-\delta$, for any $k\in[K]$, $h\in[H]$, and any $\phi \in \Phi$, we have
\begin{align*}
    \left|\frac{1}{\tau} \sum_{i \in [\tau]}\ellest_h(\phi, o_{k,h}^i, R_k) -  \E^{ \pi_k,\, P^\star}\left[\ellest_h(\phi, o_h, R_k)\right]\right| \le L\sqrt{\frac{2\log\left(KH|\Phi|/\delta\right)}{\tau}} = \epsilon_{conc}. 
\end{align*}
Thus, given $D_{bi}^{\pi'}(\phi||P, R) = \sum_{h=1}^H  \left(\E^{\pi', P}\left[\ellest_h(\phi, o_h, R)\right]\right)^2$, we have 
\begin{align}
     \sum_{h=1}^H  \left(\frac{1}{\tau} \sum_{i \in [\tau]}\ellest_h(\phi, o_{k,h}^i, R_k)\right)^2 \le 2 D_{bi}^{\pi_k}(\phi||P^\star, R_k) + 2H\epsilon_{conc}^2.
     \label{eq:conc1}
\end{align}

\begin{align}
     \sum_{h=1}^H \left(\frac{1}{\tau} \sum_{i \in [\tau]}\ellest(\phi, o_{k,h}^i, R_k)\right)^2 \ge \frac{1}{2}D_{bi}^{\pi_k}(\phi||P^\star, R_k) -H\epsilon_{conc}^2. 
     \label{eq:conc2}
\end{align}
From Lemma A.3 in \cite{foster2021statistical}, with probability at least $1-\delta$, for any $\phi \in \Phi$, we have
\begin{align*}
   \frac{1}{2}\sum_{k=1}^K \E_{\pi \sim p_k}\left[D_{bi}^{\pi}(\phi||P^\star, R_k)\right] \le   \sum_{k=1}^K D_{bi}^{\pi_k}(\phi||P^\star, R_k) + 4HL^2\log(|\Phi|/\delta). 
\end{align*}
Combining \pref{eq:opt-exp-start}, \pref{eq:conc1} and \pref{eq:conc2}, and the fact that $D_{bi}^{\pi}(\phi^\star||P^\star, R) = 0$ for any $\pi$ and $R$, we have
\begin{align*}
     &\sum_{k=1}^K \E_{\phi \sim \rho_k}\left[\eta f^{\phi^\star}(\pi^{\phi^\star}; R_k) - \eta f^{\phi}(\pi^{\phi}; R_k) + \frac{1}{8} \E_{\pi \sim p_k}\left[D_{bi}^{\pi}(\phi||P^\star, R_k)\right]\right]  
    \\&\le \frac{\log(|\Phi|)}{\gamma} + \frac{2\gamma K}{\eta^2}  + \frac{6L^2KH\log\left(KH|\Phi|/\delta\right)}{\tau} +  4HL^2\log(|\Phi|/\delta)
\end{align*}
Choose $\gamma = \min\{\eta \sqrt{\frac{\log(|\Phi|)}{K}}, \frac{1}{4\eta + 4HL^2}\}$, we have 
\begin{align}
    &\sum_{k=1}^K \E_{\phi \sim \rho_k}\left[ f^{\phi^\star}(\pi^{\phi^\star}; R_k) - f^{\phi}(\pi^{\phi}; R_k) + \frac{1}{8\eta} \E_{\pi \sim p_k}\left[D_{bi}^{\pi}(\phi||P^\star, R_k)\right]\right]  \nonumber
    \\& \le \order\left(\sqrt{K\log(|\Phi|)} + HL^2\log(|\Phi|) +
 \frac{ L^2KH\log\left(KH|\Phi|/\delta\right)}{\eta\tau} +  \frac{1}{\eta} HL^2\log(|\Phi|/\delta)\right). 
 \label{eq:term1-bound}
\end{align}
Since  $f^{\phi^\star} = Q_{P^\star}^{\pi^\star}$ and $\pi^{\phi^\star} = \pi^\star$, we have $V_{(P^\star, R)}(\pi^\star) = f^{\phi^\star}(\pi^{\phi^\star}; R)$ for any $(P,R,\pi^\star) \in \phi^\star$. Regret can be decomposed to
\begin{align}
&\frac{1}{\tau}\Reg \nonumber
\\&=\sum_{k=1}^K V_{(P^\star, R_k)}(\pi^\star)  - \sum_{k=1}^K \E_{\pi \sim p_k}\left[V_{(P^\star, R_k)}(\pi)\right] \nonumber
\\&= \sum_{k=1}^K f^{\phi^\star}(\pi^{\phi^\star}, R_k)  - \sum_{k=1}^K \E_{\phi \sim \rho_k}\left[f^{\phi}(\pi^{\phi}; R_k)\right]
 + \sum_{k=1}^K \E_{\phi \sim \rho_k}\left[f^{\phi}(\pi^{\phi}; R_k)\right]  -  \sum_{k=1}^K \E_{\pi \sim p_k}\left[V_{(P^\star, R_k)}(\pi)\right] \nonumber
 \\&= \underbrace{\sum_{k=1}^K \E_{\phi \sim \rho_k}\left[ f^{\phi^\star}(\pi^{\phi^\star}; R_k) - f^{\phi}(\pi^{\phi}; R_k)  + \frac{1}{8\eta} \E_{\pi' \sim p_k}\left[D_{bi}^{\pi'}(\phi||P^\star, R_k)\right]\right]}_{\textbf{term1}} \nonumber
\\&\qquad + \underbrace{\sum_{k=1}^K \E_{\pi' \sim p_k}\E_{\phi \sim \rho_k}\left[f^{\phi}(\pi^{\phi}; R_k) - V_{(P^\star, R_k)}(\pi') - \frac{1}{8\eta} D_{bi}^{\pi'}(\phi||P^\star, R_k)\right]}_{\textbf{term2}} \label{eq:reg-dec}
\end{align}

\textbf{term1} is bounded through \pref{eq:term1-bound}. For \textbf{term2}, we have
\begin{align*}
\textbf{term2} &\le \sum_{k=1}^K \max_{(P,R) \in \calM}\E_{\pi' \sim p_k}\E_{\phi \sim \rho_k}\left[f^{\phi}(\pi^{\phi}; R) - V_{(P,R)}(\pi') - \frac{1}{8\eta} D_{bi}^{\pi'}(\phi||P, R)\right]
    \\&= \sum_{k=1}^K \underbrace{\min_{p \in \Delta(\Pi)}\max_{(P,R) \in \calM}\E_{\pi' \sim p}\E_{\phi \sim \rho_k}\left[f^{\phi}(\pi^{\phi}; R) - V_{(P,R)}(\pi')  - \frac{1}{8\eta} D_{bi}^{\pi'}(f, \pi||P, R)\right]}_{\le \odec^{bi}_{8\eta}(\calM, \Phi)}
\end{align*}
From \pref{lem:odec-bound}, we have
\textbf{(1)} If $\pi^{\phi}_{est} = \pi^{\phi}$, $\textbf{term2} \le O\big(\eta dH\big)$. Combining with \pref{eq:term1-bound} and regret decomposition \pref{eq:reg-dec}, we have 
\begin{align*}
    \frac{1}{\tau}\Reg \le \order\left(\eta dHK + \sqrt{K\log(|\Phi|)} + \frac{ L^2KH\log\left(KH|\Phi|/\delta\right)}{\eta\tau} +  \frac{1}{\eta} L^2 H\log(|\Phi|/\delta)\right)
\end{align*}
and thus (recall $T=K\tau$)
\begin{align*}
    \Reg \le \order\left(\eta dHT + \sqrt{T\tau\log(|\Phi|)} + \frac{ L^2TH\log\left(TH|\Phi|/\delta\right)}{\eta\tau} +  \frac{\tau}{\eta} L^2 H\log(|\Phi|/\delta)\right).
\end{align*}
Setting $\eta =  Ld^{-\frac{1}{2}} (\log(|\Phi|/\delta))^{\frac{1}{2}} T^{-\frac{1}{4}}$, $\tau = \sqrt{T}$, then $K = \frac{T}{\tau} = \sqrt{T}$, we have
\begin{align*}
    \Reg \le \tilde{\order}\left( HL\sqrt{d \log\left(|\Phi|/\delta\right)} T^{\frac{3}{4}}\right). 
\end{align*}
\textbf{(2)} If $\pi^{\phi'}_{est} \neq \pi^{\phi'}$, $\textbf{term2} \le \order\big(H\sqrt{\eta d}\big)$. 
Combining with \pref{eq:term1-bound} and regret decomposition \pref{eq:reg-dec}, we have 
\begin{align*}
    \frac{1}{\tau}\Reg \le \order\left(H\sqrt{\eta d}K + \sqrt{K\log(|\Phi|)} +  \frac{ L^2KH\log\left(KH|\Phi|/\delta\right)}{\eta \tau} +  \frac{1}{\eta} L^2H\log(|\Phi|/\delta)\right)
\end{align*}
and thus 
\begin{align*}
   \Reg \le \order\left(H\sqrt{\eta d}T + \sqrt{T\tau\log(|\Phi|)} +  \frac{ L^2TH\log\left(TH|\Phi|/\delta\right)}{\eta \tau} +  \frac{\tau}{\eta} L^2H\log(|\Phi|/\delta)\right).
\end{align*}
Setting $\eta = L^{\frac{4}{3}}d^{-\frac{1}{3}}(\log(|\Phi|/\delta))^{\frac{2}{3}} T^{-\frac{1}{3}}$, $\tau = \sqrt{T}$, then $K = \frac{T}{\tau} = \sqrt{T}$, we have
\begin{align*}
    \Reg \le \order\left(H L^{\frac{2}{3}}\left(d\log\left(|\Phi|/\delta\right)\right)^{\frac{1}{3}}T^{\frac{5}{6}}\right). 
\end{align*}

\end{proof}

\section{Support Lemmas}

\begin{lemma}[Sion’s minimax theorem \citep{sion1958general}]
Let $\mathcal{X}$ and $\mathcal{Y}$ be convex and compact sets, and a function $F: \mathcal{X} \times \mathcal{Y} \rightarrow \mathbb{R}$ such that (1) For all $y \in \mathcal{Y}$, $F(\cdot, y)$ is convex and continuous over $\mathcal{X}$ and (2) For all $x \in \mathcal{X}$, $F(x, \cdot)$ is concave and continuous over $\mathcal{X}$. Then 
\begin{align*}
    \min_{x \in \mathcal{X}}\max_{y \in  \mathcal{Y}} F(x,y) =  \max_{y \in  \mathcal{Y}}\min_{x \in \mathcal{X}} F(x,y) 
\end{align*}
\label{lem:minmax}
\end{lemma}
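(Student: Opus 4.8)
The plan is to establish the two inequalities separately. The easy direction, weak duality, is immediate: for every fixed $x \in \mathcal{X}$ and $y \in \mathcal{Y}$ we have $\min_{x' \in \mathcal{X}} F(x', y) \le F(x,y) \le \max_{y' \in \mathcal{Y}} F(x, y')$, so taking the maximum over $y$ on the left and then the minimum over $x$ on the right yields $\max_{y} \min_{x} F(x,y) \le \min_{x} \max_{y} F(x,y)$. Compactness together with the assumed continuity guarantees that all of these extrema are attained, so the expressions are well defined. All of the real work is in the reverse inequality $\min_x \max_y F(x,y) \le \max_y \min_x F(x,y)$.

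For the reverse direction I would argue by contradiction: fix any $c$ with $\max_y \min_x F(x,y) < c < \min_x \max_y F(x,y)$ and derive a contradiction. Since $\min_x \max_y F(x,y) > c$, for every $x$ there is some $y$ with $F(x,y) > c$; by continuity each set $U_y := \{x : F(x,y) > c\}$ is open, and these sets cover the compact set $\mathcal{X}$. Extracting a finite subcover gives points $y_1, \dots, y_n \in \mathcal{Y}$ with $\min_x \max_{1 \le i \le n} F(x, y_i) > c$. On the other hand, $\max_y \min_x F(x,y) < c$ means $\min_x F(x, y) < c$ for every single $y \in \mathcal{Y}$. The goal is therefore to collapse the finite family $\{y_1,\dots,y_n\}$ into one convex combination $\bar y \in \mathcal{Y}$ for which $\min_x F(x, \bar y) \ge c$, directly contradicting the previous sentence.

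The crux --- and the step I expect to be the main obstacle --- is exactly this collapsing, which is where convexity of $F(\cdot, y)$ and concavity of $F(x, \cdot)$ are genuinely used. The engine is a two-point lemma: if $y_1, y_2 \in \mathcal{Y}$ satisfy $\min_x \max\{F(x,y_1), F(x,y_2)\} > c$, then some $\bar y$ on the segment $[y_1, y_2]$ satisfies $\min_x F(x, \bar y) \ge c$. I would prove it by parametrizing the segment as $y_t = (1-t) y_1 + t y_2$ and studying the sublevel sets $A_t := \{x : F(x, y_t) \le c\}$, which are closed and convex by continuity and convexity in $x$. The hypothesis forces $A_0 \cap A_1 = \emptyset$, while concavity in $y$ gives the inclusion $A_t \subseteq A_0 \cup A_1$ for every $t$; since $A_0, A_1$ are disjoint closed sets and each $A_t$ is connected, every nonempty $A_t$ lies entirely in $A_0$ or entirely in $A_1$, and a connectedness argument on the induced partition of $[0,1]$ shows some $A_t$ must be empty, which is the desired $\bar y$. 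The delicate point here is controlling the boundary case $F(x, y_t) = c$, which I would handle by working with strict inequalities and a slightly perturbed threshold. Feeding this two-point lemma into an induction on $n$ --- repeatedly merging two of the $y_i$ into a single point while preserving $\min_x \max_i F(x, y_i) > c$ --- reduces the finite family to a single $\bar y$, completing the contradiction and hence the proof.
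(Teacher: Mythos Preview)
Your sketch follows the standard elementary proof of Sion's theorem (essentially the argument of Komiya, 1988): weak duality is trivial, and for the hard direction one uses compactness to reduce to finitely many $y_i$, then a two-point connectedness lemma on the convex sublevel sets $A_t = \{x : F(x,y_t)\le c\}$ to merge them inductively into a single $\bar y$. The inclusions $A_t \subseteq A_0\cup A_1$ (from concavity in $y$) and $A_0\cap A_1 = \emptyset$ (from the hypothesis) are correctly identified, and the connectedness of each convex $A_t$ forces it to sit entirely in $A_0$ or $A_1$; the remaining work is to show the induced partition of $[0,1]$ cannot avoid an empty $A_t$, which is indeed the place where one must be careful with closures and the boundary case $F(x,y_t)=c$.

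The paper, however, does not prove this lemma at all: it is listed as a support lemma with a citation to \cite{sion1958general} and is invoked as a black box throughout. So there is nothing to compare against---you have supplied a correct proof outline where the paper simply appeals to the literature.
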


\begin{lemma}[Performance Difference Lemma]
For any finite-horizon MDPs $M$ with state space $\mathcal{S}$, action space $\calA$, reward function $R$, transition $P$, and horizon length $H$, we have
\begin{align*}
    V_{(P, R)}(\pi') - V_{(P,R)}(\pi) = \sum_{h=1}^H \E_{s \sim d_{P,h}^\pi} \left[\sum_{a \in \calA} \left(\pi_h'(a|x) - \pi_h(a|x)\right) Q^{\pi'}_h(x,a; R)\right].
\end{align*}   
\label{lem:PDL}
\end{lemma}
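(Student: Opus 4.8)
The plan is to prove the identity by the standard telescoping (value-difference) argument, pushing all of the state randomness onto the occupancy measure $d_{P,h}^\pi$ of the \emph{base} policy $\pi$ while retaining the $Q$-function of the \emph{comparator} policy $\pi'$. Throughout I would fix the MDP $M=(P,R)$ and write $V_h^{\pi'}(s)$, $Q_h^{\pi'}(s,a)$ for the step-$h$ value and action-value of $\pi'$ under $(P,R)$, suppressing the dependence on $R$; recall the two consistency relations $Q_h^{\pi'}(s,a)=R(s,a)+\E_{s'\sim P(\cdot|s,a)}[V_{h+1}^{\pi'}(s')]$ and $V_h^{\pi'}(s)=\sum_{a}\pi'_h(a|s)\,Q_h^{\pi'}(s,a)$, with the convention $V_{H+1}^{\pi'}\equiv 0$.

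First I would rewrite $V_{(P,R)}(\pi')=V_1^{\pi'}(s_1)$ as a telescoping sum evaluated along a trajectory generated by the base policy $\pi$. Since $\sum_{h=1}^H\big(V_h^{\pi'}(s_h)-V_{h+1}^{\pi'}(s_{h+1})\big)$ collapses to $V_1^{\pi'}(s_1)$ almost surely (using $V_{H+1}^{\pi'}\equiv0$), taking $\E^{\pi,P}$ gives $V_{(P,R)}(\pi')=\E^{\pi,P}\big[\sum_{h=1}^H (V_h^{\pi'}(s_h)-V_{h+1}^{\pi'}(s_{h+1}))\big]$. Subtracting this from $V_{(P,R)}(\pi)=\E^{\pi,P}[\sum_{h=1}^H R(s_h,a_h)]$ yields
\[
V_{(P,R)}(\pi)-V_{(P,R)}(\pi')=\E^{\pi,P}\Big[\sum_{h=1}^H\big(R(s_h,a_h)+V_{h+1}^{\pi'}(s_{h+1})-V_h^{\pi'}(s_h)\big)\Big].
\]

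Next I would apply the tower property step by step: conditioning on $(s_h,a_h)$, the transition $s_{h+1}\sim P(\cdot|s_h,a_h)$ gives $\E^{\pi,P}[R(s_h,a_h)+V_{h+1}^{\pi'}(s_{h+1})\mid s_h,a_h]=Q_h^{\pi'}(s_h,a_h)$ by the Bellman equation above, so each summand becomes $Q_h^{\pi'}(s_h,a_h)-V_h^{\pi'}(s_h)$. Writing the expectation in terms of the occupancy measure, $\E^{\pi,P}[\,\cdot\,]=\sum_h \E_{s\sim d_{P,h}^\pi}\E_{a\sim \pi_h(\cdot|s)}[\,\cdot\,]$, and substituting $V_h^{\pi'}(s)=\sum_a \pi'_h(a|s)Q_h^{\pi'}(s,a)$, each inner term reduces to $\sum_a(\pi_h(a|s)-\pi'_h(a|s))Q_h^{\pi'}(s,a)$. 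Negating both sides then produces exactly the claimed identity.

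The argument is entirely routine; the only points that require care are (i) the direction of the telescoping—the occupancy must be that of $\pi$ while the $Q$-function is that of $\pi'$, so one telescopes $V^{\pi'}$ along the $\pi$-trajectory and not the reverse—and (ii) correctly invoking the Bellman consistency under $(P,R)$ so that $R(s_h,a_h)+\E[V_{h+1}^{\pi'}(s_{h+1})]$ collapses into $Q_h^{\pi'}$. I do not expect any genuine obstacle beyond keeping the two policies straight in the bookkeeping.
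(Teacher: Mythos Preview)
Your proof is correct and is the standard telescoping argument for the Performance Difference Lemma. The paper itself does not supply a proof of this lemma; it is listed among the ``Support Lemmas'' and treated as a known result, so there is nothing to compare against.
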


\begin{lemma}[Exponential Weights]
Given a sequence of reward functions $\{g^t\}_{t=1}^T$ over a decision set $\Pi$,  $\{p^t\}_{t=1}^T$ is a distribution sequence with $p^t \in \Delta\left(\Pi\right), \,\forall t \in [T]$ such that 
\begin{align*}
    p^{t+1}(\pi) \propto \exp\left(\eta \sum_{i=1}^t g^i(\pi)\right).
\end{align*}
If $p^1$ is a uniform distribution over $|\Pi|$ and $\eta g^{t}(\pi) \leq 1$ for all $t \in [T]$ and $\pi \in \Pi$. Then
\begin{align*}
    \max_{p \in \Delta(\Pi)}\left\{\sum_{t=1}^T \left\langle g^t,  p - p^t\right\rangle \right\} \le \frac{\log(|\Pi|)}{\eta} + \eta\sum_{t=1}^T\sum_{\pi \in \Pi} p^t(\pi) g^t(\pi)^2.
\end{align*}
\label{lem:EXP}
\end{lemma}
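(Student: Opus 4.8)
The plan is to prove this standard variance-dependent (second-order) regret bound for exponential weights by a potential-function and telescoping argument. First I would introduce the unnormalized weights $W_t(\pi) = \exp\left(\eta \sum_{i=1}^t g^i(\pi)\right)$ with $W_0(\pi) = 1$, together with the normalizing constants $Z_t = \sum_{\pi \in \Pi} W_t(\pi)$, so that $Z_0 = |\Pi|$ and $p^{t+1}(\pi) = W_t(\pi)/Z_t$; the hypothesis that $p^1$ is uniform is consistent with $p^1(\pi) = 1/|\Pi| = W_0(\pi)/Z_0$. The key observation is that the one-step ratio of potentials is exactly an expectation under $p^t$:
\[
\frac{Z_t}{Z_{t-1}} = \sum_{\pi} \frac{W_{t-1}(\pi)}{Z_{t-1}}\, e^{\eta g^t(\pi)} = \E_{\pi \sim p^t}\!\left[e^{\eta g^t(\pi)}\right].
\]

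Next I would bound this ratio from above using the elementary inequality $e^x \le 1 + x + x^2$, which holds for all $x \le 1$ and applies here precisely because of the assumption $\eta g^t(\pi) \le 1$. This yields $Z_t/Z_{t-1} \le 1 + \eta \inner{g^t, p^t} + \eta^2 \sum_{\pi} p^t(\pi) g^t(\pi)^2$, and taking logarithms together with $\log(1+y) \le y$ gives the per-round estimate $\log(Z_t/Z_{t-1}) \le \eta \inner{g^t, p^t} + \eta^2 \sum_{\pi} p^t(\pi) g^t(\pi)^2$. Summing over $t = 1, \dots, T$ telescopes the left-hand side to $\log(Z_T/Z_0) = \log Z_T - \log|\Pi|$.

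For the matching lower bound on the potential, for any fixed comparator $\pi^\star$ I would use $Z_T \ge W_T(\pi^\star) = \exp\left(\eta \sum_{t} g^t(\pi^\star)\right)$, so $\log Z_T \ge \eta \sum_{t} g^t(\pi^\star)$. Combining the two bounds and dividing through by $\eta$ produces $\sum_{t} g^t(\pi^\star) - \sum_{t} \inner{g^t, p^t} \le \frac{\log|\Pi|}{\eta} + \eta \sum_{t} \sum_{\pi} p^t(\pi) g^t(\pi)^2$. Since the comparator enters only through the linear functional $p \mapsto \sum_t \inner{g^t, p}$, whose maximum over the simplex $\Delta(\Pi)$ is attained at a vertex, taking the maximum over $\pi^\star$ (equivalently over $p \in \Delta(\Pi)$) on the left-hand side preserves the inequality—because the right-hand side does not depend on $\pi^\star$—and yields exactly the claimed bound.

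There is no serious obstacle here, as this is a textbook Hedge analysis; the only point requiring care is the validity regime of the quadratic exponential bound. I would double-check that $e^x \le 1 + x + x^2$ indeed holds for every $x \le 1$ (and note that it is the one-sided hypothesis $\eta g^t(\pi) \le 1$, rather than a two-sided bound on the rewards, that is invoked), and keep careful track of the constant $\eta^2$ in the variance term so that, after dividing by $\eta$, it becomes the stated coefficient $\eta$ on $\sum_{t} \sum_{\pi} p^t(\pi) g^t(\pi)^2$.
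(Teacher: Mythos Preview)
Your proof is correct and follows the standard potential-function argument for the second-order exponential-weights bound. The paper itself does not prove this lemma; it is listed in the ``Support Lemmas'' appendix as a standard result without proof, so there is no paper proof to compare against. Your use of $e^x \le 1 + x + x^2$ for $x \le 1$ is the right ingredient matching the one-sided hypothesis $\eta g^t(\pi) \le 1$, and the telescoping/comparator steps are handled cleanly.
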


Here we provide an extension of Lemma 5.2 in \cite{xu2023bayesian}. The proof follows \cite{xu2023bayesian} with adaptation to general $\Phi$.

\begin{lemma}[Lemma 5.2 in \cite{xu2023bayesian} for general $\Phi$]
Given $\rho \in \Delta(\Phi), \eta$, $p \in \Delta(\Pi)$, and let $\nu \in \Delta(\Psi)$ be a maximizer of
\begin{align*}
    &\nu \in \argmax_{\nu \in \Delta(\Psi)}\E_{\pi\sim p} \E_{(M,\pi^\star)\sim \nu} \E_{o\sim M(\cdot|\pi)}\left[ V_{M}(\pi^\star) -  V_{M}(\pi) - \frac{1}{\eta} \KL\left( \nu_{\bo{\phi}}(\cdot|\pi, o), \rho\right) \right]\,,
\end{align*}
then we have
\begin{align*}
    &\max_{\mu \in \Delta(\Psi)}\E_{\pi\sim p}\E_{\phi^\star \sim \mu}\E_{(M, \pi^\star) \sim \mu(\cdot|\phi^\star)} \E_{o\sim M(\cdot|\pi)}\left[ V_{M}(\pi^\star) -  V_{M}(\pi) - \frac{1}{\eta} \log\frac{\nu(\phi^\star|\pi, o)}{\rho(\phi^\star)} \right]\\
       &= \E_{\pi\sim p} \E_{(M,\pi^\star)\sim \nu} \E_{o\sim M(\cdot|\pi)}\left[ V_{M}(\pi^\star) -  V_{M}(\pi) - \frac{1}{\eta} \KL\left( \nu_{\bo{\phi}}(\cdot|\pi, o), \rho\right) \right]
\end{align*}
If the environment reveals $\pi^{\Theta} =(\pi^\theta)_{\theta\in\Theta}$, we also have
\begin{align*}
    &\max_{\mu \in \Delta(\calM)}\E_{\pi \sim p}\E_{\theta^\star \sim \mu}\E_{M\sim \mu(\cdot|\theta^\star)} \E_{o\sim M(\cdot|\pi)}\left[ V_{M}(\pi^{\theta^\star}) -  V_{M}(\pi) - \frac{1}{\eta} \log\frac{\nu(\theta^\star|\pi, o)}{\rho_t(\theta^\star)} \right]
    \\&= \E_{\pi\sim p} \E_{\theta^\star \sim \nu} \E_{M \sim \nu(\cdot|\theta^\star)} \E_{o\sim M(\cdot|\pi)}\left[ V_{M}(\pi^{\theta^\star}) -  V_{M}(\pi) -  \frac{1}{\eta} \KL\left( \nu_{\bo{\theta^\star}}(\cdot|\pi, o), \rho_t\right)\right]\,,
\end{align*}
as long as $\nu$ is a maximizer of the RHS.
\label{lem: max posterior}
\end{lemma}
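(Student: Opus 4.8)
The plan is to rewrite the statement as a single assertion about a \emph{frozen-posterior} functional. Abbreviating $\Psi=\joint$ and writing $\E_\mu$ for $\E_{\pi\sim p}\E_{(M,\pi^\star)\sim\mu}\E_{o\sim M(\cdot|\pi)}$, define for $\mu\in\Delta(\Psi)$ the objective
\begin{align*}
    J(\mu) \;=\; \E_\mu\Big[\,V_M(\pi^\star)-V_M(\pi)-\tfrac1\eta\log\tfrac{\nu(\phi|\pi,o)}{\rho(\phi)}\,\Big],
\end{align*}
where $\phi$ is the cell of $(M,\pi^\star)$ and the posterior $\nu(\cdot|\pi,o)$ is taken from the \emph{fixed} maximizer $\nu$. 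The first claim of the lemma is exactly $\max_{\mu}J(\mu)=\air^\Phi_{\rho,\eta}(p,\nu)$. The key structural fact I would record at the outset is that $J$ is \emph{affine} in $\mu$: once $\nu$ is frozen, the bracketed integrand no longer depends on $\mu$, while both the draw $(M,\pi^\star)\sim\mu$ and the observation law $o\sim M(\cdot|\pi)$ are linear in $\mu$.

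For the lower bound I would evaluate $J$ at $\mu=\nu$. Grouping the sum over $(M,\pi^\star)$ by partition cell and using the definition of the posterior $\nu(\phi|\pi,o)=\tfrac{\sum_{(M,\pi^\star)\in\phi}\nu(M,\pi^\star)M(o|\pi)}{\nu(o|\pi)}$, the frozen $\log$ term averages to the AIR information term,
\begin{align*}
    \E_{(M,\pi^\star)\sim\nu}\E_{o\sim M(\cdot|\pi)}\Big[\log\tfrac{\nu(\phi|\pi,o)}{\rho(\phi)}\Big]
    =\E_{o\sim\nu(\cdot|\pi)}\big[\KL(\nu_{\bo{\phi}}(\cdot|\pi,o),\rho)\big],
\end{align*}
so that $J(\nu)=\air^\Phi_{\rho,\eta}(p,\nu)$; hence $\max_\mu J(\mu)\ge J(\nu)$ gives the lower bound directly, without using optimality of $\nu$.

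The upper bound is where optimality of $\nu$ enters. I would first prove the three-point identity, valid for every $\mu'$, by the same grouping trick applied to $\air^\Phi$ and then adding and subtracting $\log\nu(\phi|\pi,o)$:
\begin{align*}
    \air^\Phi_{\rho,\eta}(p,\mu') = J(\mu')-\tfrac1\eta\,\E_{\pi\sim p}\E_{o\sim\mu'(\cdot|\pi)}\big[\KL(\mu'_{\bo{\phi}}(\cdot|\pi,o),\nu_{\bo{\phi}}(\cdot|\pi,o))\big].
\end{align*}
Now fix any $\mu$, set $\nu_t=(1-t)\nu+t\mu\in\Delta(\Psi)$, and apply this identity at $\mu'=\nu_t$. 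Using affineness of $J$ (so $J(\nu_t)=(1-t)\air^\Phi_{\rho,\eta}(p,\nu)+tJ(\mu)$) together with the optimality inequality $\air^\Phi_{\rho,\eta}(p,\nu_t)\le\air^\Phi_{\rho,\eta}(p,\nu)$, one rearranges to
\begin{align*}
    J(\mu)-\tfrac1{\eta t}\,\E_{\pi}\E_{o\sim\nu_t}\big[\KL(\nu_{t,\bo{\phi}}(\cdot|\pi,o),\nu_{\bo{\phi}}(\cdot|\pi,o))\big]\;\le\;\air^\Phi_{\rho,\eta}(p,\nu).
\end{align*}
Since the posterior $\nu_{t,\bo{\phi}}(\cdot|\pi,o)$ depends smoothly on $t$ and coincides with $\nu_{\bo{\phi}}(\cdot|\pi,o)$ at $t=0$, the KL term is $O(t^2)$; dividing by $t$ and letting $t\to0^+$ yields $J(\mu)\le\air^\Phi_{\rho,\eta}(p,\nu)$ for every $\mu$, and taking the max finishes the upper bound. (This step is precisely the directional derivative of the concave map $\nu'\mapsto\air^\Phi_{\rho,\eta}(p,\nu')$ along $\nu\to\mu$ being nonpositive at the maximizer.)

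The informed-comparator identity is proved verbatim: replace the cell variable $\phi$ by $\theta\in\Theta$, the fixed comparator $\pi^\star$ by the revealed $\pi^{\theta^\star}$, let $\mu$ range over $\Delta(\calM)$, and run the affine-in-$\mu$ plus three-point-identity argument with $\infoair^\Theta$ in place of $\air^\Phi$. The main obstacle I anticipate is not the algebra but the finiteness/support bookkeeping: the frozen term $\log\nu(\phi|\pi,o)$ diverges to $+\infty$ if $\mu$ places mass on a cell–observation pair to which $\nu$'s posterior assigns probability zero, and correspondingly the KL term in the three-point identity becomes $+\infty$ and the $O(t^2)$ expansion breaks down. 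I would handle this exactly as in \cite{xu2023bayesian}, by noting that the $\KL(\cdot,\rho)$ regularization against a full-support reference $\rho$ forces the maximizer's posterior $\nu_{\bo{\phi}}(\cdot|\pi,o)$ to remain positive on the reachable cells, so that $J$ is finite on the relevant domain and the limiting argument is legitimate; verifying this regularity (and the smooth $t$-dependence of the posterior) is the technical heart of the proof.
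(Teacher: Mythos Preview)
Your argument is correct and takes a genuinely different route from the paper. The paper casts the problem as a saddle point: it introduces the space $\calZ$ of all posterior maps $Z:\Pi\times\calO\to\Delta(\Phi)$ and the linear-in-$\mu$, convex-in-$Z$ functional
\[
B(\mu,Z)=\E_{\phi^\star\sim\mu}\E_{(M,\pi^\star)\sim\mu(\cdot|\phi^\star)}\E_{\pi\sim p}\E_{o\sim M(\cdot|\pi)}\Big[V_M(\pi^\star)-V_M(\pi)-\tfrac1\eta\log\tfrac{Z(\phi^\star|\pi,o)}{\rho(\phi^\star)}\Big],
\]
so your frozen functional $J$ is exactly $B(\cdot,Z_\nu)$. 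It then observes that $\argmin_Z B(\mu,Z)=Z_\mu$ is the Bayesian posterior $\mu_{\bo{\phi}}(\cdot|\pi,o)$, and invokes a constructive minimax lemma (Lemma~5.1 of \cite{xu2023bayesian}) to conclude that $(\nu,Z_\nu)$ is a Nash equilibrium of $B$, whence $\max_\mu B(\mu,Z_\nu)=B(\nu,Z_\nu)=\air^\Phi_{\rho,\eta}(p,\nu)$ in a single stroke. Your proof instead rederives this saddle-point property from first principles: your three-point identity is precisely $B(\mu',Z_\nu)-B(\mu',Z_{\mu'})=\tfrac1\eta\E_{\pi}\E_{o\sim\mu'}[\KL(\mu'_{\bo{\phi}}(\cdot|\pi,o),\nu_{\bo{\phi}}(\cdot|\pi,o))]$, and your $t\to0$ limit is the first-order optimality condition for the concave map $\mu'\mapsto\min_Z B(\mu',Z)$ at its maximizer $\nu$. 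The paper's route is shorter and pushes the support/regularity bookkeeping you flag into the cited black-box lemma; your route is self-contained and makes explicit why concavity of $\air^\Phi$ in $\nu$ combined with affineness of $J$ forces the equality, at the cost of the $O(t^2)$ KL expansion---which is legitimate here since $\Phi$ is finite and the posterior $\nu_{t,\bo{\phi}}(\cdot|\pi,o)$ is a rational function of $t$.
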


\begin{proof}
Let $\mathcal{Z}$ be the space of all mappings from $\Pi \times \mathcal{O}$ to $\Delta(\Phi)$. For every mapping $Z \in \mathcal{Z}$, denote $Z(\cdot|\pi, o) \in \Delta(\Phi)
$ as the image of $(\pi, o)$. For a fixed $p$, define $B: \Delta(\Psi) \times \mathcal{Z} \rightarrow \mathbb{R}$ by
\begin{align*}
    B(\mu, Z) = \E_{\phi^\star \sim \mu}\E_{(M, \pi^\star) \sim \mu(\cdot|\phi^\star)}\E_{\pi \sim p}\E_{o \sim M(\cdot|\pi)}\left[V_M(\pi^\star) - V_M(\pi) - \frac{1}{\eta}\log\frac{Z(\phi^\star|\pi, o)}{q(\phi^\star)}\right]
\end{align*}
For any $\mu$, $B(\mu, Z)$ is strongly convex with respect to $Z$, For any $\mu \in \Delta(\Psi)$, denote $Z_\mu = \argmin_{Z \in \mathcal{Z}}  B(\mu, Z)$ as the unique minimizer of $B(\mu, z)$ given $\mu$. Solve the first order optimality condition for the Lagrangian of $B(\mu, Z)$ constrained on $Z$ as a valid distribution, we have $Z_{\mu}(\phi|\pi, o) = \mu(\phi|\pi, o)$ for every $\phi, \pi, o$ as the posterior of $\mu$. Thus, we have
\begin{align*} 
    &\min_{Z \in \mathcal{Z}} B(\mu, Z)
    \\&=  B(\mu, Z_{\mu})
    \\&= \E_{\phi^\star \sim \mu}\E_{(M, \pi^\star) \sim \mu(\cdot|\phi^\star)}\E_{\pi \sim p}\E_{o \sim M(\cdot|\pi)}\left[V_M(\pi^\star) - V_M(\pi) - \frac{1}{\eta}\log\frac{\mu(\phi^\star|\pi, o)}{q(\phi^\star)}\right]
\end{align*}
Define the worst-case prior belief 
\begin{align*}
    \nu \in \argmax_{\mu} B(\mu, Z_{\mu}) = \argmax_{\mu} \min_{Z \in \mathcal{Z}} B(\mu, Z).
\end{align*}
From a constructive minimax theorem (Lemma 5.1 in \cite{xu2023bayesian}), $(\nu, Z_{\nu})$ is a Nash equilibrium of function $B$, and $Z_{\nu}$ is a construction solution to the minimax optimization problem $\min_{Z}\max_{\mu}B(\mu, Z)$. Thus,
\begin{align*}
&\E_{\pi\sim p} \E_{(M,\pi^\star)\sim \nu} \E_{o\sim M(\cdot|\pi)}\left[ V_{M}(\pi^\star) -  V_{M}(\pi) - \frac{1}{\eta} \KL\left( \nu_{\bo{\phi}}(\cdot|\pi, o), \rho\right) \right]
    \\&= \E_{\phi^\star \sim \nu}\E_{(M, \pi^\star) \sim \nu(\cdot|\phi^\star)}\E_{\pi \sim p}\E_{o \sim M(\cdot|\pi)}\left[V_M(\pi^\star) - V_M(\pi) - \frac{1}{\eta}\log\frac{\nu(\phi^\star|\pi, o)}{q(\phi^\star)}\right]
    \\&= B(\nu, Z_{\nu})
    \\&= \max_{\mu} B(\mu, Z_{\nu})  \tag{Lemma 5.1 in \cite{xu2023bayesian}}
    \\&= \max_{\mu \in \Delta(\Psi)}\E_{\pi\sim p}\E_{\phi^\star \sim \mu}\E_{(M, \pi^\star) \sim \mu(\cdot|\phi^\star)} \E_{o\sim M(\cdot|\pi)}\left[ V_{M}(\pi^\star) -  V_{M}(\pi) - \frac{1}{\eta} \log\frac{\nu(\phi^\star|\pi, o)}{\rho(\phi^\star)} \right].
\end{align*}
The proof for the revealed $\pi^{\Theta}=(\pi^\theta)_{\theta\in\Theta}$ is similar.
\end{proof}

\end{document}